\title{
    Provably Mitigating Overoptimization in RLHF: Your SFT Loss is Implicitly an Adversarial Regularizer
}
\author{
    Zhihan Liu\thanks{Equal contributions.} \thanks{Northwestern University.\texttt{\{zhihanliu2027,shenaozhang2028,hongyiguo2025\}@u.northwestern.edu,zhaoranwang@gmail.com}} \qquad
    Miao Lu\footnotemark[1] \thanks{Stanford University. \texttt{\{miaolu, jose.blanchet\}@stanford.edu}} \qquad
    Shenao Zhang\footnotemark[2]\qquad 
    Boyi Liu\thanks{ByteDance Inc. \texttt{\{boyi.liu01, yingxiang.yang\}@bytedance.com}} \\ 
    Hongyi Guo\footnotemark[2]\qquad
    Yingxiang Yang\footnotemark[4]\qquad
    Jose Blanchet\footnotemark[3]\qquad 
    Zhaoran Wang\footnotemark[2]
}
\date{\today}
\date{\small{\today}}
\begin{document}


\maketitle

\begin{abstract}
   Aligning generative models with human preference via RLHF typically suffers from overoptimization, where an imperfectly learned reward model can misguide the generative model to output undesired responses. 
   We investigate this problem in a principled manner by identifying the source of the misalignment as a form of distributional shift and uncertainty in learning human preferences. 
   To mitigate overoptimization, we first propose a theoretical algorithm that chooses the best policy for an adversarially chosen reward model; one that simultaneously minimizes the maximum likelihood estimation of the loss and a reward penalty term. 
   Here, the reward penalty term is introduced to prevent the policy from choosing actions with spurious high proxy rewards, resulting in provable sample efficiency of the algorithm under a \emph{partial coverage} style condition.
  Moving from theory to practice, the proposed algorithm further enjoys an equivalent but surprisingly easy-to-implement reformulation. Using the equivalence between reward models and the corresponding optimal policy, the algorithm features a simple objective that combines: (i) a preference optimization loss that directly aligns the policy with human preference, and (ii) a supervised learning loss that explicitly imitates the policy with a (suitable) baseline distribution. 
  In the context of aligning large language models (LLM), this objective fuses the direct preference optimization (DPO) loss with the supervised fine-tuning (SFT) loss to help mitigate the overoptimization towards undesired responses, for which we name the algorithm \underline{R}egularized \underline{P}reference \underline{O}ptimization (RPO).
  Experiments of aligning LLMs demonstrate the improved performance of RPO compared with DPO baselines. 
  Our work sheds light on the interplay between preference optimization and SFT in tuning LLMs with both theoretical guarantees and empirical evidence.
\end{abstract}

\noindent
\textbf{Keywords:} Alignment, reinforcement learning from human feedback, large language model, overoptimization, supervised fine-tuning

\tableofcontents

\section{Introduction}

A key step in building state-of-the-art LLMs is Reinforcement Learning from Human Feedback (RLHF) \citep{christiano2017deep, ziegler2019fine}, which aligns pretrained LLMs with human preferences using human assessment data, making the model more helpful, truthful, and harmless \citep{ouyang2022training, casper2023open}.
Without doing RLHF, pretrained LLMs could exhibit harmful behaviors including offensive or toxic outputs, social biases, and leaking sensitive information from training data, etc \citep{ gehman2020realtoxicityprompts, carlini2021extracting, ganguli2022red}.
Typically, RLHF first learns a reward model from data (pair-wise comparisons of responses) to quantify the human preferences of LLM outputs.
Then it fine-tunes the LLM to maximize the learned reward using RL techniques.

In this pipeline, a crucial challenge is \emph{reward overoptimization} or \emph{reward hacking} \citep{michaud2020understanding, tien2022causal, gao2023scaling}.
Since the reward model is learned from finite data, this reward might not be perfectly aligned with the underlying human preference. 
Optimizing the LLM towards such an imperfectly learned and potentially overfitted reward model leads to performance degeneration and a substantial decrease in the probability of choosing the preferred responses in the data \citep{hong2024orpo, rafailov2024r}.
Given the importance of RLHF and the outlined challenge, a crucial research question is:
\begin{center}
\emph{
How to mitigate reward overoptimization in RLHF in a principled \\
and efficient manner for better alignment?}
\end{center}

To answer the question, we model RLHF as an offline contextual bandit \citep{ouyang2022training} and ascribe overoptimzation to distributional shifts and reward uncertainty.
Intuitively, when fine-tuning an LLM, the response (action) distribution of the tuned LLM could deviate from that of the training data.
For the out-of-distribution responses, which are dissimilar with (or not well covered by) the responses in the data, the high inherent uncertainty of underlying human preferences could make the learned reward model misleading for out-of-distribution responses. 
In this situation, reward overoptimization can occur because the LLM is fine-tuned towards maximizing a reward model with defective out-of-distribution prediction, giving a potential consequence that the LLM responses are favored by the learned reward but
less preferred by a human \citep{zhu2024iterative}. 
We illustrate such an issue inherent to overoptimization in Figure~\ref{fig: 1}.

In this work, we propose a new RLHF algorithm to mitigate reward overoptimization.
From a high level, our theoretical algorithm seeks the best LLM for an \emph{adversarially} chosen reward model that minimizes the sum of its maximum likelihood estimation (MLE) loss and its own expected reward value.
Intuitively, since the reward value is also minimized when minimizing the sum, it can automatically prevent the misleadingly high reward caused by the uncertainty inherent in learning from finite preference data.
Furthermore, we show that the theoretical algorithm enjoys an easy implementation: it simply adopts a supervised fine-tuning (SFT) loss as a regularizer during training. By explicitly regularizing the LLM to imitate high-quality responses (e.g., the preferred responses in dataset), the algorithm can effectively mitigate the issue of overoptimization. We provide both theory and experiments to demonstrate our findings, which we summarize next.

\begin{figure}[!t]
    \begin{minipage}{0.31\linewidth}
            \centering
            \hspace{-1mm}\includegraphics[height=0.80\textwidth]{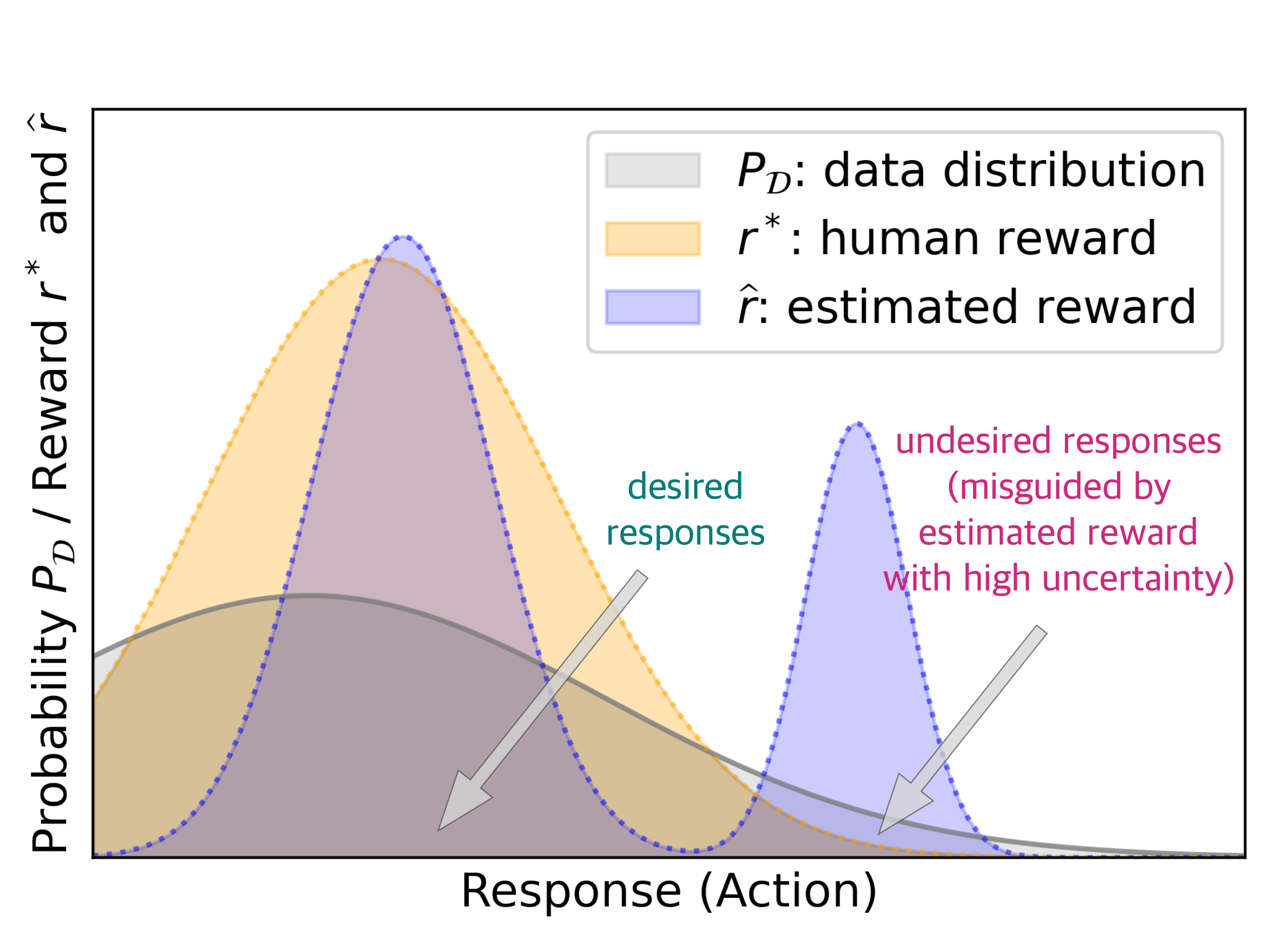}
    \end{minipage}
    \hspace{5mm}
    \begin{minipage}{0.31\linewidth}
            \centering
            \vspace{2.5mm}
        \includegraphics[height=0.71\textwidth]{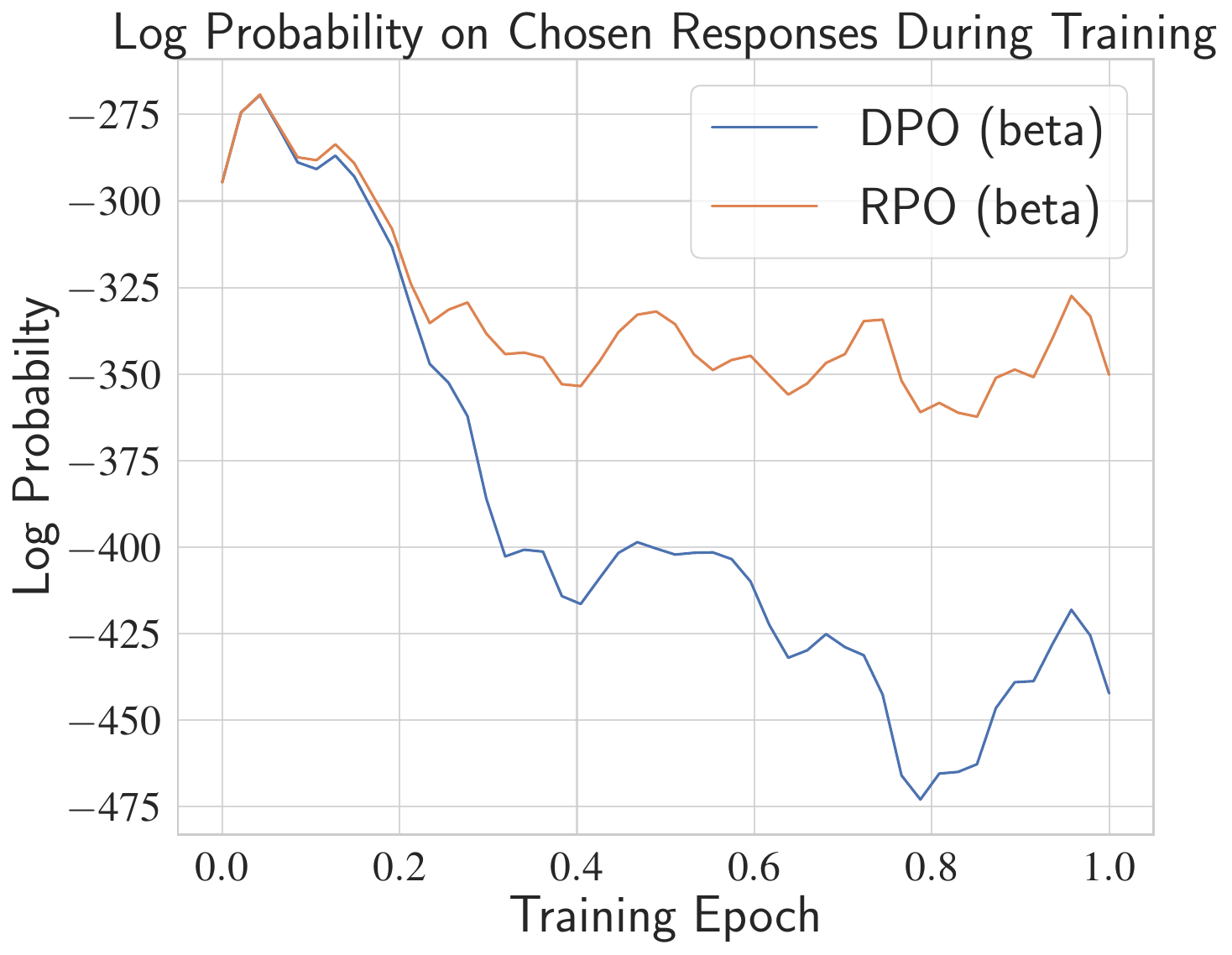}
    \end{minipage} 
    \hspace{2mm}
    \begin{subfigure}
        \centering
        {
            \begin{tabular}{c|ccc}
                \toprule
                    action & $a$ & $b$ & $c$ \\
                    \midrule 
                    \midrule
                    $r^{\star}$&  1 & 0.5 & 0\\
                    Dataset  & \multicolumn{3}{c}{$\mathcal{D} = \{(a,b,1)\}$}\\
                    \midrule
                    $\pi^{\mathrm{ref}}$ &  $0.45$ & $0.45$ &$0.1$\\
                     $\pi^{\mathrm{DPO}}$& $0.5$ & $0$ & \textcolor{red}{\bf 0.5}  \\
                      $\pi^{\mathrm{RPO}}$& $\textcolor{blue}{\bf 1.0}$& $0$ & $0$  \\
                \bottomrule
            \end{tabular}%
        }
    \end{subfigure}
    \caption{\small \textbf{Left:} Reward overoptimization due to distributional shift and uncertainty in reward. \textbf{Middle:} Overoptimization can cause the probability of outputting the preferred responses from preference data to decrease substantially using original DPO proposed by \citep{rafailov2024direct}. 
    Our algorithm (RPO) can significantly alleviate this decrease. 
    See more discussions in Section~\ref{sec: experiment}. 
    \textbf{Right:} For the illustration of the intuition behind the success of RPO, we give a simple example with no prompts and only three responses $a$, $b$, and $c$ \citep{xu2024dpo}. 
    The dataset $\cD$ only consists of a single data point where response $a$ is preferred than response $b$ (as indicated by the label $1$).
    Writing out the DPO loss, one needs to minimize $\log(1+(\pi(b)/\pi(a))^\beta)$ for $0\leq \pi(a)+\pi(b)\leq 1$, which gives $\pi(b)=0$ and leaves $\pi(a)$ arbitrary. 
    Thus possible solutions to DPO objective would bring out non-zero probabilities of response $c$ that is not desirable since $r^{\star}(c)=0$.
    In contrast RPO objective simultaneously maximizes another term $\log\pi(a)$ as regularizer, avoiding the output of response $c$.
    }\label{fig: 1}
\end{figure}

\subsection{Our Contributions}

We summarize our contributions in three areas as follows.

\paragraph{A theoretical algorithm under general function approximation.}
Our first contribution is a new theoretical algorithm (Algorithm~\ref{alg0}).
It features an \emph{unconstrained maximin} problem, outputting the optimal policy (LLM) against an adversarially chosen reward model that minimizes the summation of: 
\texttt{\textcolor{blue}{(a)}} the MLE loss for estimating the underlying reward; and
\texttt{\textcolor{blue}{(b)}}  a reward expected value term as a penalty that aims to prevent spuriously high reward estimation caused by data uncertainty and insufficient coverage.
Algorithm~\ref{alg0} is compatible with general function approximations of the reward model, meaning that we do not impose any specific structural
form to the hypothesis class of reward, demonstrating its generality.

In this regime, we establish the finite-sample suboptimality gap of Algorithm~\ref{alg0} as $\widetilde{\cO}(C^2_{\mathrm{coverage
}}\sqrt{\cN_{\cR}/N})$ when competing with any LLM in terms of the underlying human reward (Theorem~\ref{thm:main}).
Here $N$ denotes the number of human comparison data. 
$\cN_{\cR}$ denotes the complexity of the reward model class $\cR$, and $C_{\mathrm{coverage}}$ characterizes the coverage of the preference dataset with respect to the response distribution of the LLM to compete (please see Assumption~\ref{as: coverage} for details).
This indicates that, as long as the training data well cover the LLM $\pi$ to compete, the algorithm is guaranteed to align an LLM to output responses as good as $\pi$ in terms of human reward, without suffering from overoptimization caused by distributional shifts and inherent uncertainty in human preference.

\paragraph{An easy-to-implement practical objective.}
Moving towards practice, we show that the objective of Algorithm~\ref{alg0} adopts a surprisingly simple and equivalent form for its use in practice.
Specifically, with mild regularity conditions, we prove that the \emph{maximin} objective (Algorithm\ref{alg0}) is equivalent to the corresponding \emph{minimax} objective, which is further reduced to a single minimization problem for the reward model since its inner problem adopts a closed form solution.
Inspired by recent RLHF research that uses reward-model-free methods to align LLMs \citep{rafailov2024direct}, 
we further re-parameterize the reward model via its corresponding KL-regularized optimal policy.
Then the minimization objective of the reward modeling naturally translates to a target that directly aligns the LLM,
which we call \underline{R}egularized \underline{P}reference \underline{O}ptimization (RPO; Algorithm~\ref{alg1}).
The objective of RPO features a simple weighted combination of two losses: 
\begin{align}
    \textcolor{black}{\text{RPO objective}} \,=\, \textcolor{blue!55}{\text{Preference optimization loss}} \,+\, \textcolor{purple!55}{\text{Imitation (SFT) loss}}.
\end{align}
Here the \textcolor{blue!55}{\text{Preference optimization loss}} coincides with the DPO \citep{rafailov2024direct} objective, which tends to optimize the LLM towards maximizing the underlying true reward.
The \textcolor{purple!55}{\text{Imitation (SFT) loss}} explicitly supervises the LLM to mimic the responses from a proper distribution that is well covered by the dataset. 
The choice of the distribution is guided and justified by our theory of Algorithm~\ref{alg0}, but can also be flexibly adapted in practice, e.g., the preferred response in the dataset, or the responses of the initial model. 

We highlight that the \textcolor{purple!55}{\text{Imitation (SFT) loss}} serves as an important term to mitigate overoptimization.
Even though the original DPO objective has already involved a KL regularization between the tuned LLM and the initial LLM, is not enough to prevent overoptimization. 
As elaborated in Section~\ref{sec: practical algorithm}, the KL-regularization weight of the DPO objective could only control the scale of the gradient per training example, while the RPO objective can further modify the gradient direction.
Calling back to the theoretical Algorithm~\ref{alg0}, such a modification of gradient direction originates from the reward penalty in the adversarial objective for the reward model. 
This modification, as we expose in our theoretical analysis, helps to mitigate overoptimization.
\emph{Thus, incorporating SFT loss in RLHF gives you a regularizer that provably mitigates overoptimization.}

\paragraph{Empirical evaluations.} 
Following the training setup of two series of released chat models \texttt{Zephyr}-\texttt{7b-beta} (trained on Ultrafeedback dataset \citep{cui2023ultrafeedback} using DPO) and \texttt{Zephyr-7b-gemma} (trained on Argilla-DPO-Mix-7K dataset \citep{dpo-mix-7k} using DPO) \citep{tunstall2023zephyr}, we implement RPO for the beta series and gemma series respectively to show that: 
(i) RPO is a flexible plug-in module and can be applied to different reference models. 
(ii) RPO effectively alleviates overoptimization.
(iii) RPO consistently achieves better alignment performance than DPO in in-data distribution. 
(iv) RPO can also achieve consistently better performance in standard LLM benchmarks like MT-bench and AlpacaEval 2.0, which shows its potential of mitigating overoptimization for better alignment performance, justifying our theory.

\subsection{Related Works}\label{subsec: related works}

In the following, we relate our work to recent lines of RLHF research on both theory and practice sides. 
We also review related works on reward hacking and overoptimization in RLHF.

\paragraph{RLHF: algorithm design.} 

The technique of RLHF \citep{christiano2017deep, ziegler2019fine, ouyang2022training, bai2022training} has recently demonstrated its great importance in building the state-of-the-art LLMs, including ChatGPT \citep{achiam2023gpt}, Gemini \citep{team2023gemini}, Claude \citep{claude}. 
In the RLHF pipeline therein, the LLM is fine-tuned towards maximizing a learned reward model using the RL algorithm Proximal Policy Optimization (PPO; \cite{schulman2017proximal}).
Meanwhile, PPO-style algorithm is also known for its instability, sample-inefficiency, and especially, a high demand for proper hyperparameter tuning \citep{engstrom2020implementation}.
This thus casts prohibitive  computational cost to make the most effectiveness of PPO-based RLHF methods to align LLMs, especially for the open-source community.

Given that, further research on RLHF has explored various alternatives to PPO-based methods, with the most popular approach being the direct preference optimization method \citep{zhao2023slic, rafailov2024direct}, which skips the reward learning phase and directly optimize the LLM to align it with the human preference. 
Our practical implementation (RPO) also harnesses the wisdom of reward-LLM equivalence to avoid explicit reward learning followed by PPO training.

Besides the original DPO algorithm \citep{rafailov2024direct}, ever since it popularizing the direct preference learning style method, variants of the direct preference learning approach are proposed, including but not limited to \cite{liu2023statistical, azar2023general, xiong2023gibbs, tang2024generalized, ji2024towards, ye2024theoretical, pal2024smaug, hong2024orpo, rosset2024direct, liang2024robust, zhang2024negative, tajwar2024preference, wu2024self}.
Each of them aims to address further challenges of direct preference learning from varying perspectives.
Specifically, the algorithm proposed by \cite{pal2024smaug, hong2024orpo} share similar algorithmic components as RPO proposed in this work. 
Both work consider SFT style regularization during preference optimization. 
However, theoretical understanding of how SFT loss can help alignment remains unknown. 
In contrast, we provide theoretical justifications to the SFT loss as an implicit adversarial regularizer that provably mitigates overoptimization in preference learning.

\paragraph{RLHF: theoretical investigation.}

Initiated from the literature of dueling bandits and dueling RL \citep{yue2012k, bengs2021preference,pacchiano2021dueling}, recent success of RLHF in fine-tuning LLMs also motivates a long line of research to investigate the theoretical foundations of RLHF under different settings
\citep{pmlr-v162-chen22ag, zhu2023principled, zhan2023provable, zhan2023query, wang2024rlhf, li2023reinforcement, xiong2023gibbs, ye2024theoretical, du2024exploration, zhong2024dpo}, aiming to propose provably sample-efficient algorithms to learn a human-reward-maximizing policy from human preference signals.
Our theoretical study of RLHF falls into the paradigm of offline learning from a pre-collected preference dataset, and is most related to \cite{zhu2023principled, zhan2023provable, li2023reinforcement, xiong2023gibbs, ye2024theoretical}. 
In this setup, the main challenge is to address the overoptimization issues due to human reward uncertainty and distributional shifts when only a fixed dataset is available.
In the sequel, 
we compare our work with them in more detail.

Existing theoretical work on provably sample-efficient offline RLHF typically suffers from two drawbacks: they are either restricted to the linear function approximations setting \citep{zhu2023principled, xiong2023gibbs} which is far from the practical situations, or are generally unable to be implemented in the LLM experiments. 
Typically, to encompass the pessimistic principle in the face of uncertainty, the existing literature proposes to return the optimal policy against either an estimated reward model plus 
a structure-aware reward uncertainty penalty \citep{xiong2023gibbs} or the most pessimistic reward model inside a confidence region \citep{zhu2023principled, zhan2023provable}. 
Both of these two types of method involve intractable components for implementation and needs for additional algorithmic design to approximate the theoretical algorithm in practice.
In contrast, our theory works in the context of general function approximations while being friendly to be implemented.
Finally, we remark that, while our study focuses on the standard Bradley-Terry model of human preference with general reward function approximations, the work of \cite{ye2024theoretical} further considers a general human preference model.
But it remains unknown how their algorithms can be efficiently implemented in practice. 
It serves as an interesting direction to extend our technique to RLHF with general reward model and device new practical algorithms.

Finally, we mention that the algorithm design of RPO is also related to the ``pessimism'' principle in the standard offline RL literature.
It proposes to maintain a pessimistic estimate of the policy values or constrain the policy not to take unseen actions in the data to handle the challenge of the insufficient coverage of the dataset, e.g., \cite{jin2021pessimism,uehara2021pessimistic,xie2021bellman,xie2021policy,rashidinejad2021bridging,yin2021towards,fu2022offline,xiong2022nearly,liu2022welfare,liu2022learning, shi2022pessimistic,zhan2022offline,lu2022pessimism,rashidinejad2022optimal,shi2024distributionally, blanchet2024double}.
In contrast, we consider the offline RLHF problem and the techniques to obtain the objective of our algorithm along with its sample complexity analysis are new and different from these prior works on standard offline RL.

\paragraph{Reward hacking and overoptimization in RLHF for LLM.}
As is discussed in the introduction, the challenge of reward hacking or overoptimization may prevent the successful alignment of LLMs, degenerating the performance of an LLM because of maximizing an imperfect, overfitted, and misgeneralized proxy reward learned from the finite data \citep{michaud2020understanding, tien2022causal, gao2023scaling, casper2023open}.
Efforts have been made to mitigate this fundamental issue through the perspective of theory, e.g.,   \cite{zhu2023principled, xiong2023gibbs, zhu2024iterative}, and practice, e.g., \cite{ coste2023reward, eisenstein2023helping, dong2023raft, 
moskovitz2023confronting, zhang2024overcoming, rita2024countering, pal2024smaug, hong2024orpo}.
Our approach starts from the theoretical insights of handling inherent uncertainty in learning human preference from finite data, while being suprisingly easy to implement in practice.

\section{Preliminaries of RLHF}\label{sec: preliminaries}

In this section, we introduce the mathematical framework of studying RLHF for alignining LLMs. 
We adopt the framework of offline contextual bandits \citep{ouyang2022training, zhu2023principled, zhu2024iterative}, where we identify the contextual space $\cX$ as the space of prompts, and we identify the action space $\cA$ as the space of responses. 
An LLM, which is defined as a policy $\pi(\cdot|\cdot):\cX\mapsto\Delta(\cA)$, can take a prompt $x\in\cX$ as its input and output a response $a\in\cA$ obeying $a\sim \pi(\cdot|x)$.

\paragraph{Preference model.} Given any reward function $r:\cX\times\cA\mapsto\mathbb{R}$ belonging to certain reward class $\mathcal{R}$ which represents the ``human's rating" of LLM responses given some prompts, we consider the Bradley-Terry model \citep{bradley1952rank} of human preference. 
That is, given a prompt $x\in\cX$ and two responses $a^1, a^0\in\cA$, the probability of $a^1$ being preferred to $a^0$ (denoted by $y=1$, otherwise $y=0$) is given by
\begin{align}
    \mathbb{P}_r(y=1|x,a^1,a^0) = \frac{\exp(r(x,a^1))}{\exp(r(x,a^1))+ \exp(r(x,a^0))}= \sigma\big(r(x,a^1) - r(x,a^0)\big),\label{eq: bt model}
\end{align}
where $\sigma(z) = 1/(1+\exp(-z))$ is the sigmoid function.
For simplicity of future discussion, we explicitly write out the dependence of the preference probability $\mathbb{P}_r(\cdot)$ on the reward model $r\in\mathcal{R}$.
In the section of theory, i.e., Section~\ref{sec: theory}, we specify the assumptions on the reward model class $\cR$.

\paragraph{Learning protocol.} 
Typically in an LLM training pipeline, the RLHF phase starts from certain reference policy $\pi^{\mathrm{ref}}$ which is obtained by pretraining and then supervised fine-tuning (SFT). 
Then RLHF aligns the LLM based on certain human preference data. 
In this work, we consider offline RLHF setup where the LLM is aligned using a fixed offline preference dataset $\cD$. It consists of $N$ i.i.d. tuples in the form of
\begin{align}
    \cD = \big\{(x_i,a^1_i,a^0_i,y_i)\big\}_{i=1}^N.
\end{align}
Here the prompt $x_i$ and the response $a_i^1, a_i^0$ are distributed according to $(x,a^1,a^0)\sim \mu_{\cD}(\cdot)$, and conditioning on $(x_i,a_i^1,a_i^0)$, $y_i$ is distributed according to \eqref{eq: bt model} for an underlying true (but unknown) reward model $r^{\star}\in\mathcal{R}$.

\paragraph{Performance metric.} 
The target of RLHF is to align an LLM, or equivalently, to learn a policy $\pi$, so as to maximize the expected true reward $r^{\star}$.
Correspondingly, we define the value function of a policy $\pi$ as 
\begin{align}
    J(\pi) = \mathbb{E}_{x\sim d_0,a\sim \pi(\cdot|x)}\big[r^{\star}(x,a)\big].\label{eq: value function}
\end{align}
Here we allow the prompt distribution $d_0(\cdot)$ to be different from that of the offline dataset distribution $\mu_{\cD}(\cdot)$, but is assumed to be known.
In the meanwhile, we consider the policies that share the same support as the reference policy $\pi^{\mathrm{ref}}$, that is, we take a policy class $\Pi$ as
\begin{align}
    \Pi = \Big\{\pi:\cX\mapsto\Delta(\cA)\,\Big|\, \mathrm{Supp}(\pi(\cdot|x))\subseteq\mathrm{Supp}(\pi^{\mathrm{ref}}(\cdot|x)),\,\,\forall x\in\cX\Big\}.\label{eq: policy class}
\end{align}
The performance gap of a learned policy $\widehat{\pi}\in\Pi$ with respect to any other policy $\pi\in\Pi$ is measured as 
\begin{align}
    \mathrm{Gap}^\pi(\widehat{\pi}) = J(\pi) - J(\widehat{\pi}).\label{eq: gap}
\end{align}
Our goal is to propose a sample-efficient and also implementation-friendly algorithm to learn a policy $\widehat{\pi}\in\Pi$ that can compete with a given policy $\pi\in\Pi$ in terms of $\mathrm{Gap}^{\pi}(\widehat{\pi})\leq \varepsilon$, using a number of samples polynomial in $1/\varepsilon$ and logarithmic in the complexity of the reward class $\mathcal{R}$ and the failure probability inversed $1/\delta$. 

We do not assume parametric forms of the reward class $\mathcal{R}$ \citep{zhu2023principled, xiong2023gibbs}, and thus we fall into the paradigm of general function approximations.

\section{A Theory-motivated Objective}\label{sec: theory-motivated objective}

Our method seeks to find the best policy $\widehat{\pi}$ against an \emph{adversarially} chosen reward model $\widehat{r}_{\mathrm{adv}}$ that minimizes a weighted sum of its expected value and the maximum likelihood estimation (MLE) loss.
Intuitively, such a reward model can prevent the overoptimization issue by taking its own value into account when minimizing the MLE loss. 
Since the reward value is also minimized when minimizing the sum, this method prevents the misleadingly high reward caused by the uncertainty due to finite data.
Formally, given two hyperparameters $\beta, \eta>0$ and a ``baseline policy'' $\pi^{\mathrm{base}}$, we define
\begin{align}
    T_{\beta, \eta}^{\mathrm{adv}}(\pi) = \min_{r\in \mathcal{R}}\left\{\eta\cdot\mathbb{E}_{\substack{x\sim d_0, a^1\sim\pi(\cdot|x),\\ a^0\sim \pi^{\mathrm{base}}(\cdot|x)}}\Big[r(x,a^1) - r(x,a^0) -\beta\cdot \mathrm{KL}\big(\pi(\cdot|x)\|\pi^{\mathrm{ref}}(\cdot|x)\big)\Big] +  {\cL}_{\cD}(r)\right\},
\end{align}
where the loss function $\cL_{\cD}(\cdot)$ is the average negative log-likelihood function of the BT model \eqref{eq: bt model} (and here it becomes the cross-entropy loss) over the preference dataset $\cD$, defined as 
\begin{align}\label{eq: L D}
    \cL_{\cD}(r) = -\widehat{\EE}_{\cD}\bigg[y_i \log \big(\sigma\big(r(x_i,a^1_i) - r(x_i,a^0_i)\big)\big) + (1-y_i)\log \big(\sigma\big(r(x_i,a^0_i) - r(x_i,a^1_i)\big)\big)\bigg].
\end{align}
As we can see, $T_{\beta, \eta}^{\mathrm{adv}}(\pi)$ is the minimum value of a weighted sum of the MLE loss and the expected reward value of $\pi$, but with two important modifications that we explain in the following.

\emph{Firstly}, we subtract another expected reward of certain policy $\pi^{\mathrm{base}}$. 
This is because the BT model \eqref{eq: bt model} essentially only uses the reward differences to define the preference probabilities. 
As a result, the data can only reveal information of the differences between the true reward $r^{\star}$ of different responses \citep{zhan2023provable}. 
Accordingly, we subtract a baseline expected reward value to match this observation. 
The choice of the baseline policy is discussed in the theory part (Section~\ref{sec: theory}) and experiments (Section~\ref{sec: experiment}).

\emph{Secondly}, we subtract a KL divergence between $\pi$ and $\pi^{\mathrm{ref}}$ from the expected reward, weighted by the coefficient $\beta>0$.  
Such a term is for practical considerations that would be explained in Sections~\ref{sec: practical algorithm} and \ref{subsec: minimax}. 
We note that the KL regularized reward is commonly adopted in RLHF practice to ensure the learned policy is not far away from the reference policy \citep{ouyang2022training, xiong2023gibbs}.

\begin{algorithm}[t]
	\caption{Theoretical Algorithm: Maximin Objective}
	\label{alg0}
	\begin{algorithmic}[1]
	\STATE \textbf{Input}: Preference dataset $\mathcal{D}$, parameters $\beta,\eta>0$, reference policy $\pi^{\mathrm{ref}}$, baseline policy $\pi^{\mathrm{base}}$. 
    \STATE \textbf{Output}: Policy $\widehat{\pi}$ given by \eqref{eq: target max min} with the cross-entropy loss function $\cL_{\cD}$ defined in \eqref{eq: L D}.
	\end{algorithmic}
\end{algorithm}

Finally, the overall algorithm design (Algorithm~\ref{alg0}) is to output the policy that maximizes $T_{\beta, \eta}^{\mathrm{adv}}(\pi)$, i.e., $\widehat{\pi} \in \argmax_{\pi\in\Pi}T_{\beta, \eta}^{\mathrm{adv}}(\pi)$, which gives the following theoretical target: 
\begin{tcolorbox}[ams align, colback=orange!5, boxrule=0.15mm]
    \!\!\!\!\!\!\!\!\widehat{\pi}\in \argmax_{\pi\in\Pi}\,\min_{r\in \mathcal{R}}\left\{\eta\cdot\mathbb{E}_{\substack{x\sim d_0, a^1\sim\pi(\cdot|x),\\ a^0\sim \pi^{\mathrm{base}}(\cdot|x)}}\Big[r(x,a^1) - r(x,a^0) -\beta \cdot\mathrm{KL}\big(\pi(\cdot|x)\|\pi^{\mathrm{ref}}(\cdot|x)\big)\Big] +  {\cL}_{\cD}(r)\right\}.\label{eq: target max min}
\end{tcolorbox} 
Given the form of \eqref{eq: target max min}, we name it the \textcolor{orange!70}{\emph{maximin}} objective in the sequel. 
Upon seeing \eqref{eq: target max min}, one might be arguing that such a theory-motivated objective seems hard to implement in practice. 
Nevertheless, in the coming Section~\ref{sec: practical algorithm}, we demonstrate that the maximin objective \eqref{eq: target max min} adopts an easy-to-implement equivalent form, allowing us to design a practical algorithm for aligning LLMs.

\begin{remark}[Comparison with \cite{zhan2023provable}]
    We remark that in the work of \cite{zhan2023provable}, they also mentioned a maximin object similar to \eqref{eq: target max min} for offline preference-based RL as a complementary to their theoretical algorithm. 
    However, the sample complexity of the maximin algorithm they presented is unknown. 
    Furthermore, our objective \eqref{eq: target max min} features another KL-regularization term, which is essential for the proposal of our new practical algorithm design for aligning LLM in Section~\ref{sec: practical algorithm}.
\end{remark}

\begin{remark}[Comparison with \cite{xiong2023gibbs}]
    Another theoretical work on RLHF \citep{xiong2023gibbs} explicitly models the KL-regularization between the target policy and the reference policy in the learning objective, referred to as the KL-regularized contextual bandit.
    This means that their metric becomes the KL-regularized expected reward.
    In contrast, here we put the KL-regularization as a component of our algorithm design, but we still keep the metric as the expected reward \eqref{eq: value function}. 
    Therefore our theory in Section~\ref{subsec: sample complexity} directly reveals how the learned policy performs in terms of the expected reward compared to any given target policy.
\end{remark}

\section{An Equivalent and Implementation-friendly Objective}\label{sec: practical algorithm}

In this section, we propose another \emph{minimax}-style objective that is equivalent to the maximin objective \eqref{eq: target max min}. 
Based on the minimax objective, we propose a new LLM aligning algorithm called \underline{R}egularized  \underline{P}reference \underline{O}ptimization (RPO). 
It draws inspirations from the reparametrization technique originated in Direct Preference Optimization (DPO) \citep{rafailov2024direct} and goes beyond to further address the overoptimization issue in offline RLHF by incorprating an SFT loss as an explicit adversarial regularizer.

\paragraph{An equivalent minimax objective.}
If the reward model class $\mathcal{R}$ satisfies certain regularity conditions, which we discuss in detail in Section~\ref{subsec: minimax}, the minimax theorem holds: solving the \textcolor{orange!70}{\emph{maximin}} objective \eqref{eq: target max min} is \emph{equivalent} to solving a \textcolor{purple!60}{\emph{minimax}} target, given by
\begin{align}\label{eq: target min max}
    \min_{r\in \mathcal{R}}\,\max_{\pi\in\Pi}\left\{\eta\cdot \mathbb{E}_{\substack{x\sim d_0, a^1\sim\pi(\cdot|x),\\ a^0\sim \pi^{\mathrm{base}}(\cdot|x)}}\Big[r(x,a^1) - r(x,a^0) -\beta \cdot \mathrm{KL}\big(\pi(\cdot|x)\|\pi^{\mathrm{ref}}(\cdot|x)\big)\Big] +  {\cL}_{\cD}(r)\right\}.
\end{align}
Such a minimax formulation \eqref{eq: target min max} is the starting point of our practical algorithm. 
The magic of \eqref{eq: target min max} is that the inner maximization problem adopts a closed form solution, which further simplifies such an objective. 
To see this, note that given any reward model $r\in\cR$, the inner problem is equivalent to 
\begin{align}
    \max_{\pi\in\Pi}\bigg\{\mathbb{E}_{x\sim d_0, a\sim\pi(\cdot|x)}\Big[r(x,a) -\beta\cdot \mathrm{KL}\big(\pi(\cdot|x)\|\pi^{\mathrm{ref}}(\cdot|x)\big)\Big]\bigg\}.\label{eq: kl-regularized reward}
\end{align}
It has been well established that the policy that maximizes the KL-regularized expected reward \eqref{eq: kl-regularized reward} has a closed form solution. 
Due to its importance, we present it as the following lemma.
\begin{lemma}[Oracle optimal KL-regularized policy]\label{lem: optimal policy}
    Given any reward model $r\in\mathcal{R}$, the optimal policy $\pi_r$ to the maximization problem \eqref{eq: kl-regularized reward} is given by 
\begin{align}
    \pi_r(\cdot|x)= \frac{1}{Z_r(x)}\cdot \pi^{\mathrm{ref}}(\cdot|x)\cdot \exp\left(\beta^{-1} r(x,\cdot)\right),\,\, Z_r(x) = \int_{a\in\cA} \exp\left(\beta^{-1} r(x,a)\right)\mathrm{d}\pi^{\mathrm{ref}}(a|x),
\end{align}
and correspondingly the optimal value of \eqref{eq: kl-regularized reward} is given by $\eqref{eq: kl-regularized reward} =  \mathbb{E}_{x\sim d_0}[\beta\cdot \log(Z_r(x))]$.
\end{lemma}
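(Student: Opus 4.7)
The plan is to reduce the problem to a per-prompt optimization and then use the standard log-partition / Gibbs-measure identity. Since both the expected reward and the KL term in \eqref{eq: kl-regularized reward} decompose additively over $x$ under $x\sim d_0$, and since for any policy $\pi\in\Pi$ the restricted distribution $\pi(\cdot|x)$ can be chosen independently for each $x$ (subject only to the support constraint $\mathrm{Supp}(\pi(\cdot|x))\subseteq\mathrm{Supp}(\pi^{\mathrm{ref}}(\cdot|x))$), the maximization over $\pi\in\Pi$ reduces to maximizing, for each fixed $x\in\cX$, the inner objective
\begin{align*}
F_x(\pi(\cdot|x)) = \mathbb{E}_{a\sim\pi(\cdot|x)}[r(x,a)] - \beta\cdot\mathrm{KL}\big(\pi(\cdot|x)\,\|\,\pi^{\mathrm{ref}}(\cdot|x)\big)
\end{align*}
over distributions on $\cA$ supported within $\mathrm{Supp}(\pi^{\mathrm{ref}}(\cdot|x))$.

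Next I would divide $F_x$ by $\beta>0$ and apply the classical ``complete-the-square'' manipulation for KL divergences: multiplying and dividing inside the log by $\exp(\beta^{-1} r(x,a))$ and pulling out the normalizing constant $Z_r(x)$ defined in the statement, the objective $\beta^{-1}F_x$ can be rewritten as $-\mathrm{KL}(\pi(\cdot|x)\,\|\,\pi_r(\cdot|x)) + \log Z_r(x)$, where $\pi_r(\cdot|x)\propto \pi^{\mathrm{ref}}(\cdot|x)\exp(\beta^{-1}r(x,\cdot))$ is the candidate Gibbs measure. By Gibbs' inequality, the KL divergence is non-negative and vanishes exactly when $\pi(\cdot|x)=\pi_r(\cdot|x)$, so the unique maximizer is $\pi_r(\cdot|x)$ and the optimal value of the inner problem is $\beta\log Z_r(x)$. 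Taking expectations over $x\sim d_0$ then yields the claimed optimal value $\mathbb{E}_{x\sim d_0}[\beta\cdot \log Z_r(x)]$.

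Finally, I would verify that $\pi_r$ indeed lies in the admissible class $\Pi$ defined in \eqref{eq: policy class}: since $\exp(\beta^{-1}r(x,a))>0$ everywhere, the measure $\pi_r(\cdot|x)$ has support exactly equal to that of $\pi^{\mathrm{ref}}(\cdot|x)$, so the support condition is automatically satisfied and $Z_r(x)$ is finite (assuming $r$ is bounded on $\mathrm{Supp}(\pi^{\mathrm{ref}}(\cdot|x))$, which is standard). I expect essentially no serious obstacle in this proof: the main delicacies are purely bookkeeping ones, namely confirming the decomposition across $x$ is valid pointwise and that the Radon--Nikodym derivative defining the KL divergence is well defined on the common support; both are immediate under the stated setup.
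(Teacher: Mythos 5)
Your proposal is correct: the per-prompt decomposition followed by the complete-the-square identity $\mathbb{E}_{a\sim\pi(\cdot|x)}[r(x,a)]-\beta\,\mathrm{KL}(\pi(\cdot|x)\|\pi^{\mathrm{ref}}(\cdot|x))=\beta\log Z_r(x)-\beta\,\mathrm{KL}(\pi(\cdot|x)\|\pi_r(\cdot|x))$ and Gibbs' inequality is exactly the standard argument, and your check that $\pi_r\in\Pi$ via the support condition is a sensible addition. The paper does not write out a proof at all --- it simply cites Proposition 7.16 and Theorem 15.3 of \cite{zhang2023mathematical}, which encapsulate precisely this derivation --- so your write-up is a faithful, self-contained version of the same approach rather than a genuinely different one.
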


\begin{proof}[Proof of Lemma~\ref{lem: optimal policy}]
    This follows directly from Proposition 7.16 and Theorem 15.3 of \cite{zhang2023mathematical}.
\end{proof}

Specifically, by Lemma~\ref{lem: optimal policy}, 
we can solve the inner maximization problem in \eqref{eq: 
target min max} and obtain that
\begin{align}
    \eqref{eq: target min max} = \min_{r\in \mathcal{R}}\bigg\{\eta\cdot \mathbb{E}_{\substack{x\sim d_0,  a^0\sim \pi^{\mathrm{base}}(\cdot|x)}}\Big[ - r(x,a^0) + \beta\cdot\log\left(Z_r(x)\right)\Big] +  {\cL}_{\cD}(r)\bigg\}.
\end{align}
Furthermore, from Lemma~\ref{lem: optimal policy}, one immediately see that given any reward model $r\in\mathcal{R}$, we can reparameterize it via its corresponding optimal KL-regularized policy $\pi_r$ \citep{rafailov2024direct}, that is,
\begin{align}
    r(x,\cdot) = \beta\cdot\log\left(\frac{\pi_r(\cdot|x)}{\pi^{\mathrm{ref}}(\cdot|x)}\right) + \beta\cdot\log (Z_r(x)).\label{eq: reparametrize}
\end{align}
Taking \eqref{eq: reparametrize} back into \eqref{eq: target min max}, we are able to further simplify it as
\begin{align}
    \!\!\!\!\!\!\!\!\!\!\eqref{eq: target min max} 
     = \min_{r\in \mathcal{R}}\bigg\{\eta\cdot\mathbb{E}_{\substack{x\sim d_0, a^0\sim \pi^{\mathrm{base}}(\cdot|x)}}\Big[ -\beta\cdot\log(\pi_r(a^0|x))\Big] +  {\cL}_{\cD}\left(\beta\cdot\log\left(\frac{\pi_r(\cdot|\cdot)}{\pi^{\mathrm{ref}}(\cdot|\cdot)}\right)\right)\bigg\}.\label{eq: target min max theory final}
\end{align}
Thanks to the KL-regularization term in the original minimax objective \eqref{eq: target min max} (or equivalently, the maximin objective \eqref{eq: target max min}), we have the following theorem. 
It theoretically shows that the policy $\pi_{\widehat{r}}$ associated with the reward model $\widehat{r}$ solving \eqref{eq: target min max theory final} also solves the maximin target \eqref{eq: target max min} of the theoretical algorithm (Algorithm~\ref{alg0}) that enjoys finite-sample convergence guarantees.

\begin{theorem}[Equivalence between \textcolor{orange!70}{\emph{maximin}} and \textcolor{purple!60}{\emph{minimax}} algorithm (informal)]\label{prop: equivalence informal}
    Under certain regularity assumptions on $\cR$ and given $\eta,\beta>0$, solving the minimax objective \eqref{eq: target min max} via \eqref{eq: target min max theory final}, i.e., 
    \begin{align}
        \widehat{r} = \argmin_{r\in\cR}\bigg\{\eta\cdot\mathbb{E}_{\substack{x\sim d_0, a^0\sim \pi^{\mathrm{base}}(\cdot|x)}}\Big[ -\beta\cdot\log(\pi_r(a^0|x))\Big] +  {\cL}_{\cD}\left(\beta\cdot\log\left(\frac{\pi_r(\cdot|\cdot)}{\pi^{\mathrm{ref}}(\cdot|\cdot)}\right)\right)\bigg\}, 
    \end{align}
    then the corresponding optimal KL-regularized policy $\pi_{\widehat{r}}$ also solves the maximin objective \eqref{eq: target max min}.
\end{theorem}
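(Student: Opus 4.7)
The plan is to prove the theorem by connecting the minimax objective \eqref{eq: target min max} to its maximin counterpart \eqref{eq: target max min} via a Sion-type minimax theorem, and then identifying the policy $\pi_{\widehat r}$ as the $\pi$-component of a saddle point. First I would examine the convex--concave structure of the objective
\begin{align*}
F(\pi,r) \,=\, \eta\cdot\mathbb{E}_{x\sim d_0, a^1\sim\pi,\, a^0\sim\pi^{\mathrm{base}}}\bigl[r(x,a^1) - r(x,a^0) - \beta\,\mathrm{KL}(\pi(\cdot|x)\|\pi^{\mathrm{ref}}(\cdot|x))\bigr] + \cL_{\cD}(r).
\end{align*}
For fixed $r$, the reward expectations are linear in $\pi$ while $-\mathrm{KL}(\pi\|\pi^{\mathrm{ref}})$ is strictly concave in $\pi$, so $F(\cdot,r)$ is strictly concave. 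For fixed $\pi$, the reward-difference terms are linear in $r$ and $\cL_{\cD}(r)$ is convex in $r$, because $z\mapsto -\log\sigma(z)$ is convex and $r$ enters through the linear map $r(x,a^1)-r(x,a^0)$. Under the regularity assumptions on $\cR$ deferred to Section~\ref{subsec: minimax} (convexity, boundedness, and a topology making the inner/outer extrema attained), Sion's theorem then yields $\max_{\pi}\min_{r} F = \min_{r}\max_{\pi} F$ and guarantees the existence of a saddle point $(\pi^\star,r^\star)$.

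Next I would reduce the inner maximization to a closed form. For any fixed $r$, dropping the $a^0$-term that does not depend on $\pi$ turns $\max_{\pi\in\Pi} F(\pi,r)$ into the KL-regularized reward problem \eqref{eq: kl-regularized reward}, whose unique maximizer is $\pi_r$ with optimal value $\mathbb{E}_{x\sim d_0}[\beta\log Z_r(x)]$ by Lemma~\ref{lem: optimal policy}. Consequently any saddle point satisfies $\pi^\star = \pi_{r^\star}$, and substitution followed by the reparametrization \eqref{eq: reparametrize} produces exactly \eqref{eq: target min max theory final}: the log-partition $\beta\log Z_r(x)$ cancels against $-r(x,a^0)$ outside $\cL_{\cD}$, leaving $-\beta\log\pi_r(a^0|x)$ up to the $r$-independent additive constant $\beta\log\pi^{\mathrm{ref}}(a^0|x)$, while inside $\cL_{\cD}$ it cancels automatically because the BT loss depends on $r$ only through the difference $r(x,a^1)-r(x,a^0)$.

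Finally, let $\widehat r$ be any minimizer of \eqref{eq: target min max theory final}. By the previous reduction it is also a minimizer of $\min_{r}\max_{\pi} F(\pi,r)$, so $(\pi_{\widehat r},\widehat r)$ is a saddle point by strict concavity of $F(\cdot,r)$ and Lemma~\ref{lem: optimal policy}. The minimax identity from the first step then implies that the $\pi$-component of any saddle point solves $\max_{\pi}\min_{r} F(\pi,r)$, which is precisely \eqref{eq: target max min}; hence $\pi_{\widehat r}$ solves the maximin objective as claimed. The main obstacle I anticipate is in the first step: justifying the exchange of min and max for a \emph{nonparametric} reward class $\cR$. Sion's theorem requires convexity together with at least one-sided compactness, so the ``regularity assumptions'' must be chosen to make $\cR$ convex and either compact or equipped with a topology in which $r\mapsto F(\pi,r)$ is lower semicontinuous and coercive; I would additionally want to verify that the reparametrization \eqref{eq: reparametrize} maps $\cR$ into a policy class consistent with $\Pi$, so that \eqref{eq: target min max theory final} is well-posed over the correct domain.
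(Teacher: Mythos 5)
Your proposal is correct and follows essentially the same route as the paper: establish the minimax--maximin equality for $\phi$, use the closed-form KL-regularized maximizer (Lemma~\ref{lem: optimal policy}) to reduce the inner problem, and then exploit the \emph{uniqueness} of that maximizer (Lemma~\ref{lem: unique pi}) to identify $\pi_{\widehat r}$ with a maximin solution. The only cosmetic differences are that the paper invokes Fan's minimax theorem under the convex-like condition of Assumption~\ref{ass: regularity} rather than Sion's theorem, and phrases the final identification through a zero-duality-gap decomposition rather than the product structure of the saddle-point set; both are interchangeable here.
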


\begin{proof}[Proof of Proposition~\ref{prop: equivalence informal}]
    Please see Section~\ref{subsec: minimax} for a formal statement and proof of Theorem~\ref{prop: equivalence informal}.
\end{proof}

\begin{algorithm}[t]
	\caption{Practical Algorithm: \underline{R}egularized \underline{P}reference \underline{O}ptimization (RPO)}
	\label{alg1}
	\begin{algorithmic}[1]
	\STATE \textbf{Input}: Preference dataset $\mathcal{D}$, parameters $\beta,\eta>0$, reference policy $\pi^{\mathrm{ref}}$, baseline policy $\pi^{\mathrm{base}}$.
    \STATE \textbf{Output}: Policy $\pi_{\widehat{\theta}}$ obtained by optimizing objective \eqref{eq: RPO}.
	\end{algorithmic}
\end{algorithm}

\noindent
\textbf{Regularized Preference Optimization.} 
Target \eqref{eq: target min max theory final} gives a quite simple objective to use in practice! 
Since \eqref{eq: target min max theory final} depends on $r\in\cR$ only through its corresponding optimal policy $\pi_r$, we formulate a minimization objective over a parameterized policy $\pi_{\theta}$, i.e., the LLM to be aligned, and directly optimize the parameters $\theta\in\Theta$.
More formally, the new RLHF objective becomes 
\begin{tcolorbox}[ams align, colback=pink!10, boxrule=0.15mm]
    \!\!\!\!\min_{\theta\in\Theta}\Bigg\{\cL_{\mathrm{RPO}}(\theta):=\eta\beta\cdot\underbrace{\mathbb{E}_{\substack{x\sim d_0, a^0\sim \pi^{\mathrm{base}}(\cdot|x)}}\Big[ -\log(\pi_{\theta}(a^0|x))\Big]}_{\displaystyle{\text{\textcolor{purple!55}{\text{Imitation (SFT) loss}}}}} +  
    \underbrace{\cL_{\cD}\left(\beta\cdot\log\left(\frac{\pi_{\theta}(\cdot|\cdot)}{\pi^{\mathrm{ref}}(\cdot|\cdot)}\right)\right)}_{\displaystyle\text{\textcolor{blue!55}{\text{Preference opt. loss}}}}\Bigg\}.\label{eq: RPO}
\end{tcolorbox}
In \eqref{eq: RPO}, the second term coincides with the objective of DPO algorithm \citep{rafailov2024direct} which optimizes the policy towards maximizing the underlying true reward, and the first term stands for a regularization term weighted by $\eta\cdot\beta$ which \emph{explicitly} regularizes the policy to imitate the baseline policy. 
Therefore, we name the resulting algorithm as Regularized Preference Optimization (RPO). 
We summarize it abstractly in Algorithm~\ref{alg1}. 
As for DPO, implementing RPO does not require to maintain a reward model $r$. 
Thus it is computationally more friendly compared to reward-based algorithms.

\paragraph{How does RPO improve DPO?} 
We illustrate the effect of the imitation loss by analyzing the gradient of the RPO target $\cL_{\mathrm{RPO}}(\theta)$ in \eqref{eq: RPO}. 
Notice that by \eqref{eq: RPO} we have
\begin{align}
    \vspace{-2mm}
    \nabla_\theta \cL_{\mathrm{RPO}}(\theta)& = \eta\beta\cdot\underbrace{\mathbb{E}_{\substack{x\sim d_0, a^0\sim \pi^{\mathrm{base}}(\cdot|x)}}\Big[ -\nabla_\theta \log(\pi_{\theta}(a^0|x))\Big]}_{\text{increase the alignment with the baseline policy}} 
    +\underbrace{\nabla_\theta {\cL}_{\mathrm{DPO}}(\theta)}_{\text{decrease the DPO Loss}},\label{eq: gradient rpo}
\end{align}
where the derivative of the DPO loss $\nabla_\theta {\cL}_{\mathrm{DPO}}(\theta)$ is given by the following,
\begin{align}
\nabla_\theta {\cL}_{\mathrm{DPO}}(\theta) = - \widehat{\mathbb{E}}_{\mathcal{D}}\bigg[\underbrace{\beta\cdot\sigma\big(\widehat{r}_\theta(x, a_{\mathrm{rej}})-\widehat{r}_\theta(x, a_{\mathrm{cho}})\big)}_{\text{gradient weight}}\!\cdot \Big(\nabla_\theta \log \pi_{\theta}(a_{\mathrm{cho}} | x)-\nabla_\theta \log \pi_{\theta}(a_{\mathrm{rej}} |x)\Big)\bigg].
\end{align}
For simplicity we denote $\widehat r_\theta(x,a) = \beta\cdot \log(\pi_\theta(a|x))/\log(\pi^\mathrm{ref}(a|x))$, and denote $a_{\mathrm{cho}}$ for the chosen response and $a_{\mathrm{rej}}$ for the rejected response.
Intuitively, RPO \eqref{eq: RPO} modifies the \textbf{gradient direction} of DPO to ensure the alignment with the baseline policy $\pi^\text{base}$, and the hyper-parameter $\eta$ controls the power of alignment.
In comparison, the hyper-parameter $\beta$ in DPO only controls the \textbf{gradient weight} when increasing the likelihood of $a_{\mathrm{cho}}$ and decreasing the likelihood $a_{\mathrm{rej}}$. 
In this perspective, the hyper-parameter $\beta$ only changes the scale of the gradient instead of the direction. By introducing $\eta$, we stabilize the training and reduce the side-effect of uncertain labels in data to prevent overoptimization.

\section{Theoretical Analysis}\label{sec: theory}

In this section, we present theoretical analysis of our proposed algorithms (Algorithm~\ref{alg0} and Algorithm~\ref{alg1}). 
In Section~\ref{subsec: sample complexity}, we first establish a finite-sample complexity result for the theoretical algorithm (Algorithm~\ref{alg0}) that adopts maximin objective \eqref{eq: target max min}.
In Section~\ref{subsec: minimax}, we formally provide an equivalence result between the maximin objective \eqref{eq: target max min} and the minimax objective \eqref{eq: target min max} that we use in practice, based on which we further obtain the sample complexity guarantee of our practical algorithm design (Algorithm~\ref{alg1}).
In Section~\ref{subsec: generalization}, we further extend our theory to guarantee the performance on new prompt distributions.

Through this theory section, we assume that the space of prompts and responses are compact subsets of high-dimensional spaces, that is, $\cX\subseteq\mathbb{R}^{d_\cX}$ and $\cA\subseteq\mathbb{R}^{d_{\cA}}$ for some $d_\cX, d_\cA\in\mathbb{N}_+$, and $\cX$ and $\cA$ are compact (closed and bounded). 
We take the policy class $\Pi$ as \eqref{eq: policy class}, that is,  
\begin{align}
    \Pi = \Big\{\pi:\cX\mapsto\Delta(\cA)\,\Big|\, \mathrm{Supp}(\pi(\cdot|x))\subseteq\mathrm{Supp}(\pi^{\mathrm{ref}}(\cdot|x)),\,\,\forall x\in\cX\Big\}.\label{eq: policy class theory}
\end{align}

\subsection{Establishing the Sample Complexity of Maximin Objective (Algorithm~\ref{alg0})}\label{subsec: sample complexity}

We first make the following two assumptions for the sample complexity analysis of Algorithm~\ref{alg0}.

\begin{assumption}[True reward model]\label{as: regularity}
    We assume that the true reward model $r^{\star}\in\cR$, and for any $r\in\cR$ and $(x,a)\in\cX\times\cA$, it holds that $r(x,a)\in[0,R]$.
\end{assumption}

\begin{assumption}[Partial coverage coefficient \citep{zhan2023provable}]\label{as: coverage}
    Given a policy $\pi\in\Pi$, the coverage coefficient of the offline dataset distribution $\mu_{\cD}$ w.r.t. reward model class $\cR$, policy $\pi$, and the baseline policy $\pi^{\mathrm{base}}$, denoted by $C_{\mu_{\cD}}(\cR; \pi, \pi^{\mathrm{base}})$, is defined as 
    \begin{align}
        \max\left\{0, \sup_{r\in\cR}\frac{\mathbb{E}_{x\sim d_0,a^1\sim \pi(\cdot|x),a^0\sim \pi^{\mathrm{base}}(\cdot|x)}\big[(r^{\star}(x,a^1) -r^{\star}(x,a^0)) - (r(x,a^1) -r(x,a^0))\big]}{\sqrt{\mathbb{E}_{(x,a^1,a^0)\sim \mu_{\cD}}\left[\big|(r^{\star}(x,a^1) -r^{\star}(x,a^0)) - (r(x,a^1) -r(x,a^0))\big|^2\right]}}\right\}.
    \end{align}
    We assume that $C_{\mu_{\cD}}(\cR; \pi, \pi^{\mathrm{base}})<+\infty$ for the policy $\pi$ to compete. 
\end{assumption}

Assumption~\ref{as: regularity} is standard in sample complexity analysis \citep{zhu2023principled, zhan2023provable, ye2024theoretical}. 
Assumption~\ref{as: coverage} characterizes how well the dataset $\cD$ covers the policy $\pi$ to compete. 
Intuitively, this can be understood from the fact that $C_{\mu_{\cD}}(\cR; \pi, \pi^{\mathrm{base}})$ is upper bounded by the density ratio 
$$C_{\mu_{\cD}}(\cR; \pi, \pi^{\mathrm{base}})\leq \left\|\frac{d_0\otimes \pi\otimes \pi^{\mathrm{base}}}{\mu_{\cD}}\right\|_{\infty} = \sup_{(x,a^1,a^0)\in\cS\times\cA\times\cA}\frac{d_0(x)\pi(a^1|x)\pi^{\mathrm{base}}(a^0|x)}{\mu_{\cD}(x,a^1,a^0)}.$$
In order for Algorithm~\ref{alg0} to achieve provable sample efficiency, we only require that $\cD$ covers the target policy $\pi$, a weak partial coverage style assumption for the sample complexity analysis \citep{zhu2023principled, zhan2023provable, xiong2023gibbs}.
To illustrate it, when calling back to Figure~\ref{fig: 1}, the data distribution therein well covers those nearly optimal responses under $r^{\star}$, but does not sufficiently cover the responses with low $r^{\star}$.

Under such a partial coverage data condition,
however, the human preference of responses $a\in\cA$ that are not well covered by the dataset $\cD$ can be poorly estimated, misguiding the policy $\widehat{\pi}$ to behave suboptimally if it is overoptimized (recall Figure~\ref{fig: 1}).
Fortunately, the following theorem shows that Algorithm~\ref{alg0} provably mitigates the overoptimization issue and achieves a finite-sample convergence of the suboptimality gap \eqref{eq: gap}.

\begin{theorem}[Suboptimality of Algorithm~\ref{alg0}]\label{thm:main}
    Taking the policy class $\Pi$ as \eqref{eq: policy class}, supposing that Assumptions \ref{as: regularity} and \ref{as: coverage} hold, and assuming that the reward model class $\cR$ has a finite $\varepsilon$-covering number under $\|\cdot\|_{\infty}$-norm $\cN_{\varepsilon}(\cR,\|\cdot\|_{\infty})<+\infty$ with $\varepsilon = (6\cdot(1+e^R)\cdot N)^{-1}$.
    Setting 
    \begin{align*}
       \eta =  \frac{1}{(1+\exp(R))^{2}}\cdot\sqrt{\frac{24\cdot\log\left(\cN_{\varepsilon}(\cR,\|\cdot\|_{\infty})/\delta\right)}{N}},\quad \beta = \frac{1}{\sqrt{N}}
    \end{align*} 
    in Algorithm~\ref{alg0}. 
    Then the output policy $\widehat \pi$ of Algorithm~\ref{alg0} satisfies that with probability at least $1-\delta$,
    \begin{align}
        \mathrm{Gap}^\pi(\widehat{\pi})\leq \frac{\sqrt{6}\cdot\big(1+\exp(R)\big)^2\cdot \Big(\big(C_{\mu_{\cD}}(\cR; \pi, \pi^{\mathrm{base}})\big)^2+1\Big) \cdot \iota + 4 \cdot \mathbb{E}_{x\sim d_0}\Big[\mathrm{KL}\big(\pi(\cdot|x)\|\pi^{\mathrm{ref}}(\cdot|x)\big) \Big]}{4\sqrt{N}},
    \end{align}
    where $\iota =  \sqrt{\log\left(\cN_{\varepsilon}(\cR,\|\cdot\|_{\infty})/\delta\right)}$ with $\varepsilon = (6\cdot(1+e^R)\cdot N)^{-1}$.
    Here, $N$ denotes the number of preference pairs in $\cD$, $R$ denotes the upper bound of the reward models, and the partial coverage coefficient $C_{\mu_{\cD}}(\cR; \pi, \pi^{\mathrm{base}})$ is defined in Assumption~\ref{as: coverage}.
\end{theorem}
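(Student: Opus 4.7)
I plan to reduce $J(\pi)-J(\widehat\pi)$ to the excess of the inner maximin functional evaluated at the truth $r^*$ versus the adversarial reward for $\pi$, then bound that excess by combining Assumption~\ref{as: coverage} with a fast-rate Bradley--Terry MLE inequality, traded off by a Young/AM--GM step. Abbreviate $F(\pi, r) := \eta\cdot\mathbb{E}_{x\sim d_0, a^1\sim\pi, a^0\sim\pi^{\mathrm{base}}}[r(x,a^1)-r(x,a^0)-\beta\,\mathrm{KL}(\pi(\cdot|x)\|\pi^{\mathrm{ref}}(\cdot|x))] + \cL_\cD(r)$, so that $\widehat\pi\in\argmax_\pi\min_r F(\pi,r)$; write $\widehat r := \argmin_r F(\widehat\pi, r)$ and $\widehat r_\pi := \argmin_r F(\pi, r)$. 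Since $d_0$ and the baseline response $a^0\sim\pi^{\mathrm{base}}$ are common to both value expectations, $J(\pi)-J(\widehat\pi)$ equals the difference of the preference gaps under $r^*$. Multiplying by $\eta$ and inserting $\pm\cL_\cD(r^*)$ and both KL regularizers yields
\begin{align*}
\eta(J(\pi)-J(\widehat\pi)) = F(\pi, r^*) - F(\widehat\pi, r^*) + \eta\beta\,\mathbb{E}_{x\sim d_0}\big[\mathrm{KL}(\pi(\cdot|x)\|\pi^{\mathrm{ref}}(\cdot|x)) - \mathrm{KL}(\widehat\pi(\cdot|x)\|\pi^{\mathrm{ref}}(\cdot|x))\big].
\end{align*}
Dropping the non-negative $\mathrm{KL}(\widehat\pi\|\pi^{\mathrm{ref}})$ term, then chaining $F(\widehat\pi, r^*)\geq F(\widehat\pi, \widehat r)$ (inner minimality of $\widehat r$) and $F(\widehat\pi, \widehat r) = \max_{\pi'}\min_r F(\pi', r) \geq F(\pi, \widehat r_\pi)$ (maximin optimality of $\widehat\pi$), gives
\begin{align*}
\eta(J(\pi)-J(\widehat\pi)) \leq F(\pi, r^*) - F(\pi, \widehat r_\pi) + \eta\beta\,\mathbb{E}_{x\sim d_0}[\mathrm{KL}(\pi(\cdot|x)\|\pi^{\mathrm{ref}}(\cdot|x))].
\end{align*}
Because the KL in $F(\pi,\cdot)$ is independent of the reward argument, it cancels inside $F(\pi, r^*) - F(\pi, \widehat r_\pi)$ and leaves $\eta\,G + (\cL_\cD(r^*) - \cL_\cD(\widehat r_\pi))$ with $G := \mathbb{E}_{x\sim d_0, a^1\sim\pi, a^0\sim\pi^{\mathrm{base}}}[(r^* - \widehat r_\pi)(x,a^1) - (r^* - \widehat r_\pi)(x,a^0)]$.

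\textbf{Coverage, fast-rate MLE, and Young.} Write $\Delta r(x,a^1,a^0) := r(x,a^1) - r(x,a^0)$ and let $u := \big(\mathbb{E}_{\mu_\cD}[|\Delta \widehat r_\pi - \Delta r^*|^2]\big)^{1/2}$. Assumption~\ref{as: coverage} yields $G \leq C_{\mu_\cD}(\cR;\pi,\pi^{\mathrm{base}})\cdot u$. Separately I plan to establish, by a Bernstein-type concentration for the per-sample Bradley--Terry log-likelihood ratio followed by a standard $\varepsilon$-net of $\cR$ at scale $\varepsilon = (6(1+e^R)N)^{-1}$, the lower-direction (``fast-rate'') MLE inequality: with probability $\geq 1-\delta$, uniformly in $r\in\cR$,
\begin{align*}
\cL_\cD(r) - \cL_\cD(r^*) \;\geq\; \frac{c_1}{(1+e^R)^{2}}\,\mathbb{E}_{\mu_\cD}\big[|\Delta r - \Delta r^*|^{2}\big] \;-\; c_2\cdot \frac{\log(\cN_\varepsilon(\cR,\|\cdot\|_\infty)/\delta)}{N}.
\end{align*}
Applied at $r = \widehat r_\pi$ this gives $\cL_\cD(r^*) - \cL_\cD(\widehat r_\pi) \leq -c_1(1+e^R)^{-2}u^2 + c_2\log(\cN_\varepsilon(\cR,\|\cdot\|_\infty)/\delta)/N$, whence
\begin{align*}
F(\pi, r^*) - F(\pi, \widehat r_\pi) \leq \eta\, C_{\mu_\cD}(\cR;\pi,\pi^{\mathrm{base}})\cdot u - \frac{c_1}{(1+e^R)^{2}}\,u^2 + c_2\cdot \frac{\log(\cN_\varepsilon(\cR,\|\cdot\|_\infty)/\delta)}{N}.
\end{align*}
Maximizing this concave quadratic in $u\geq 0$ (the Young step) bounds the right-hand side by $\tfrac{(1+e^R)^2}{4c_1}\,\eta^2\, C_{\mu_\cD}(\cR;\pi,\pi^{\mathrm{base}})^2 + c_2\log(\cN_\varepsilon(\cR,\|\cdot\|_\infty)/\delta)/N$. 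Dividing by $\eta$ and inserting the stated $\eta \asymp (1+e^R)^{-2}\sqrt{\log(\cN_\varepsilon(\cR,\|\cdot\|_\infty)/\delta)/N}$ (which balances the two $u$-terms) together with $\beta = 1/\sqrt N$ for the residual KL contribution recovers precisely the $(1+e^R)^2(C_{\mu_\cD}^2 + 1)\iota/\sqrt N + \mathbb{E}[\mathrm{KL}(\pi\|\pi^{\mathrm{ref}})]/\sqrt N$ rate in the theorem, with the numerical $\sqrt 6/4$ absorbed in $c_1, c_2$.

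\textbf{Main obstacle.} The crucial nonstandard ingredient is the lower-direction MLE inequality above. The usual upper-direction bound $\mathbb{E}_{\mu_\cD}[|\Delta r - \Delta r^*|^2] \lesssim (1+e^R)^2\big(\cL_\cD(r) - \cL_\cD(r^*) + \log(\cN_\varepsilon(\cR,\|\cdot\|_\infty)/\delta)/N\big)$ used in prior offline-RLHF analyses would only yield an $N^{-1/4}$ rate after Cauchy--Schwarz and carry a factor of $C_{\mu_\cD}$ rather than $C_{\mu_\cD}^2$. To obtain the reverse direction I will apply Bernstein's inequality to the per-sample log-likelihood ratio $\log\mathbb{P}_r(y|x,a^1,a^0) - \log\mathbb{P}_{r^*}(y|x,a^1,a^0)$: its range is $O(R)$ and, by direct computation on the Bradley--Terry sigmoid, its conditional variance is dominated by the conditional KL divergence $\mathrm{KL}(\mathbb{P}_{r^*}\|\mathbb{P}_r)$, which is itself comparable to $(1+e^R)^{-2}|\Delta r - \Delta r^*|^2$. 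An AM--GM step absorbs the $\sqrt{\mathrm{Var}\cdot\log(\cN_\varepsilon/\delta)/N}$ cross term into half of the variance, and a union bound over an $\varepsilon$-net of $\cR$ at scale $\varepsilon = (6(1+e^R)N)^{-1}$ (with Lipschitz control of both sides in $r$) upgrades the pointwise bound to the required uniform statement.
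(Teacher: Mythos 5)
Your proposal is correct in outline and is, for the most part, the paper's own argument in different notation. The chain $F(\widehat\pi,r^{\star})\ge \min_{r}F(\widehat\pi,r)=\max_{\pi'}\min_{r}F(\pi',r)\ge\min_{r}F(\pi,r)$ is exactly the paper's Term~(A) (maximin optimality of $\widehat\pi$) combined with Term~(B) (feasibility $r^{\star}\in\cR$), your residual KL difference is its Term~(C), and the coverage bound followed by the concave-quadratic step $au-bu^{2}\le a^{2}/(4b)$ and the stated $\eta,\beta$ is identical. The one place you genuinely depart is the uniform fast-rate MLE inequality, which you correctly identify as the crux. The paper does not prove it via Bernstein on the log-likelihood ratio: it imports a Hellinger-type concentration, $\cL_{\cD}(r^{\star})-\cL_{\cD}(r)\le -2\,\mathbb{E}_{\mu_{\cD}}[D^{2}_{\mathrm{Hellinger}}]+\tfrac{3}{N}\log(\cN_{\varepsilon}/\delta)$ (Lemma~\ref{lem: concentration}, where the $\varepsilon$-net and approximation error are handled), and then converts Hellinger $\to$ TV $\to$ reward difference via the sigmoid derivative lower bound $\sigma'\ge(1+e^{R})^{-2}$, yielding an effective curvature constant $2(1+e^{R})^{-4}$ in front of $\mathbb{E}_{\mu_{\cD}}[|\Delta r-\Delta r^{\star}|^{2}]$ rather than your claimed $c_{1}(1+e^{R})^{-2}$. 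Your Bernstein route is a legitimate self-contained alternative, but the variance-domination step is where you should be careful: for a log-likelihood ratio bounded by $B\approx R$ one only has $\mathrm{Var}\lesssim e^{B}\cdot\mathrm{KL}$, not $\mathrm{Var}\lesssim\mathrm{KL}$ with an absolute constant, so after the AM--GM absorption your $c_{1}$ degrades with $R$; as written, your bound would come out strictly \emph{better} in its $R$-dependence than the theorem's $(1+\exp(R))^{2}\big(C_{\mu_{\cD}}^{2}+1\big)$ prefactor, which is a sign the claimed curvature is optimistic. This affects only the explicit constants, not the $\widetilde{\cO}\big(C_{\mu_{\cD}}^{2}\sqrt{\log(\cN_{\varepsilon}/\delta)/N}\big)$ rate, so either route delivers the stated conclusion up to constants.
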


\begin{proof}[Proof of Theorem~\ref{thm:main}]
    Please refer to Appendix~\ref{subsec: proof main} for a detailed proof of Theorem~\ref{thm:main}.
\end{proof}

\begin{remark}[Choice of the baseline policy $\pi^{\mathrm{base}}$]
    As is indicated by Assumption~\ref{as: coverage}, the least requirement is that $\pi^{\mathrm{base}}$ can be covered by the offline data distribution.
    For example, we can take $\pi^{\mathrm{base}}$ as the distribution of the preferred responses in the dataset.
    In this case, the SFT loss in the RPO objective explicitly regularizes the LLM to imitate the preferred responses.
    We choose this baseline policy in our experiments (Section~\ref{sec: experiment}). \label{remark:baseline}
\end{remark}

\subsection{Equivalence between Maximin and Minimax Objectives}\label{subsec: minimax}

In this section, we formally show that under certain regularity assumptions, the theoretical target (maximin objective \eqref{eq: target max min}) and the target for practical algorithm design (minimax objective \eqref{eq: target min max}) are equivalent.
This can naturally extend the sample complexity of Algorithm~\ref{alg0} (Section~\ref{subsec: sample complexity}) to that of minimax-based algorithms in Section~\ref{sec: practical algorithm}, providing the theoretical guarantee for our practical algorithm design (RPO).

First, for notational simplicity, we denote the optimization target we investigate in Sections~\ref{sec: theory-motivated objective} and \ref{sec: practical algorithm} as 
\begin{align}
    \phi(\pi, r):= \eta\cdot\mathbb{E}_{x\sim d_0, a^1\sim\pi(\cdot|x), a^0\sim \pi^{\mathrm{ref}}(\cdot|x)}\Big[r(x,a^1) - r(x,a^0) -\beta\cdot D_{\mathrm{KL}}\big(\pi(\cdot|x)\|\pi^{\mathrm{ref}}(\cdot|x)\big)\Big] +  {\cL}_{\cD}(r),\label{eq: phi theory}
\end{align}
for any $(\pi,r)\in\Pi\times\cR$.
Our result relies on the following assumptions on the reward model class $\cR$.
\begin{assumption}[Regularity of reward model class]\label{ass: regularity}
    We assume the following things on the reward model class $\cR$: (i) the space $\mathcal{R}$ is a compact topological space; (ii) the function $\phi$ in \eqref{eq: phi theory} is convex-like on $\mathcal{R}$, that is, for any $r_1,r_2\in\mathcal{R}$ and  $\alpha\in[0,1]$, there exists $r_3\in\mathcal{R}$ such that 
    \begin{align}
        \phi(\pi,r_3)\leq \alpha\cdot\phi(\pi,r_1)+(1-\alpha)\cdot\phi(\pi,r_2),\quad \forall \pi\in\Pi,\label{eq: convex-like phi}
    \end{align}
\end{assumption}

We note that by the definition, the convex-like property \eqref{eq: convex-like phi} not only depends on the function $\phi$, but also depends on the reward model class $\mathcal{R}$. 
As a special case, if $\mathcal{R}$ is convex, e.g., a linear model class \citep{zhu2023principled, xiong2023gibbs, zhu2024iterative} or more general the Lipschitz continuous model class $\cR$, we can directly obtain that the function $\phi(\pi,\cdot)$ is \emph{convex} over $\mathcal{R}$ (since the dependence on $r\in\cR$ is linear terms plus a convex loss $\cL_{\cD}$ of $r\in\cR$), which implies the convex-like property \eqref{eq: convex-like phi}.

Under Assumption~\ref{ass: regularity}, it holds that (Lemma~\ref{prop: equivalence})
\begin{align}
    \max_{\pi\in\Pi}\,\min_{r\in \mathcal{R}}\phi(\pi,r) = \min_{r\in \mathcal{R}}\,\max_{\pi\in\Pi}\phi(\pi,r).\label{eq: minimax}
\end{align}
Furthermore, thanks to the KL-divergence regularization in $\phi$ which intuitively makes $\phi$ ``strongly concave" over the policy $\pi$, \eqref{eq: minimax} can gives us the following stronger result.

\begin{theorem}[Formal statement of Theorem~\ref{prop: equivalence informal}]\label{thm: equivalence formal}
    For the policy class $\Pi$ defined in \eqref{eq: policy class theory} and the reward model class $\cR$ satisfying Assumption~\ref{ass: regularity}, consider the following policy defined as
    \begin{align}   \pi_{\widehat{r}}\in\argmax_{\pi\in\Pi}\phi(\widehat{r},\pi),\quad where\quad  \widehat{r} \in \argmin_{r\in \cR}\,\max_{\pi\in\Pi}\phi(\pi, r).\label{eq: equivalent formal 1}
    \end{align}
    Then the policy $\pi_{\widehat{r}}$ also satisfies the maximin objective \eqref{eq: target max min} of Algorithm~\ref{alg0}, that is, 
    \begin{align}
        \pi_{\widehat{r}} \in \argmax_{\pi\in\Pi}\,\min_{r\in\cR} \phi(\pi,r).
    \end{align}
\end{theorem}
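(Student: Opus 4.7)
The plan is a concise saddle-point argument driven by two ingredients: the minimax equality supplied by Lemma~\ref{prop: equivalence} (whose hypotheses are guaranteed by Assumption~\ref{ass: regularity}), and the strict concavity of $\phi(\cdot, r)$ in $\pi$ arising from the $-\beta\cdot\mathrm{KL}(\pi\|\pi^{\mathrm{ref}})$ regularizer inside $\phi$.

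First I would invoke Lemma~\ref{prop: equivalence} to obtain
\[
V^\star \;:=\; \max_{\pi\in\Pi}\min_{r\in\cR}\phi(\pi,r) \;=\; \min_{r\in\cR}\max_{\pi\in\Pi}\phi(\pi,r),
\]
with both sides attained. Let $\pi^\star$ denote any maximin optimizer (existence comes from the same regularity considerations used in Lemma~\ref{prop: equivalence}), and recall that $\widehat{r}$ is the given minimax optimizer, so $\min_{r}\phi(\pi^\star, r) = \max_{\pi}\phi(\pi, \widehat r) = V^\star$. I would then chain the definitions through the pair $(\pi^\star, \widehat r)$:
\[
V^\star \;=\; \min_{r\in\cR}\phi(\pi^\star, r) \;\leq\; \phi(\pi^\star, \widehat r) \;\leq\; \max_{\pi\in\Pi}\phi(\pi, \widehat r) \;=\; V^\star,
\]
which forces every inequality to be an equality. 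In particular $\phi(\pi^\star, \widehat r) = \max_{\pi}\phi(\pi, \widehat r)$, so $\pi^\star \in \argmax_{\pi\in\Pi}\phi(\pi, \widehat r)$.

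The final step exploits strict concavity. On $\Pi$, the map $\pi\mapsto \phi(\pi, \widehat r)$ decomposes into a linear functional in $\pi$ (the reward expectations) plus $-\eta\beta\cdot\mathbb{E}_{x\sim d_0}[\mathrm{KL}(\pi(\cdot|x)\|\pi^{\mathrm{ref}}(\cdot|x))]$, which is strictly concave in $\pi$. Hence its maximizer is unique, and by Lemma~\ref{lem: optimal policy} that unique maximizer is precisely $\pi_{\widehat r}$. Combining with the previous paragraph yields $\pi^\star = \pi_{\widehat r}$, so $\pi_{\widehat r}$ itself is a maximin optimizer, which is the claim.

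The main obstacle is the first step, namely verifying the hypotheses of the minimax theorem on the policy side: one needs a topology on $\Pi$ under which $\phi(\cdot, r)$ is upper semicontinuous and the maximin value is attained. This is absorbed into Lemma~\ref{prop: equivalence} and leans on compactness of $\cA$, the uniform boundedness of rewards in $\cR$, and standard continuity/concavity properties of the KL functional. Once strong duality is in place, the remaining steps (the two-sided chain of inequalities and the strict-concavity uniqueness argument) are essentially bookkeeping and do not require further structural assumptions beyond what is already granted.
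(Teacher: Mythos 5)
Your proposal is correct and follows essentially the same route as the paper: establish via Lemma~\ref{prop: equivalence} that the duality gap at $(\widehat{\pi},\widehat{r})$ vanishes (your chain of inequalities is the paper's zero-duality-gap decomposition), conclude that a maximin optimizer also maximizes $\phi(\cdot,\widehat{r})$, and then identify it with $\pi_{\widehat{r}}$ by uniqueness of the KL-regularized maximizer. The only point to tighten is that the maximizer of $\pi\mapsto\phi(\pi,\widehat{r})$ is unique only on the support of $d_0$, so one must add (as the paper does via Lemma~\ref{lem: unique pi}) that $\phi$, and hence membership in the maximin $\argmax$, depends on $\pi$ only through its values on that support.
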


\begin{proof}[Proof of Theorem~\ref{thm: equivalence formal}]
    Please refer for Appendix~\ref{subsec: proof equivalence formal} for a detailed proof of Proposition~\ref{thm: equivalence formal}.
\end{proof}

Theorem~\ref{thm: equivalence formal} shows that the optimal KL-regularized policy associated with the reward model solving the minimax objective \eqref{eq: target max min} also solves the maximin target for the policy (i.e., objective \eqref{eq: target min max} of Algorithm~\ref{alg0}). 
This further allows us to extend our theoretical guarantee of Algorithm~\ref{alg0} (Section~\ref{subsec: sample complexity}) to that of minimax-based algorithms,  justifying our practical algorithm design in Section~\ref{sec: practical algorithm}.

\begin{corollary}[Suboptimality of minimax-based algorithm]\label{cor: subopt rdpo}
    Take the policy class $\Pi$ in \eqref{eq: policy class theory} and the reward model class satisfying Assumption~\ref{ass: regularity}.
    Given any given policy $\pi$ to compete, if Assumption~\ref{as: coverage} holds for $\pi$, then under the same choice of $\eta$ and $\beta$ as in Theorem~\ref{thm:main}, 
    the policy $\pi_{\widehat{r}}$ defined in \eqref{eq: equivalent formal 1} satisfies that 
    \begin{align}
        \mathrm{Gap}^{\pi}(\pi_{\widehat{r}}) \leq \widetilde{\cO}\left(1/\sqrt{N}\right).
    \end{align}
    with probability at least $1-\delta$, where $\widetilde{\cO}(\cdot)$ hides the same factors as shown in Theorem~\ref{thm:main} for Algorithm~\ref{alg0}.
\end{corollary}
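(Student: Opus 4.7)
The plan is to reduce Corollary~\ref{cor: subopt rdpo} to a direct consequence of Theorem~\ref{thm: equivalence formal} together with Theorem~\ref{thm:main}. The only thing the corollary really claims is that one can ``transport'' the finite-sample guarantee that was established for the theoretical maximin algorithm (Algorithm~\ref{alg0}) to the minimax-based procedure whose optimal KL-regularized policy is $\pi_{\widehat r}$. Since Theorem~\ref{thm: equivalence formal} already provides the bridge, the whole proof should be essentially a two-line argument.

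First, I invoke Theorem~\ref{thm: equivalence formal}. Its hypotheses are exactly what the corollary assumes: the policy class $\Pi$ is taken as in \eqref{eq: policy class theory}, and the reward class $\cR$ satisfies the regularity condition Assumption~\ref{ass: regularity}. The conclusion is that the policy
\begin{align*}
    \pi_{\widehat r}\in\argmax_{\pi\in\Pi}\phi(\widehat r,\pi),\qquad \widehat r\in\argmin_{r\in\cR}\,\max_{\pi\in\Pi}\phi(\pi,r),
\end{align*}
also solves the maximin program $\max_{\pi\in\Pi}\min_{r\in\cR}\phi(\pi,r)$, i.e., it is a valid output of Algorithm~\ref{alg0}. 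Here the KL-regularization in $\phi$ plays its advertised role of giving strict concavity in $\pi$, so that ``solves the inner problem for the adversarial reward'' already suffices to identify $\pi_{\widehat r}$ with an $\argmax$ of the maximin.

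Second, once $\pi_{\widehat r}$ has been identified as a legitimate output of Algorithm~\ref{alg0}, I apply Theorem~\ref{thm:main} verbatim. The choices of $\eta$ and $\beta$ in the corollary statement are exactly those prescribed in Theorem~\ref{thm:main}; Assumption~\ref{as: regularity} on the true reward (realizability plus boundedness) is implicitly carried over as part of the overall setup, and Assumption~\ref{as: coverage} is assumed for the target comparator $\pi$. Theorem~\ref{thm:main} then yields, with probability at least $1-\delta$,
\begin{align*}
\mathrm{Gap}^{\pi}(\pi_{\widehat r}) \leq \frac{\sqrt{6}\cdot(1+e^R)^2\cdot\big((C_{\mu_{\cD}}(\cR;\pi,\pi^{\mathrm{base}}))^2+1\big)\cdot\iota + 4\cdot\EE_{x\sim d_0}[\mathrm{KL}(\pi(\cdot|x)\|\pi^{\mathrm{ref}}(\cdot|x))]}{4\sqrt{N}},
\end{align*}
which collapses into the announced $\widetilde{\cO}(1/\sqrt N)$ rate after absorbing $R$, the covering number $\log\cN_{\varepsilon}(\cR,\|\cdot\|_\infty)$, and $C_{\mu_{\cD}}(\cR;\pi,\pi^{\mathrm{base}})$ into the $\widetilde{\cO}$ notation.

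The only mild subtlety, and arguably the main thing that needs care, is the transfer step itself: Theorem~\ref{thm: equivalence formal} must deliver not just equality of the maximin and minimax values, but actually that the particular policy $\pi_{\widehat r}$ constructed from the minimax side lies in the $\argmax$ of the maximin side. This is precisely the strengthening provided by the strict concavity induced by the KL term, and it is already taken care of in the statement of Theorem~\ref{thm: equivalence formal}; thus no additional technical work is needed here, and the corollary follows.
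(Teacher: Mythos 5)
Your proposal is correct and follows exactly the route the paper takes: the paper's own proof of Corollary~\ref{cor: subopt rdpo} is a one-line observation that it is a direct consequence of Theorem~\ref{thm: equivalence formal} (identifying $\pi_{\widehat{r}}$ as a valid output of Algorithm~\ref{alg0}) combined with Theorem~\ref{thm:main}. Your write-up simply spells out this two-step transfer in more detail, including the correct remark that the equivalence theorem must yield membership of $\pi_{\widehat{r}}$ in the maximin $\argmax$ rather than mere equality of optimal values.
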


\begin{proof}[Proof of Corollary~\ref{cor: subopt rdpo}]
    This is a direct corollary of Theorem~\ref{thm:main} and Theorem~\ref{thm: equivalence formal}.
\end{proof}

\subsection{Generalization to New Prompt Distributions}\label{subsec: generalization}

In this section, we further generalize our previous theoretical analysis to guarantee the performance of the proposed algorithm on new prompt distributions different from the data or training distribution. 
Specifically, we would like to consider a new prompt distribution $d_1(\cdot)\in\Delta(\cX)$ that is different from $d_0$. 
This corresponds to testing the learned policy on a new set of prompts with a different distribution than the training prompts.

The following corollary demonstrates that, as long as the new prompt distribution $d_1$ is well covered by the distribution $d_0$, we still have a finite-sample convergence guarantee. 

\begin{corollary}[Generalization to new prompt distributions]\label{cor: context generalization}
    Under the same setups as in Corollary~\ref{cor: subopt rdpo}, we consider the policy to compete as the optimal policy $\pi^{\star}\in\Pi$, that is, $\pi^{\star} = \argmax_{\pi\in\Pi}\mathbb{E}_{a\sim \pi(\cdot|x)}[r^{\star}(x,a)]$ for any $x\in\cX$.
    Assume that the density ratio between the prompt distributions $d_1$ and $d_0$ are bounded, i.e., 
    \begin{align}
        C_{\infty}(d_0, d_1):=\sup_{x\in\cX}\,\frac{d_1(x)}{d_0(x)} < + \infty.\label{eq:coverage_x}
    \end{align}
    Then the following bound holds, for the policy $\pi_{\widehat{r}}$ (see Corollary~\ref{cor: subopt rdpo}), with probability at least $1-\delta$, 
    \begin{align}
        \mathbb{E}_{x\sim d_1, a\sim \pi^{\star}(\cdot|x)}[r^{\star}(x,a)] -  \mathbb{E}_{x\sim d_1, a\sim \pi_{\widehat{r}}(\cdot|x)}[r^{\star}(x,a)] \leq \widetilde{\cO}\left(\frac{C_{\infty}(d_0, d_1)}{\sqrt{N}}\right),
    \end{align}
    where $\widetilde{\cO}(\cdot)$ hides the same factors as shown in Theorem~\ref{thm:main} for Algorithm~\ref{alg0}.
\end{corollary}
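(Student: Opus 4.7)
}
The plan is to reduce the bound under the new prompt distribution $d_1$ to the bound under the training prompt distribution $d_0$ via a pointwise change-of-measure argument, and then invoke Corollary~\ref{cor: subopt rdpo}. The key observation that makes this clean is that $\pi^{\star}$ is defined as the \emph{pointwise} optimal policy: for every $x\in\cX$, $\pi^{\star}(\cdot|x)$ maximizes $\mathbb{E}_{a\sim\pi(\cdot|x)}[r^{\star}(x,a)]$ over $\pi\in\Pi$. Consequently, if we define the per-prompt suboptimality
\begin{align*}
    g(x) \;:=\; \mathbb{E}_{a\sim \pi^{\star}(\cdot|x)}[r^{\star}(x,a)] - \mathbb{E}_{a\sim \pi_{\widehat{r}}(\cdot|x)}[r^{\star}(x,a)],
\end{align*}
then $g(x)\geq 0$ for every $x\in\cX$. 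This non-negativity is the single crucial structural fact for the argument.

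Given pointwise non-negativity, I would first carry out the change of measure from $d_1$ to $d_0$, using the bounded density ratio assumption \eqref{eq:coverage_x}:
\begin{align*}
    \mathbb{E}_{x\sim d_1}[g(x)] \;=\; \mathbb{E}_{x\sim d_0}\!\left[\frac{d_1(x)}{d_0(x)}\,g(x)\right] \;\leq\; C_{\infty}(d_0,d_1)\cdot \mathbb{E}_{x\sim d_0}[g(x)] \;=\; C_{\infty}(d_0,d_1)\cdot \mathrm{Gap}^{\pi^{\star}}(\pi_{\widehat{r}}).
\end{align*}
Here $\mathrm{Gap}^{\pi^{\star}}(\pi_{\widehat{r}})$ is exactly the suboptimality gap defined in \eqref{eq: gap} but instantiated with the competing policy $\pi = \pi^{\star}$ under the training distribution $d_0$. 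The next step is simply to quote Corollary~\ref{cor: subopt rdpo} with the target policy taken to be $\pi^{\star}$, which requires Assumption~\ref{as: coverage} to hold for $\pi^{\star}$ (this is inherited from ``the same setups as in Corollary~\ref{cor: subopt rdpo}''). This yields $\mathrm{Gap}^{\pi^{\star}}(\pi_{\widehat{r}})\leq \widetilde{\cO}(1/\sqrt{N})$ with probability at least $1-\delta$, and multiplying by $C_{\infty}(d_0,d_1)$ gives the claimed $\widetilde{\cO}(C_{\infty}(d_0,d_1)/\sqrt{N})$ bound.

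There is really no significant obstacle here; the proof is essentially a one-line importance-weighting followed by an appeal to the already-established Corollary~\ref{cor: subopt rdpo}. The only point that deserves a moment of care is justifying that the change-of-measure inequality is valid in the correct direction, which is precisely where the pointwise optimality of $\pi^{\star}$ (and hence non-negativity of $g$) is used; had we instead competed against a policy that is only \emph{on average} optimal under $d_0$, the same argument would not go through since negative values of $g(x)$ on the support of $d_1\setminus d_0$-heavy regions could in principle be amplified. Everything else, including the absorption of the KL regularization term $\mathbb{E}_{x\sim d_0}[\mathrm{KL}(\pi^{\star}(\cdot|x)\|\pi^{\mathrm{ref}}(\cdot|x))]$ into the $\widetilde{\cO}(\cdot)$ notation, is inherited directly from Theorem~\ref{thm:main} through Corollary~\ref{cor: subopt rdpo}.
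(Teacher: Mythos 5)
Your proposal is correct and follows essentially the same route as the paper's proof: a change of measure from $d_1$ to $d_0$ using the bounded density ratio, justified by the pointwise non-negativity of the per-prompt gap $g(x)$ (which the paper phrases as Hölder's inequality followed by removing the absolute value via the pointwise optimality of $\pi^{\star}$), and then an appeal to Corollary~\ref{cor: subopt rdpo}. Your remark that pointwise optimality of $\pi^{\star}$ is the crucial structural fact is exactly the point the paper's proof also relies on.
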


\begin{proof}[Proof of Corollary~\ref{cor: context generalization}]
    Please refer to Appendix~\ref{sec: proof generalization} for a detailed proof of Corollary~\ref{cor: context generalization}.
\end{proof}

In our experiments (Section~\ref{sec: experiment}), we evaluate the performance of RPO-trained LLMs (trained using the Ultrafeedback Dataset) on two standard benchmarks, MT-Bench and AlpacaEval 2.0 (with a different prompt distribution than Ultrafeedback), which demonstrates the capability of RPO to adapt to new prompt distributions, as is indicated by the above Corollary~\ref{cor: context generalization}.

A key requirement of Corollary~\ref{cor: context generalization} is that the training prompt distribution well covers the new prompt distribution.
In the following, we use Principle Component Analysis (PCA) analysis to illustrate the prompt distribution of these two benchmarks as well as the training data.
We use \texttt{text-embedding-ada-002} provided by OpenAI to extract the embeddings of the prompts in the training dataset, MT-bench, and AlpacaEval 2.0, for the two models we train in experiments (the beta series and the gemma series).
Here, each embedding is a vector with a length of 1536. 
For each model, we use Singular Value Decomposition (SVD) on the matrix stacked by the embeddings of the training dataset and obtain the first two PCA axises $v_1$ and $v_2$. Here, unit vectors $v_1$ and $v_2$ have a length of  1536  and correspond to the first largest and the second largest singular value in SVD, respectively. Then, for each embedding $e$ in the training dataset, MT-Bench, and AlpacaEval 2.0, we use $(\langle e, v_1\rangle, \langle e, v_2\rangle)$ as the coordinate to draw a 2D scatter plot in Figures \ref{fig:pca-beta} and \ref{fig:pca_gema}. 
Results show that \eqref{eq:coverage_x} in Corollary \ref{cor: context generalization} approximately holds for both beta and gemma series on MT-Bench and AlpacaEval 2.0, which suggests the desired performance of RPO on these two benchmarks given by Corollary~\ref{cor: context generalization}. 

\begin{figure}[H]
\begin{minipage}{0.48\textwidth}\centering
       \includegraphics[width = 0.8\linewidth]{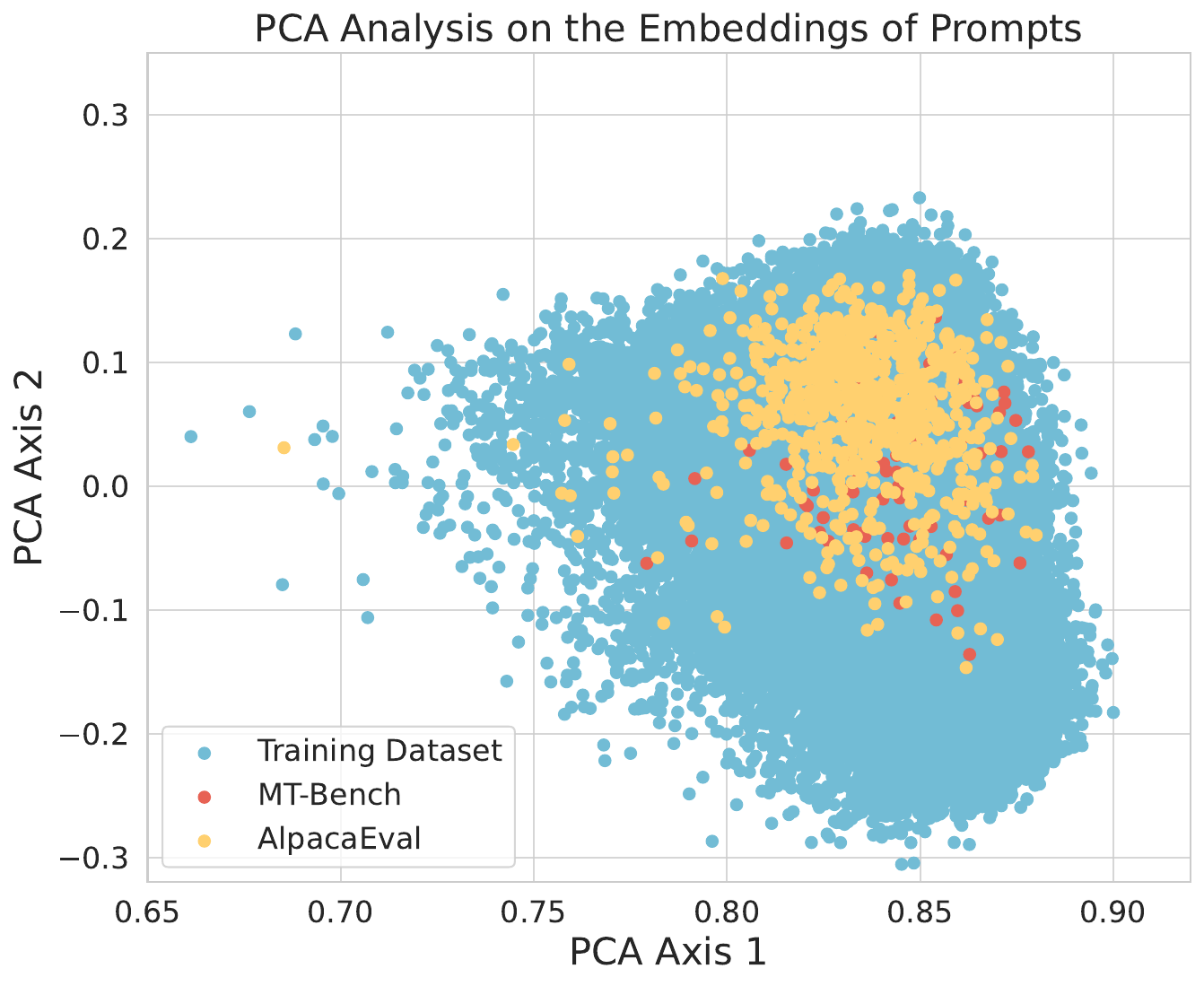}
    \caption{PCA of the embeddings of prompts in the training dataset of the beta series (Ultrafeedback), MT-Bench, and AlpacaEval 2.0. }\label{fig:pca-beta}
\end{minipage}\hspace{0.5cm}
\begin{minipage}{0.48\textwidth} 
\includegraphics[width = 0.8\linewidth]{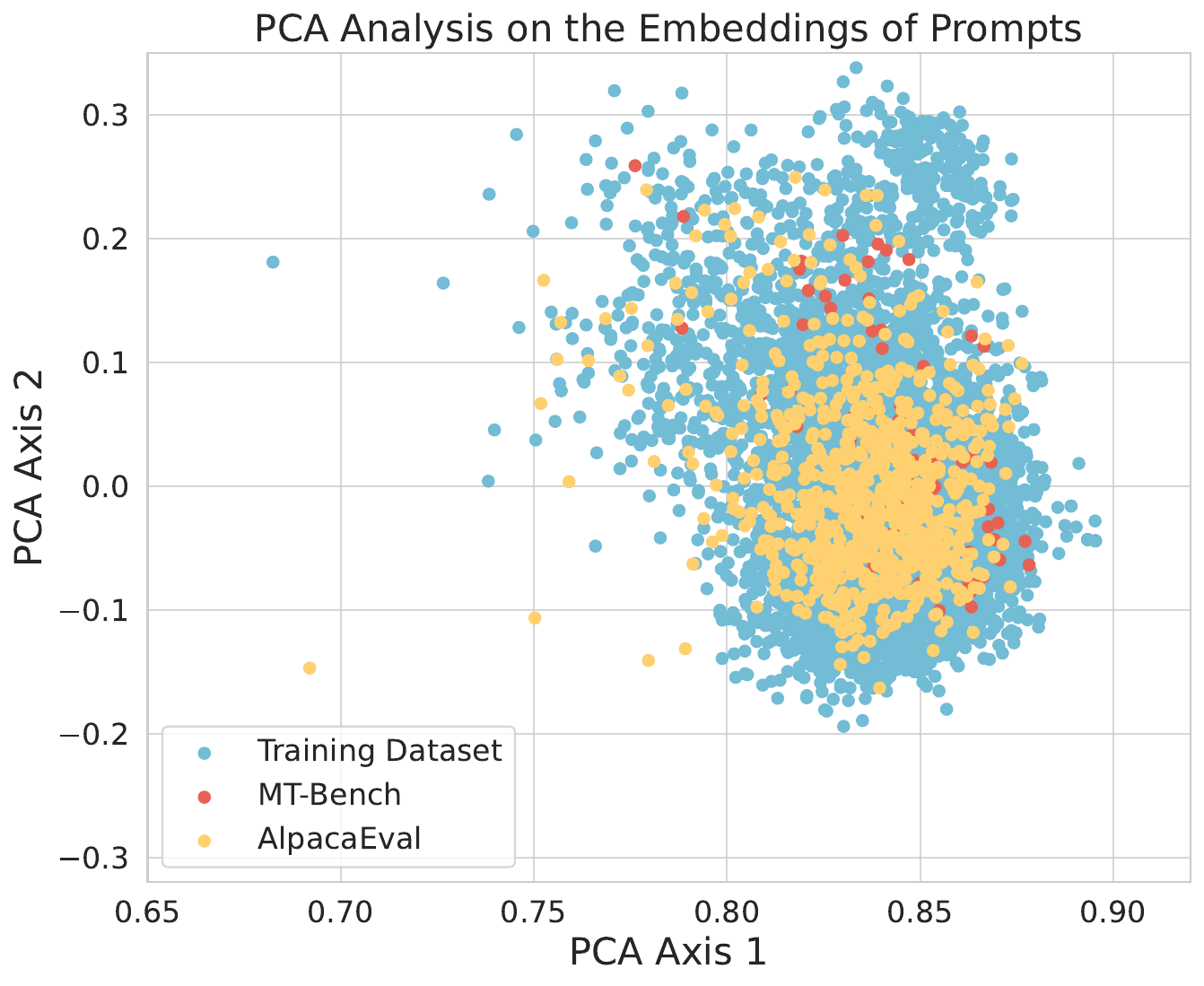}
       \caption{PCA of the embeddings of prompts in the training dataset of the gemma series (Argilla-DPO-Mix-7K), MT-Bench, and AlpacaEval 2.0. }\label{fig:pca_gema}
\end{minipage}
\end{figure}

\section{Experiments}\label{sec: experiment}
In this experiment section, we provide a detailed empirical analysis of RPO to highlight the following four key points: (i) RPO is a flexible plug-in module and can be applied to different reference models. (ii) RPO mitigates overoptimization in the training phase by giving more trust to the chosen responses in the preference dataset. (iii) As a justification of our theoretical analysis, RPO achieves better alignment performance than DPO in in-data distribution. (iv) RPO can also achieve consistently better performance in LLM benchmarks like MT-bench \citep{zheng2024judging} and AlpacaEval 2.0 \citep{dubois2024length}, which shows the potential of mitigating overoptimization for better generalization performance.

\paragraph{Experiment setup.} 
To show that RPO is a flexible plug-in module regardless of the reference model, we follow the training setup for two well-studied series of released chat models with around 7 billion parameters trained by DPO: \texttt{Zephyr}-\texttt{7b}-\texttt{beta} and \texttt{Zephyr}-\texttt{7b}-\texttt{gemma} \citep{tunstall2023zephyr} to implement RPO in beta and gemma series. Mirrored by their training configurations, we introduce how we select the reference model and the preference dataset for our training pipeline of these two series as follows. 
For the beta series, we use \texttt{mistral}-\texttt{7b}-\texttt{sft}-\texttt{beta} as the reference model $\pi^\text{ref}$. \texttt{mistral}-\texttt{7b}-\texttt{sft}-\texttt{beta} is a fine-tuned version of \texttt{Mistral-7b-v0.1} on the distilled version of the UltraChat dataset \citep{ding2023enhancing}, which contains approximately 200k examples of multi-turn
dialogues generated by GPT-3.5-TURBO. 
For the training preference dataset, we use the Ultrafeedback Dataset      \citep{cui2023ultrafeedback}, which consists of approximately 60k prompts. For the gemma series,  we use \texttt{zephyr}-\texttt{7b}-\texttt{gemma}-\texttt{sft}-\texttt{v0.1}  as our reference model $\pi^\text{ref}$. \texttt{zephyr}-\texttt{7b}-\texttt{gemma}-\texttt{sft}-\texttt{v0.1} is a fine-tuned version of \texttt{gemma-7b} on the Deita dataset \citep{liu2023makes}, which involves around 10k distilled SFT data.  
For the training preference dataset, we use the Argilla-DPO-Mix-7K Dataset \citep{dpo-mix-7k}, which is a mixture of multiple distilled public preference datasets.
For simplicity, we denote  Ref. (beta) as the reference model, DPO (beta) as the model trained by DPO, and RPO (beta) as the model trained by RPO, all for the beta series. We use the same notations for the gemma series. 

\paragraph{Practical implementation.}
According to Algorithm \ref{alg1} and as we discussed in Remark \ref{remark:baseline}, we implement RPO by adding an SFT loss (log probability of chosen responses in the preference dataset) to the original DPO loss. By comparing the evaluation performance on the test split of the training dataset, we select the hyperparameter $\eta$ as $0.005$ for both RPO (beta) and RPO (gemma). During the training of DPO and RPO, We keep the remaining hyperparameters including $\beta$, batch size, and learning rate to be the same for a fair comparison.  Please see Appendix \ref{app:training_details} for a detailed training configuration.

\paragraph{RPO alleviates overoptimization.} As mentioned in the introduction (Figure~\ref{fig: 1}), DPO is observed to have a significant and continual decrease in log probability on chosen responses \citep{hong2024orpo,rafailov2024r} during training and we regard it as the consequence of overoptimization. 
Implied by our theory, overoptimization could arise when the model maximizes its own proxy reward formed on the responses less covered by the data. 
Due to the overoptimization, the model tends to disprefer the chosen responses as they are away from the maximizers of the proxy reward despite that some chosen responses are highly preferred by humans. 
Consistent with our theoretical conclusion, we empirically find that RPO can indeed alleviate overoptimization in DPO. During the training phase of both beta and gemma series, we observe that the log probability given by the RPO-trained model is notably higher than that given by the DPO-trained model for the chosen responses, which are shown in Figures \ref{fig: 1}  and \ref{fig:training_stat_gemma}.  

\paragraph{RPO improves the alignment performance in in-data distribution.} For the in-data distribution evaluation, we select the 200 prompts (which are not used in the selection of paremeter $\eta$) in the test split of the training dataset to let the reference model, DPO, and RPO generate the response respectively. We choose GPT-4 to annotate the preference in the response pairs. Though we instruct GPT-4 to give an annotation among win, lose, and tie (please see the full prompt in Appendix \ref{app:eval}), GPT-4 might still provide undesired annotations. 
Therefore, we filter all the undesired annotations and collect 150 examples for evaluation. We report the pairwise win rate among Ref., RPO, and DPO in Table  \ref{tab:test} for both the beta and gemma series. To show a more illustrative comparison between DPO and RPO, we provide the barplot to report the number of pairwise examples annotated by GPT-4 in Figures \ref{fig:test_beta} and \ref{fig:test_gemma}. 
We observe that for both beta and gemma series, RPO has a better performance than DPO in terms of both RPO/DPO-SFT and RPO-DPO win rates.
The performance improvement matches our theoretical results in Corollary \ref{cor: subopt rdpo}, which shows the credit of the alleviation of overoptimization. 

\begin{table}[t]
    \centering
    \resizebox{1\textwidth}{!}{
    \begin{tabular}{cccc|cccc}    \toprule
       Win rate (\%) & RPO (beta) &  Ref. (beta)  & DPO (beta) &Win rate (\%) & RPO (gemma)  &  Ref. (gemma) & DPO (gemma)  \\
        \midrule
          RPO (beta) & 50.0 & \textcolor{blue}{\textbf{79.0}} &  \textcolor{blue}{\textbf{56.0}} & RPO (gemma)  & 50.0 & \textcolor{blue}{\textbf{71.7}}  &  \textcolor{blue}{\textbf{54.0}}\\
        Ref. (beta) & 21.0  & 50.0 & 22.7 & Ref. (gemma) & 28.3  & 50.0 & 32.7\\
        DPO (beta) & 44.0& 77.3  & 50.0 & DPO (gemma) &46.0 &  67.3 & 50.0 \\
        \bottomrule
    \end{tabular}}
    \caption{Pairwise win rate (left vs. right) among RPO-trained model, DPO-trained model, and the reference model. Annotated by GPT-4, evaluations of beta and gemma series are made on the 150 examples of the test split of the Ultrafeedback and the Argilla-DPO-Mix-7K dataset, respectively. }
    \label{tab:test}
\end{table}

\begin{figure*}[!t]
\vspace{-2mm}
\begin{minipage}{0.32\textwidth}\centering
       \includegraphics[width = \linewidth]{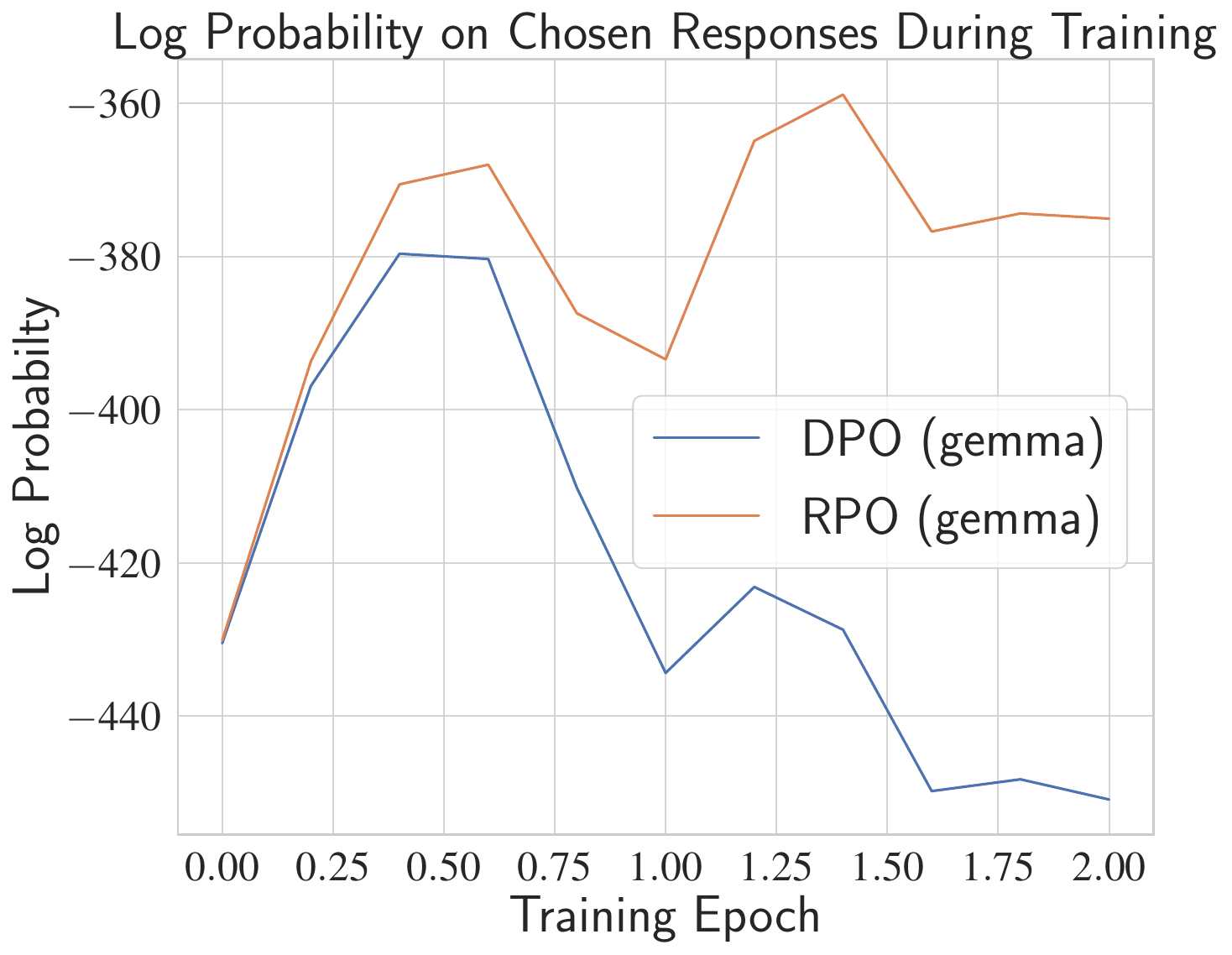}
       \caption{Log probability of chosen responses during the training of RPO (gemma) and DPO (gemma).}\label{fig:training_stat_gemma}
\end{minipage}\hspace{0.2cm}
\begin{minipage}{0.32\textwidth} 
\includegraphics[width = \linewidth]{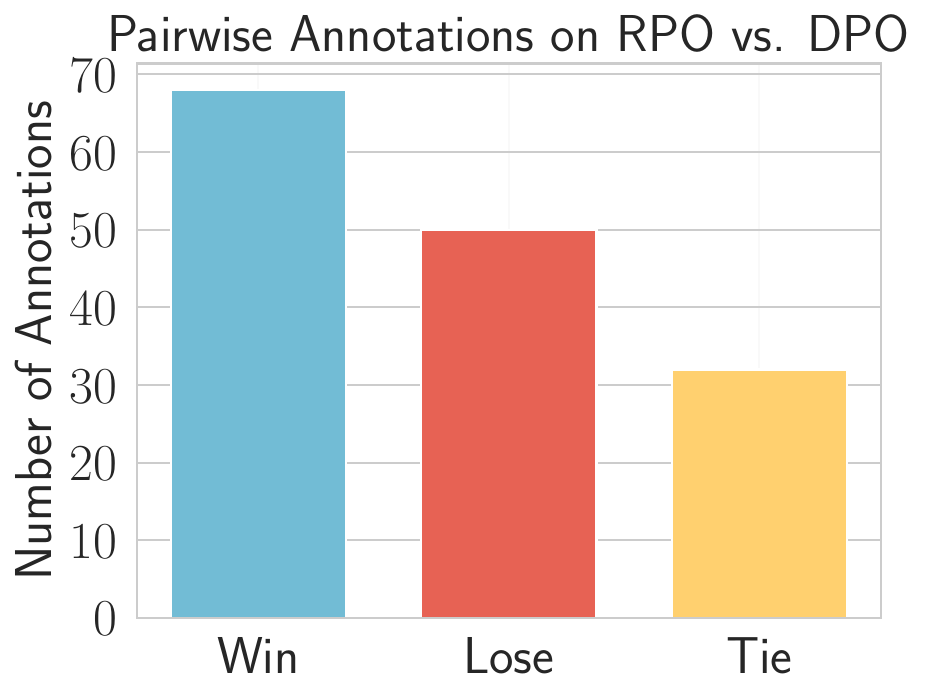}
       \caption{Pairwise annotations (by GPT-4) on RPO (beta) vs. DPO (beta) on the test split of the Ultrafeedback dataset.}\label{fig:test_beta}
\end{minipage}\hspace{0.2cm}
\begin{minipage}{0.32\textwidth}
\centering
       \includegraphics[width = \linewidth]{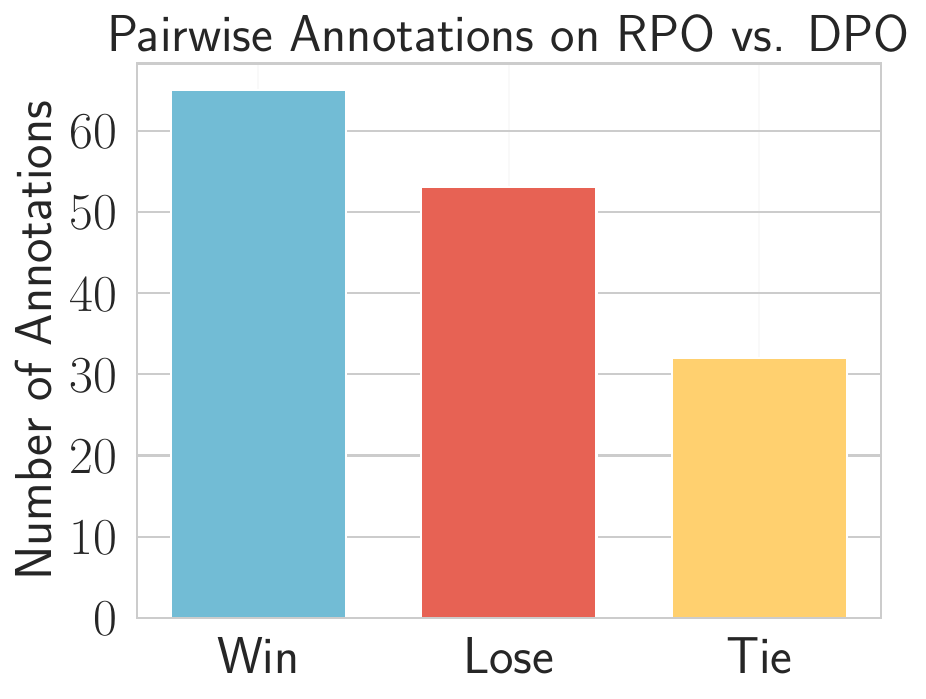}
       \caption{Pairwise annotations (by GPT-4) on RPO (gemma) vs. DPO (gemma) on the test split of the Argilla-DPO-Mix-7K dataset.}\label{fig:test_gemma}
\end{minipage}
\end{figure*}

\paragraph{RPO consistently improves the  benchmark performance.} We further evaluate the reference model, RPO-trained model, DPO-trained model, and the officially released DPO-trained model for both beta and gemma series in two standard LLM chat benchmarks: MT-bench \citep{zheng2024judging} and AlpacaEval 2.0 \citep{dubois2024length}.  MT-Bench is a multi-turn benchmark that contains 160 questions
across eight different domains of knowledge. 
The score for MT-Bench is evaluated by GPT-4 on a scale from 1 to 10. AlpacaEval 2.0 is a single-turn benchmark including 805 questions on different topics, mostly focused on helpfulness. The metrics of AlpacaEval 2.0 are the win rate and Length-Control (LC) win rate compared with GPT-4 Preview (11/06), where the annotator is also GPT-4 Preview (11/06) and LC win rate is proposed to mitigate the length bias of GPT-4. The results are summarized in Table \ref{tab:benchmark}, which shows that RPO consistently exceeds the performance of all the competitors (DPO, Reference model, and the officially released model trained by DPO) on MT-Bench and AlpacaEval 2.0. 
We also provide additional results on the pairwise win rate for these two benchmarks in Appendix \ref{app:more_results} to illustrate the performance improvement. The improvement can also be explained by the theoretical results in Corollary \ref{cor: context generalization}. 
Finally, we remark that RPO is a flexible plug-in module and can steadily improve the benchmark performance without changing the original training configuration or accessing extra preference data. 
This also sheds light on the potential of mitigating overoptimization for better alignment and generalization performance. 

\begin{table}[!t]
    \centering
    \resizebox{1\textwidth}{!}
    {
    \begin{tabular}{c  c  c c  cccc }
    \toprule
    \multirow{2}{*}{Model Name} & MT-Bench & \multicolumn{2}{c}{AlpacaEval 2.0} &\multirow{2}{*}{Model Name} & MT-Bench & \multicolumn{2}{c}{AlpacaEval 2.0}  \\
    \cline{3-4} \cline{7-8}
         & Score & LC win rate (\%)  & win rate (\%) &
         & Score & LC win rate (\%) & win rate (\%) \\
        \midrule
        RPO (beta)& \textcolor{blue}{\textbf{7.381}} & \textcolor{blue}{\textbf{23.28}}&  \textcolor{blue}{\textbf{21.01}} & RPO (gemma)& \textcolor{blue}{\textbf{7.916}} & \textcolor{blue}{\textbf{15.51}} & \textcolor{blue}{\textbf{13.85}} \\
       Ref. (beta) & 5.088 & 7.19& 4.69 &  Ref. (gemma) & 7.266& 8.35 &  4.61 \\
        DPO (beta)& 7.278 & 21.15 & 17.27&  DPO (gemma)& 7.688& 15.36 & 13.69 \\
        \texttt{zephyr-beta-7b} &7.200& 13.20& 10.99 &  \texttt{zephyr-gemma-7b} & 7.719& 14.78& 12.14\\
    \bottomrule
    \end{tabular}%
    }
    \caption{Results on MT-Bench scores and AlpacaEval 2.0. \texttt{zephyr}-\texttt{beta-7b} and \texttt{zephyr-gemma-7b} are the officially released models. win rates and Length-Control (LC) win rates in AlpacaEval 2.0 are evaluated by GPT-4 compared with GPT-4. }
    \label{tab:benchmark}
\end{table}

\paragraph{RPO also improves math, reasoning, and coding abilities.} 
In addition to the MT-Bench and AlpacaEval 2.0 benchmarks, we further use benchmarks on math, reasoning, and coding tasks for evaluations of the RPO algorithm. 
Specifically, we choose the Grade School Math 8K (GSM8K) \citep{cobbe2021training}, AI2 Reasoning Challenge (ARC) \citep{clark2018think}, and Mostly Basic Python Programming (MBPP) \citep{austin2021program} to measure the math, reasoning, and coding abilities of the model trained by RPO, respectively. 
We refer the readers to Appendix~\ref{subsec: further experiments} for the detailed setups and results of these experiments.

\section{Conclusions}
This work proposes a new algorithm that provably mitigates reward overoptimization in RLHF.
We establish its finite-sample convergence under a partial coverage style data condition, and provide an equivalent practical implementation,  RPO.
As a flexible plug-in module, RPO exhibits consistent improvement over the DPO baseline and effectively mitigates overoptimization.
Future works include extending our idea of theoretical algorithm design and analysis to the iterative RLHF setup where further preference data could be collected.
Also, since our practical algorithm RPO is a plug-in module that effectively mitigates overoptimization and improves alignment performance, it serves as an exciting direction to combine it with explorative preference data collecting mechanism in iterative RLHF to further boost the performance of LLM alignment.

\section*{Acknowledgement}
The authors would like to thank the anonymous reviewers for their helpful comments.
The authors would also like to thank Junyan Zhang for valuable discussions on the equivalence between the minimax and maxmin optimization.


\bibliography{reference}

\bibliographystyle{ims}

\newpage 

\appendix 

\section{Further Discussions}

In this section, we make several further discussions on the RPO algorithm and its theory

\paragraph{Discussions on the partial coverage assumption (Assumption~\ref{as: coverage}).}

A sufficient condition to make this partial coverage condition (Assumption~\ref{as: coverage}) hold is that the distribution of the offline dataset, which is $\mu_\mathcal{D}$, can well cover the joint distribution of $(a^1, a^0)\sim (\pi, \pi^{\mathrm{base}})$. Here to discuss focus on $\pi^{\mathrm{base}} = \pi^{\mathrm{chosen}}$ as we adopted in the experiment part.

First, we clarify that the offline dataset distribution $\mu_\mathcal{D}$ is not simply $(a^1, a^0)\sim(\pi^{\mathrm{unchosen}}, \pi^{\mathrm{chosen}})$, since according to our definition (see Section~\ref{sec: preliminaries}) whether $a^1$ or $a^0$ is chosen is random and is determined by $y \in{0, 1}$ obeying the BT model. 
Thus, $(a^1, a^0)\sim \mu_\mathcal{D}$ can be interpreted as a mixture of $(\pi^{\mathrm{unchosen}}, \pi^{\mathrm{chosen}})$ and $(\pi^{\mathrm{chosen}}, \pi^{\mathrm{unchosen}})$. 
This mixture probability would not be too small as long as the quality of $(a^1, a^0)$ does not vary too much, i.e., both of them are possible to be chosen, which is the case in practice. 
As a result, in the offline data distribution $(a^1,a^0)\sim \mu_{\mathcal{D}}$, both $a^1$ and $a^0$ partly comes from the chosen distribution $\pi^{\mathrm{chosen}}$.

Then in order for $\mu_\mathcal{D}$ to cover the joint distribution of $(a^1, a^0)\sim (\pi, \pi^{\mathrm{base}})$, it suffices to argue that $\pi^{\mathrm{chosen}}$ can cover the target policy $\pi$, which is then reduced back to the traditional coverage condition.
Thus our assumption essentially requires that $\pi^{\mathrm{chosen}}$ well covers and only needs to cover the target policy $\pi$. This coincides with the spirit of the minimal data assumption in offline RL theory, i.e., the so-called partial coverage condition.

\paragraph{On the relationship between observed chosen probability and reward overoptimization.}
First, we note that the actions and their chosen probabilities can be interpreted as a proxy of analyzing the underlying (estimated) reward model $\widehat{r}$  due to the representation $\pi_{\widehat{r}}(a|x)\propto\pi^{\mathrm{ref}}(a|x)\exp(\beta^{-1}\widehat{r}(x,a))$. Analyzing the (log) probabilities of the actions can be utilized to detect the mitigation of over-optimization, because according to the representation, an overestimated reward of a poor action would result in a higher probability of choosing this action, and would also cause a decay in the probability of choosing other better actions (since the probabilities are normalized to $1$).

To further showcase the ability of RPO to address overoptimization (through the lense of probability), consider the following theoretical example with only one state and three actions \citep{xu2024dpo} where we can track everything clearly. It has three actions ${a, b, c}$ with $R^\star(a) = 1, R^\star(b)=0.5,R^\star(c)=0$. The reference policy $\pi^{\mathrm{ref}}(a)=\pi^{\mathrm{ref}}(b)=0.4, \pi^{\mathrm{ref}}(c)=0.1$, and the dataset consists of one data point $\mathcal{D} = (a,b,1)$ (meaning action $a$ is preferred in the data). Then an ideally solved DPO objective would be $\pi_{\mathrm{DPO}}$ as long as $\pi^{\mathrm{DPO}}(b)=0$, and the value of $\pi^{\mathrm{DPO}}(a)$ can be arbitrarily chosen in $[0,1]$. Thus a possible solution to DPO would be $\pi^{\mathrm{DPO}}(a)=0.5,\pi^{\mathrm{DPO}}(b)=0$, and by the normalizing condition $\pi^{\mathrm{DPO}}(c)=0.5$, which is undesirable since the action $c$ has reward $R^{\star}(c)=0$. In contrast, solving the RPO objective would additionally require the maximization of $\pi_{\mathrm{RPO}}(a)$ due to the SFT regularization term, and thus the solution is shifted towards $\pi_{\mathrm{RPO}}(a)=1, \pi_{\mathrm{RPO}}(b)=\pi_{\mathrm{RPO}}(c)=0$, which is better than the DPO policy. Thus, RPO is able to prevent overoptimization towards poor actions that are less covered by the dataset (action $c$ here), therefore resulting in a better policy.

\paragraph{About the relationships and distinctions between PTX loss in \citep{taori2023stanford} and the SFT loss of RPO.}

The original PTX loss is an imitation loss calculated on the pretraining data. 
In contrast, the SFT loss in the RPO objective is an imitation loss calculated on the RLHF dataset. 
In more specific, our experiments use this SFT loss to imitate the chosen responses in the RLHF dataset. 
Thus the relationship is that they are both imitation loss which aims to mimic certain data distribution. 
The distinction is that they are calculated on different data sources. 
The SFT loss in the RPO objective naturally comes from our theoretical algorithm and provably serves as an important regularization term to mitigate overoptimization in offline RLHF.

\paragraph{About the computational complexity of the SFT loss gradient.} 
According to the paragraph \textbf{Practical implementation} in Section~\ref{sec: experiment}, RPO adds an additional SFT loss (the log probability of the chosen labels in the preference dataset) on the original DPO loss, where we highlight that the SFT loss is actually an intermediate quantity in the calculation of the DPO loss. 
Therefore, our proposed method does not incur any additional computation overhead compared with the vanilla DPO.

\section{Proofs for Sample Complexity Analysis}\label{sec: proof sample complexity}
\subsection{Concepts}
We provide some useful concepts for the simplicity of later discussions. 
\begin{itemize}
    \item Hellinger distance $D_{\mathrm{Hellinger}}(p\|q)$ between two probability density functions $p$ and $q$ defined on $\mathcal
X$ is defined as 
\begin{align*} D_{\mathrm{Hellinger}}(p\|q) = \frac{1}{2}{\int_{x\in\mathcal{X}}\left(\sqrt{p(x)} - \sqrt{q(x)}\right)^2}{}\mathrm{d}x.
\end{align*}
   \item Total variation (TV) distance $D_{\mathrm{TV}}(p\|q)$ between two probability density functions $p$ and $q$ defined on $\mathcal
X$ is defined as 
\begin{align*} D_{\mathrm{TV}}(p\|q) = \frac{1}{2}{\int_{x\in\mathcal{X}}|p(x) - q(x)}|\mathrm{d}x.
\end{align*}
 \item Kullback–Leibler (KL) divergence $\text{KL}(p\|q)$ between two probability density functions $p$ and $q$ defined on $\mathcal
X$ is defined as 
\begin{align*} \text{KL}(p\|q) = \int_{x\in\mathcal{X}}\log\left(\frac{ p(x)}{q(x)}\right)p(x)\mathrm{d}x.
\end{align*}
\item We denote $\mathcal{N}_\epsilon(\mathcal{F},\|\cdot\|_\infty)$ as the $\epsilon$-covering number \citep{zhou2002covering} for function class $\mathcal{F}$ under the infinity norm $\|\cdot\|_\infty$. 
Widely used in the theoretical analysis, the $\epsilon$-covering number characterizes the complexity of the function class $\mathcal{F}$. 
\end{itemize}

\subsection{Proof of Theorem~\ref{thm:main}}\label{subsec: proof main}

\begin{proof}[Proof of Theorem~\ref{thm:main}]\label{subsec: proof thm main}
    By definition, the suboptimality gap of $\widehat{\pi}$ w.r.t. $\pi$ is decomposed as following, 
    \begin{align}
        &\mathrm{Gap}^\pi(\widehat{\pi})\\
        &\qquad=\mathbb{E}_{x\sim d_0,a\sim \pi(\cdot|x)}\big[r^{\star}(x,a)\big] - \mathbb{E}_{x\sim d_0,a\sim \widehat{\pi}(\cdot|x)}\big[r^{\star}(x,a)\big] \\
        &\qquad= \mathbb{E}_{x\sim d_0,a^1\sim \pi(\cdot|x), a^0\sim \pi^{\mathrm{base}}(\cdot|x)}\Big[r^{\star}(x,a^1) - r^{\star}(x,a^0) - \beta\cdot \mathrm{KL}\big(\pi(\cdot|x)\|\pi^{\mathrm{ref}}(\cdot|x)\big)\Big] \\
        &\qquad\qquad - \eta^{-1}\cdot\min_{r\in\mathcal{R}}\left\{\eta\cdot\mathbb{E}_{\substack{x\sim d_0, a^1\sim\widehat{\pi}(\cdot|x),\\ a^0\sim \pi^{\mathrm{base}}(\cdot|x)}}\Big[r(x,a^1) - r(x,a^0) -\beta\cdot \mathrm{KL}\big(\widehat{\pi}(\cdot|x)\|\pi^{\mathrm{ref}}(\cdot|x)\big)\Big] +  {\cL}_{\cD}(r)\right\}\\
        &\qquad\qquad +\eta^{-1}\cdot\min_{r\in\mathcal{R}}\left\{\eta\cdot\mathbb{E}_{\substack{x\sim d_0, a^1\sim\widehat{\pi}(\cdot|x),\\ a^0\sim \pi^{\mathrm{base}}(\cdot|x)}}\Big[r(x,a^1) - r(x,a^0) -\beta\cdot \mathrm{KL}\big(\widehat{\pi}(\cdot|x)\|\pi^{\mathrm{ref}}(\cdot|x)\big)\Big] +  {\cL}_{\cD}(r)\right\}\\
        &\qquad\qquad - \mathbb{E}_{x\sim d_0,a^1\sim \widehat{\pi}(\cdot|x), a^0\sim \pi^{\mathrm{base}}(\cdot|x)}\Big[r^{\star}(x,a^1) - r^{\star}(x,a^0)- \beta\cdot \mathrm{KL}\big(\widehat{\pi}(\cdot|x)\|\pi^{\mathrm{ref}}(\cdot|x)\big)\Big] \\
        &\qquad\qquad + \beta \cdot \mathbb{E}_{x\sim d_0}\Big[\mathrm{KL}\big(\pi(\cdot|x)\|\pi^{\mathrm{ref}}(\cdot|x)\big) - \mathrm{KL}\big(\widehat{\pi}(\cdot|x)\|\pi^{\mathrm{ref}}(\cdot|x)\big) \Big]\\
        &\qquad:=\text{Term (A)} + \text{Term (B)} + \text{Term (C)} \label{eq: gap proof},
    \end{align}
    where in the above Term (A), Term (B), and Term (C) are abbreviations for 
    \begin{align}
        \text{Term (A)} 
        &= \mathbb{E}_{x\sim d_0,a^1\sim \pi(\cdot|x),a^0\sim \pi^{\mathrm{base}}(\cdot|x)}\Big[r^{\star}(x,a^1) -r^{\star}(x,a^0) - \beta\cdot \mathrm{KL}\big(\pi(\cdot|x)\|\pi^{\mathrm{ref}}(\cdot|x)\big)\Big] \\
        &\qquad -\eta^{-1}\cdot\min_{r\in\mathcal{R}}\left\{\eta\cdot\mathbb{E}_{\substack{x\sim d_0, a^1\sim\widehat{\pi}(\cdot|x),\\ a^0\sim \pi^{\mathrm{base}}(\cdot|x)}}\Big[r(x,a^1) - r(x,a^0) -\beta\cdot \mathrm{KL}\big(\widehat{\pi}(\cdot|x)\|\pi^{\mathrm{ref}}(\cdot|x)\big)\Big] + {\cL}_{\cD}(r)\right\},
    \end{align}
    and 
    \begin{align}
        \text{Term (B)} &= \eta^{-1}\cdot\min_{r\in\mathcal{R}}\left\{\eta\cdot\mathbb{E}_{\substack{x\sim d_0, a^1\sim\widehat{\pi}(\cdot|x),\\ a^0\sim \pi^{\mathrm{base}}(\cdot|x)}}\Big[r(x,a^1) - r(x,a^0) -\beta\cdot \mathrm{KL}\big(\widehat{\pi}(\cdot|x)\|\pi^{\mathrm{ref}}(\cdot|x)\big)\Big] +  {\cL}_{\cD}(r)\right\}\\
        &\qquad   - \mathbb{E}_{x\sim d_0,a^1\sim \widehat{\pi}(\cdot|x),a^0\sim \pi^{\mathrm{base}(\cdot|x)}}\Big[r^{\star}(x,a^1) -r^{\star}(x,a^0) - \beta\cdot \mathrm{KL}\big(\widehat{\pi}(\cdot|x)\|\pi^{\mathrm{ref}}(\cdot|x)\big)\Big],
    \end{align}
    and
    \begin{align}
        \text{Term (C)} = \beta \cdot \mathbb{E}_{x\sim d_0}\Big[\mathrm{KL}\big(\pi(\cdot|x)\|\pi^{\mathrm{ref}}(\cdot|x)\big) - \mathrm{KL}\big(\widehat{\pi}(\cdot|x)\|\pi^{\mathrm{ref}}(\cdot|x)\big) \Big].
    \end{align}
    In the following, we analyze Term (A) and Term (B) respectively.

    \paragraph{Upper bound Term (A).} 
    Notice that by the optimality of our choice of policy $\widehat{\pi}$ in \eqref{eq: target max min}, we have 
    \begin{align}
        &\text{Term (A)} \\
        &\qquad = \mathbb{E}_{x\sim d_0,a^1\sim \pi(\cdot|x),a^0\sim \pi^{\mathrm{base}}(\cdot|x)}\Big[r^{\star}(x,a^1) -r^{\star}(x,a^0) - \beta\cdot \mathrm{KL}\big(\pi(\cdot|x)\|\pi^{\mathrm{ref}}(\cdot|x)\big)\Big] \label{eq: term a}\\
        &\qquad\qquad  -\eta^{-1}\cdot\min_{r\in\mathcal{R}}\left\{ \eta\cdot\mathbb{E}_{\substack{x\sim d_0, a^1\sim\widehat{\pi}(\cdot|x),\\ a^0\sim \pi^{\mathrm{base}}(\cdot|x)}}\Big[r(x,a^1) - r(x,a^0) -\beta\cdot \mathrm{KL}\big(\widehat{\pi}(\cdot|x)\|\pi^{\mathrm{ref}}(\cdot|x)\big)\Big] + {\cL}_{\cD}(r)\right\}\\
        &\qquad\leq  \mathbb{E}_{x\sim d_0,a^1\sim \pi(\cdot|x),a^0\sim \pi^{\mathrm{ref}}(\cdot|x)}\Big[r^{\star}(x,a^1) -r^{\star}(x,a^0) - \beta\cdot \mathrm{KL}\big(\pi(\cdot|x)\|\pi^{\mathrm{ref}}(\cdot|x)\big)\Big] \\
        &\qquad\qquad  -\eta^{-1}\cdot \min_{r\in\mathcal{R}}\left\{\eta\cdot\mathbb{E}_{\substack{x\sim d_0, a^1\sim\pi(\cdot|x),\\ a^0\sim \pi^{\mathrm{base}}(\cdot|x)}}\Big[r(x,a^1) - r(x,a^0) -\beta\cdot \mathrm{KL}\big(\pi(\cdot|x)\|\pi^{\mathrm{ref}}(\cdot|x)\big)\Big] +  {\cL}_{\cD}(r)\right\}\\
        &\qquad= \max_{r\in\mathcal{R}}\Bigg\{\mathbb{E}_{x\sim d_0,a^1\sim \pi(\cdot|x),a^0\sim \pi^{\mathrm{base}}(\cdot|x)}\Big[\big(r^{\star}(x,a^1) -r^{\star}(x,a^0)\big) - \big(r(x,a^1) -r(x,a^0)\big)\Big] - \eta^{-1}\cdot \cL_{\cD}(r)\Bigg\},  
    \end{align}
    where in the inequality we apply the optimality of the choice of policy $\widehat{\pi}$ in \eqref{eq: target max min}.

    \paragraph{Upper bound Term (B).} For this term, we directly consider the following bound, 
    \begin{align}
        &\text{Term (B)}\\
        &\qquad = \eta^{-1}\cdot\min_{r\in\mathcal{R}}\left\{\eta\cdot\mathbb{E}_{\substack{x\sim d_0, a^1\sim\widehat{\pi}(\cdot|x),\\ a^0\sim \pi^{\mathrm{base}}(\cdot|x)}}\Big[r(x,a^1) - r(x,a^0) -\beta\cdot \mathrm{KL}\big(\widehat{\pi}(\cdot|x)\|\pi^{\mathrm{ref}}(\cdot|x)\big)\Big] +  {\cL}_{\cD}(r)\right\}\\
        &\qquad \qquad  - \mathbb{E}_{x\sim d_0,a^1\sim \widehat{\pi}(\cdot|x),a^0\sim \pi^{\mathrm{base}(\cdot|x)}}\Big[r^{\star}(x,a^1) -r^{\star}(x,a^0) - \beta\cdot \mathrm{KL}\big(\widehat{\pi}(\cdot|x)\|\pi^{\mathrm{ref}}(\cdot|x)\big)\Big]\\
        &\qquad  \leq \mathbb{E}_{x\sim d_0, a^1\sim\widehat{\pi}(\cdot|x),a^0\sim \pi^{\mathrm{base}}(\cdot|x)}\Big[r^{\star}(x,a^1) - r^{\star}(x,a^0) -\beta\cdot \mathrm{KL}\big(\widehat{\pi}(\cdot|x)\|\pi^{\mathrm{ref}}(\cdot|x)\big)\Big] + \eta^{-1}\cdot {\cL}_{\cD}(r^{\star})\\
        &\qquad \qquad  - \mathbb{E}_{x\sim d_0,a^1\sim \widehat{\pi}(\cdot|x),a^0\sim \pi^{\mathrm{base}(\cdot|x)}}\Big[r^{\star}(x,a^1) -r^{\star}(x,a^0) - \beta\cdot \mathrm{KL}\big(\widehat{\pi}(\cdot|x)\|\pi^{\mathrm{ref}}(\cdot|x)\big)\Big] \\
        &\qquad= \eta^{-1}\cdot {\cL}_{\cD}(r^{\star}),\label{eq: term b}
    \end{align}
    where in the inequality we apply the fact that $r^{\star}\in\cR$ by Assumption~\ref{as: regularity}.

    \paragraph{Combining Term (A), Term (B), and Term (C).}
    Now by \eqref{eq: gap proof}, \eqref{eq: term a}, and \eqref{eq: term b}, we have that 
    \allowdisplaybreaks
    \begin{align}
        &\mathrm{Gap}_{\beta}^\pi(\widehat{\pi}) = \text{Term (A)} + \text{Term (B)} + \text{Term (C)}\label{eq: proof combine}\\
        &\qquad \leq \max_{r\in\mathcal{R}}\left\{\mathbb{E}_{\substack{x\sim d_0,a^1\sim \pi(\cdot|x),\\a^0\sim \pi^{\mathrm{base}}(\cdot|x)}}\Big[\big(r^{\star}(x,a^1) -r^{\star}(x,a^0)\big) - \big(r(x,a^1) -r(x,a^0)\big)\Big] +\eta^{-1}\cdot\Big(\cL_{\cD}(r^{\star}) -  \cL_{\cD}(r)\Big)\right\}\\
        &\qquad\qquad + \beta \cdot \mathbb{E}_{x\sim d_0}\Big[\mathrm{KL}\big(\pi(\cdot|x)\|\pi^{\mathrm{ref}}(\cdot|x)\big) - \mathrm{KL}\big(\widehat{\pi}(\cdot|x)\|\pi^{\mathrm{ref}}(\cdot|x)\big) \Big].
    \end{align}
    In the following, we upper bound the right hand side of \eqref{eq: proof combine} via relating the MLE loss difference term to the reward difference term through a careful analysis of the preference model. 
    On the one hand, we invoke Lemma~\ref{lem: concentration} to give an upper bound of the difference of the MLE loss as following, with probability at least $1-\delta$ over random samples and $\varepsilon = (6\cdot(1+e^R)\cdot N)^{-1}$, for any reward model $r\in\mathcal{R}$, it holds that
    \begin{align}
        \cL_{\cD}(r^{\star}) -  \cL_{\cD}(r) 
        & \leq -2\cdot \mathbb{E}_{(x,a^1,a^0)\sim \mu_{\cD}(\cdot,\cdot,\cdot)}\Big[D_{\mathrm{Hellinger}}^2\big(\mathbb{P}_{r^{\star}}(\cdot|x,a^1,a^0)\|\mathbb{P}_{r}(\cdot|x,a^1,a^0)\big)\Big] \\
        &\qquad+ \frac{3}{N}\cdot\log\left(\frac{\cN_{\varepsilon}(\cR,\|\cdot\|_{\infty})}{\delta}\right),
    \end{align}
    where we recall that we use the subscript $r$ in $\mathbb{P}_r$ to emphasize the dependence of the probabilistic model on the reward model. 
    Here $\cN_{\varepsilon}(\cR,\|\cdot\|_{\infty})$ denotes the $\varepsilon$-covering number of the reward model class and $R$ is the upper bound on the reward functionss (Assumption~\ref{as: regularity}).
    Now to facilitate the calculation, we lower bound the Hellinger distance by total variation (TV) distance as
    \begin{align}
        D_{\mathrm{Hellinger}}^2\big(\mathbb{P}_{r^{\star}}(\cdot|x,a^1,a^0)\|\mathbb{P}_{r}(\cdot|x,a^1,a^0)\big) \geq D_{\mathrm{TV}}^2\big(\mathbb{P}_{r^{\star}}(\cdot|x,a^1,a^0)\|\mathbb{P}_{r}(\cdot|x,a^1,a^0)\big),
    \end{align}
    By the expression of the probability model $\mathbb{P}_r$, we can further write the TV distance above as 
    \begin{align}
        &D_{\mathrm{TV}}\big(\mathbb{P}_{r^{\star}}(\cdot|x,a^1,a^0)\|\mathbb{P}_{r}(\cdot|x,a^1,a^0)\big)\\ &\qquad= \frac{1}{2}\cdot \Big|\sigma\big(r^{\star}(x,a^1) - r^{\star}(x,a^0)\big) - \sigma\big(r(x,a^1) - r(x,a^0)\big)\Big|\\
        &\qquad\qquad + \frac{1}{2}\cdot \Big|\sigma\big(r^{\star}(x,a^0) - r^{\star}(x,a^1)\big) - \sigma\big(r(x,a^0) - r(x,a^1)\big)\Big|\\
        &\qquad = \Big|\sigma\big(r^{\star}(x,a^1) - r^{\star}(x,a^0)\big) - \sigma\big(r(x,a^1) - r(x,a^0)\big)\Big|,\label{eq: tv calculation}
    \end{align}
    where in the second equality we use the fact that $\sigma(-z) = 1-\sigma(z)$. 
    Now by Lemma~\ref{lem: sigmoid} and the condition that $r(x,a)\in[0,R]$ for any $(x,a,r)\in\cX\times\cA\times\mathcal{R}$ (Assumption~\ref{as: regularity}), we know that 
    \begin{align}
        &\Big|\sigma\big(r^{\star}(x,a^1) - r^{\star}(x,a^0)\big) - \sigma\big(r(x,a^1) - r(x,a^0)\big)\Big|  \geq \kappa \cdot \Big|\big(r^{\star}(x,a^1) - r^{\star}(x,a^0)\big) - \big(r(x,a^1) - r(x,a^0)\big)\Big|,
    \end{align}
    where $\kappa = 1/(1+\exp( R))^2$.
    As a result, the difference of the MLE loss is upper bounded by 
    \begin{align}
        \cL_{\cD}(r^{\star}) -  \cL_{\cD}(r)
        & \leq -2\kappa^2 \cdot \mathbb{E}_{(x,a^1,a^0)\sim \mu_{\cD}(\cdot,\cdot,\cdot)}\bigg[\Big|\big(r^{\star}(x,a^1) - r^{\star}(x,a^0)\big) - \big(r(x,a^1) - r(x,a^0)\big)\Big|^2\bigg] \\
        &\qquad  + \frac{3}{N}\cdot\log\left(\frac{\cN_{\varepsilon}(\cR,\|\cdot\|_{\infty})}{\delta}\right).\label{eq: mle difference}
    \end{align}
    On the other hand, the reward difference term in \eqref{eq: proof combine}, which is evaluated on actions from $\pi$ and $\pi^{\mathrm{base}}$, can be related to the reward difference evaluated on the data distribution $\mu_{\cD}$ via Assumption~\ref{as: coverage}, i.e., 
    \allowdisplaybreaks
    \begin{align}
        &\mathbb{E}_{x\sim d_0,a^1\sim \pi(\cdot|x),a^0\sim \pi^{\mathrm{base}}(\cdot|x)}\Big[\big(r^{\star}(x,a^1) -r^{\star}(x,a^0)\big) - \big(r(x,a^1) -r(x,a^0)\big)\Big]\label{eq: reward difference}\\
        &\quad \leq C_{\mu_{\cD}}(\cR; \pi, \pi^{\mathrm{base}})  \sqrt{\mathbb{E}_{(x,a^1,a^0)\sim \mu_{\cD}}\left[\Big|\big(r^{\star}(x,a^1) -r^{\star}(x,a^0)\big) - \big(r(x,a^1) -r(x,a^0)\big)\Big|^2\right]}.
    \end{align}
    Finally, combining \eqref{eq: mle difference}, \eqref{eq: reward difference}, and \eqref{eq: proof combine}, denoting 
    \begin{align}
        \Delta_{r}:= \sqrt{\mathbb{E}_{(x,a^1,a^0)\sim \mu_{\cD}}\left[\Big|\big(r^{\star}(x,a^1) -r^{\star}(x,a^0)\big) - \big(r(x,a^1) -r(x,a^0)\big)\Big|^2\right]},
    \end{align}
    we have that 
    \begin{align}
        \mathrm{Gap}^\pi(\widehat{\pi}) &\leq \max_{r\in\mathcal{R}}\Big\{C_{\mu_{\cD}}(\cR; \pi, \pi^{\mathrm{base}})\cdot \Delta_{r} - 2\eta^{-1}\kappa^2\cdot \Delta_{r}^2 \Big\}+  \frac{3}{\eta N}\cdot\log\left(\frac{\cN_{\varepsilon}(\cR,\|\cdot\|_{\infty})}{\delta}\right)\\
        &\qquad + \beta \cdot \mathbb{E}_{x\sim d_0}\Big[\mathrm{KL}\big(\pi(\cdot|x)\|\pi^{\mathrm{ref}}(\cdot|x)\big) - \mathrm{KL}\big(\widehat{\pi}(\cdot|x)\|\pi^{\mathrm{ref}}(\cdot|x)\big) \Big]\\
        &\leq \frac{\big(C_{\mu_{\cD}}(\cR; \pi, \pi^{\mathrm{base}})\big)^2\eta}{8\kappa^2 } + \frac{3}{\eta N}\cdot \log\left(\frac{\cN_{\varepsilon}(\cR,\|\cdot\|_{\infty})}{\delta}\right) + \beta \cdot \mathbb{E}_{x\sim d_0}\Big[\mathrm{KL}\big(\pi(\cdot|x)\|\pi^{\mathrm{ref}}(\cdot|x)\big) \Big],
    \end{align}
    where in the second inequality we use that fact that $az-bz^2\leq a^2/(4b)$ for any $z\in\mathbb{R}$ and that KL-divergence is non-negative.
    Consequently, with the choice of 
    \begin{align}
        \eta = 2\sqrt{6}\cdot\sqrt{\frac{\log\left(\cN_{\varepsilon}(\cR,\|\cdot\|_{\infty})/\delta\right)}{N}},\quad \beta = \frac{1}{\sqrt{N}},\quad \kappa = \frac{1}{(1+\exp(R))^2},
    \end{align}
    we conclude that with probability at least $1-\delta$ and $\varepsilon = (6\cdot(1+e^R)\cdot N)^{-1}$, 
    \begin{align}
         &\mathrm{Gap}^\pi(\widehat{\pi}) \leq \frac{\sqrt{6}\big(1+\exp(R)\big)^2 \left(\big(C_{\mu_{\cD}}(\cR; \pi, \pi^{\mathrm{base}})\big)^2+1\right) \iota + 4 \mathbb{E}_{x\sim d_0}\Big[\mathrm{KL}\big(\pi(\cdot|x)\|\pi^{\mathrm{ref}}(\cdot|x)\big) \Big]}{4\sqrt{N}},
    \end{align}
    where we denote $\iota =  \sqrt{\log\left(\cN_{\varepsilon}(\cR,\|\cdot\|_{\infty})/\delta\right)}$ with $\varepsilon = (6\cdot(1+e^R)\cdot N)^{-1}$.
    This proves Theorem \ref{thm:main}.
\end{proof}

\subsection{Technical Lemmas}

\begin{lemma}[Uniform concentration]\label{lem: concentration}
    Consider the MLE loss \eqref{eq: L D} and define the approximation error as $\varepsilon = (6\cdot(1+e^R)\cdot N)^{-1}$ where $R$ is the upper bound on the reward functions (Assumption~\ref{as: coverage}).
    Suppose that the reward model class $\cR$ has a finite $\varepsilon$-covering number $\cN_{\varepsilon}(\cR,\|\cdot\|_{\infty})<\infty$.
    Then for any $\delta<1/e$ it holds with probability at least $1-\delta$ that 
\begin{align}
    \cL_{\cD}(r^{\star}) -  \cL_{\cD}(r) 
        &\leq -2\cdot \mathbb{E}_{(x,a^1,a^0)\sim \mu_{\cD}(\cdot,\cdot,\cdot)}\Big[D_{\mathrm{Hellinger}}^2\big(\mathbb{P}_{r^{\star}}(\cdot|x,a^1,a^0)\|\mathbb{P}_{r}(\cdot|x,a^1,a^0)\big)\Big]  \\
        & \qquad + \frac{3}{N}\cdot\log\left(\frac{\cN_{\varepsilon}(\cR,\|\cdot\|_{\infty})}{\delta}\right).
\end{align}
\end{lemma}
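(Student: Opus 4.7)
The plan is to establish a standard maximum likelihood concentration bound via the exponential moment method applied pointwise, then lift it to a uniform statement over $\cR$ by a covering argument. For a fixed $r\in\cR$, I would introduce the half log-likelihood ratio $Z_i(r)=\tfrac{1}{2}\log\bigl(\mathbb{P}_r(y_i|x_i,a_i^1,a_i^0)/\mathbb{P}_{r^{\star}}(y_i|x_i,a_i^1,a_i^0)\bigr)$ and invoke the classical Hellinger identity
\begin{align*}
    \mathbb{E}_{y\sim\mathbb{P}_{r^{\star}}(\cdot|x,a^1,a^0)}\!\left[\sqrt{\mathbb{P}_r(y|x,a^1,a^0)/\mathbb{P}_{r^{\star}}(y|x,a^1,a^0)}\right] \,=\, 1 - D^2_{\mathrm{Hellinger}}\!\left(\mathbb{P}_{r^{\star}}(\cdot|x,a^1,a^0)\,\|\,\mathbb{P}_r(\cdot|x,a^1,a^0)\right).
\end{align*}
Combined with the tower rule and $1-x\leq e^{-x}$, this yields $\mathbb{E}[e^{Z_i(r)}]\leq \exp(-\mathbb{E}_{\mu_{\cD}}[D^2_{\mathrm{Hellinger}}])$, and independence of the samples gives $\mathbb{E}[\exp(\sum_i Z_i(r) + N\cdot\mathbb{E}_{\mu_{\cD}}[D^2_{\mathrm{Hellinger}}])]\leq 1$. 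Markov's inequality then produces, with probability at least $1-\delta'$, the pointwise bound $\sum_i Z_i(r)\leq -N\cdot\mathbb{E}_{\mu_{\cD}}[D^2_{\mathrm{Hellinger}}] + \log(1/\delta')$. Recognising $\cL_{\cD}(r^{\star}) - \cL_{\cD}(r) = (2/N)\sum_{i=1}^{N}Z_i(r)$ from the definition \eqref{eq: L D} delivers the desired per-$r$ inequality, with the correct constant $2$ in front of the Hellinger penalty.

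To upgrade to a bound uniform over $\cR$, I would let $\cR_\varepsilon$ be an $\varepsilon$-cover of $\cR$ under $\|\cdot\|_\infty$ with $\varepsilon=(6\cdot(1+e^R)\cdot N)^{-1}$. Applying the per-$r$ result to each $r'\in\cR_\varepsilon$ with confidence $\delta/\cN_\varepsilon(\cR,\|\cdot\|_\infty)$ and taking a union bound gives the uniform statement on the cover, with an additive contribution $(2/N)\log(\cN_\varepsilon(\cR,\|\cdot\|_\infty)/\delta)$. For an arbitrary $r\in\cR$, I would select its nearest neighbour $r'\in\cR_\varepsilon$ and transfer the inequality by controlling both sides in $\|r-r'\|_\infty$. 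The loss $\cL_{\cD}$ is $2$-Lipschitz in $r$ under $\|\cdot\|_\infty$ since $|\log\sigma(z)-\log\sigma(z')|\leq|z-z'|$, and $D^2_{\mathrm{Hellinger}}$ is Lipschitz in $r$ with constant of order $1+e^R$: because $r\in[0,R]$, one has $\sigma(\pm\Delta r)\geq(1+e^R)^{-1}$, so that $r\mapsto\sqrt{\sigma(\Delta r)}$ is Lipschitz of order $(1+e^R)^{1/2}$, and squaring preserves Lipschitzness up to the same uniform lower bound. The calibration $\varepsilon=(6\cdot(1+e^R)\cdot N)^{-1}$ is chosen so that the discretisation slack on both sides contributes at most $1/N$, merging with the $(2/N)\log(\cN_\varepsilon/\delta)$ term to give the advertised constant $3$.

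The main technical obstacle I anticipate is the Hellinger Lipschitz step, where the boundedness assumption $r\in[0,R]$ enters non-trivially: one has to carry out the chain $D^2_{\mathrm{Hellinger}}\rightarrow(\sqrt{\sigma})$-difference$\rightarrow\sigma$-difference$\rightarrow\Delta r$-difference while tracking the dependence on $e^R$ explicitly, since this constant is what dictates the calibration of $\varepsilon$. The remaining ingredients are routine: the per-sample exponential moment calculation is the same identity that underlies Hellinger MLE concentration in the classical density estimation literature, and the union bound over an $\varepsilon$-net is standard.
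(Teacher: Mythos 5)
Your proposal is correct and follows essentially the same route as the paper's proof: a pointwise Hellinger-type MLE concentration bound, a union bound over an $\varepsilon$-net of $\cR$, and a discretisation-error transfer calibrated by $\varepsilon = (6\cdot(1+e^R)\cdot N)^{-1}$ so that the slack merges into the $\frac{3}{N}\log(\cN_\varepsilon/\delta)$ term for $\delta < 1/e$. The only differences are cosmetic: you derive the base concentration from the exponential-moment/Chernoff argument where the paper cites it as a black-box proposition from prior work, and you control the discretisation error via Lipschitz estimates on $\log\sigma$ and on $r\mapsto D^2_{\mathrm{Hellinger}}$ where the paper uses $\log x\leq x-1$ together with the Hellinger triangle inequality and $D^2_{\mathrm{Hellinger}}\leq D_{\mathrm{TV}}$ — both bookkeeping choices yield the stated constants.
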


\begin{proof}[Proof of Lemma~\ref{lem: concentration}]
    For notational simplicity, we use $\cC_{\varepsilon}(\cR,\|\cdot\|_{\infty})$ to denote an $\varepsilon$-cover of the reward model class $\cR$ under the $\|\cdot\|_{\infty}$-norm.
    It holds that $\cN_{\varepsilon}(\cR,\|\cdot\|_{\infty}) = |\cC_{\varepsilon}(\cR,\|\cdot\|_{\infty})|$.
    First we invoke Proposition 5.3 of \cite{liu2024maximize} to obtain a uniform concentration over the finite set of  $\varepsilon$-cover $\cC_{\varepsilon}(\cR,\|\cdot\|_{\infty})$. 
    Specifically, with probability at least $1-\delta$, for any $r\in \cC_{\varepsilon}(\cR,\|\cdot\|_{\infty})$, 
    \begin{align}
        \cL_{\cD}(r^{\star}) -  \cL_{\cD}(r) 
        &\leq -2\cdot \mathbb{E}_{(x,a^1,a^0)\sim \mu_{\cD}(\cdot,\cdot,\cdot)}\Big[D_{\mathrm{Hellinger}}^2\big(\mathbb{P}_{r^{\star}}(\cdot|x,a^1,a^0)\|\mathbb{P}_{r}(\cdot|x,a^1,a^0)\big)\Big] \\
        &\qquad+ \frac{2}{N}\cdot\log\left(\frac{\cN_{\varepsilon}(\cR,\|\cdot\|_{\infty})}{\delta}\right).\label{eq: finite uniform}
    \end{align}
    Now for any reward model $r\in\cR$, we take a $r^{\dagger}\in\cC_{\varepsilon}(\cR,\|\cdot\|_{\infty})$ satisfying $\|r - r^{\dagger}\|_{\infty}\leq \varepsilon$.
    We have
    \allowdisplaybreaks
    \begin{align}
        \cL_{\cD}(r^{\star}) -  \cL_{\cD}(r) 
        & = \cL_{\cD}(r^{\star}) -  \cL_{\cD}(r^{\dagger}) +  \cL_{\cD}(r^{\dagger}) -  \cL_{\cD}(r)\\
        & \leq -2\cdot \mathbb{E}_{(x,a^1,a^0)\sim \mu_{\cD}(\cdot,\cdot,\cdot)}\Big[D_{\mathrm{Hellinger}}^2\big(\mathbb{P}_{r^{\star}}(\cdot|x,a^1,a^0)\|\mathbb{P}_{r^\dagger}(\cdot|x,a^1,a^0)\big)\Big] \\
        &\qquad  + \frac{2}{N}\cdot\log\left(\frac{\cN_{\varepsilon}(\cR,\|\cdot\|_{\infty})}{\delta}\right) +  \cL_{\cD}(r^{\dagger}) -  \cL_{\cD}(r)\\
        & \leq -2\cdot \mathbb{E}_{(x,a^1,a^0)\sim \mu_{\cD}(\cdot,\cdot,\cdot)}\Big[D_{\mathrm{Hellinger}}^2\big(\mathbb{P}_{r^{\star}}(\cdot|x,a^1,a^0)\|\mathbb{P}_{r}(\cdot|x,a^1,a^0)\big)\Big] \\
        &\qquad+ \frac{2}{N}\cdot\log\left(\frac{\cN_{\varepsilon}(\cR,\|\cdot\|_{\infty})}{\delta}\right) +  \cL_{\cD}(r^{\dagger}) -  \cL_{\cD}(r)\\
        &\qquad\qquad + 4 \cdot \mathbb{E}_{(x,a^1,a^0)\sim \mu_{\cD}(\cdot,\cdot,\cdot)}\Big[D_{\mathrm{Hellinger}}^2\big(\mathbb{P}_{r^{\dagger}}(\cdot|x,a^1,a^0)\|\mathbb{P}_{r}(\cdot|x,a^1,a^0)\big)\Big],\label{eq: uniform concentration 1}
    \end{align}
    where in the fir inequality we use \eqref{eq: finite uniform} for $r^{\dagger}$ and in the second inequality we utilize the triangular inequality for Hellinger distance.
    Therefore, it remains to upper bound the approximation error induced by $r^{\dagger}$.
    On the one hand, by the definition of $\cL_\cD$ in \eqref{eq: L D}, we have that 
    \begin{align}
        \cL_{\cD}(r^{\dagger}) -  \cL_{\cD}(r) 
        & = \frac{1}{N}\sum_{i=1}^N y_i\cdot \log \left(\frac{\sigma\big(r(x_i,a^1_i) - r(x_i,a^0_i)\big)}{\sigma\big(r^\dagger(x_i,a^1_i) - r^\dagger(x_i,a^0_i)\big)}\right)\\
        &\qquad +\frac{1}{N}\sum_{i=1}^N(1-y_i)\cdot \log \left(\frac{\sigma\big(r(x_i,a^0_i) - r(x_i,a^1_i)\big)}{\sigma\big(r^\dagger(x_i,a^0_i) - r^\dagger(x_i,a^1_i)\big)}\right).
    \end{align}
    Use the inequality that $\log(x) \leq x-1$, we can further upper bound $\cL_{\cD}(r^{\dagger}) -  \cL_{\cD}(r)$ by 
    \begin{align}
        \cL_{\cD}(r^{\dagger}) -  \cL_{\cD}(r) 
        &\leq  \frac{1}{N}\sum_{i=1}^N y_i\cdot \frac{\sigma\big(r(x_i,a^1_i) - r(x_i,a^0_i)\big) - \sigma\big(r^\dagger(x_i,a^1_i) - r^\dagger(x_i,a^0_i)\big)}{\sigma\big(r^\dagger(x_i,a^1_i) - r^\dagger(x_i,a^0_i)\big)} \\
        &\qquad \qquad +\frac{1}{N}\sum_{i=1}^N(1-y_i)\cdot\frac{\sigma\big(r(x_i,a^0_i) - r(x_i,a^1_i)\big) - \sigma\big(r^\dagger(x_i,a^0_i) - r^\dagger(x_i,a^1_i)\big)}{\sigma\big(r^\dagger(x_i,a^0_i) - r^\dagger(x_i,a^1_i)\big)}.
    \end{align}
    Now since $\|r^{\dagger} - r\|_{\infty}\leq \varepsilon$ and $r^{\dagger}\in[0,R]$, invoking Lemma~\ref{lem: sigmoid}, we can derive that 
    \begin{align}
        \cL_{\cD}(r^{\dagger}) -  \cL_{\cD}(r) &\leq \frac{1}{N}\sum_{i=1}^N \frac{\big|\big(r(x_i,a^1_i) - r(x_i,a^0_i)\big) - \big(r^\dagger(x_i,a^1_i) - r^\dagger(x_i,a^0_i)\big)\big|}{(1+e^{R})^{-1}} \\
        &\qquad + \frac{1}{N}\sum_{i=1}^N\frac{\big|\big(r(x_i,a^0_i) - r(x_i,a^1_i)\big) - \big(r^\dagger(x_i,a^0_i) - r^\dagger(x_i,a^1_i)\big)\big|}{(1+e^{R})^{-1}}\\
        & \leq 4\cdot \|r^{\dagger} - r\|_{\infty}\cdot (1+e^R)\leq 4\varepsilon\cdot (1+e^R).\label{eq: approx error 1}
    \end{align}
    On the other hand, we upper bound the hellinger distance between $\mathbb{P}_r$ and $\mathbb{P}_{r^{\dagger}}$, for any $(x, a^1, a^0)\in\cX\times\cA\times\cA$,
    \begin{align}
        &D_{\mathrm{Hellinger}}^2\big(\mathbb{P}_{r^{\dagger}}(\cdot|x,a^1,a^0)\|\mathbb{P}_{r}(\cdot|x,a^1,a^0)\big)  \\
        &\qquad \leq D_{\mathrm{TV}}\big(\mathbb{P}_{r^{\dagger}}(\cdot|x,a^1,a^0)\|\mathbb{P}_{r}(\cdot|x,a^1,a^0)\big) \\
        &\qquad = \Big|\sigma\big(r^{\dagger}(x,a^1) - r^{\dagger}(x,a^0)\big) - \sigma\big(r(x,a^1) - r(x,a^0)\big)\Big| \\
        & \qquad \leq \Big|\big(r^{\dagger}(x,a^1) - r^{\dagger}(x,a^0)\big) - \big(r(x,a^1) - r(x,a^0)\big)\Big| \\
        &\qquad \leq 2\cdot \|r^{\dagger} - r\|_{\infty}\leq 2\varepsilon,\label{eq: approx error 2}
    \end{align}
    where the first inequality uses the fact that $D_{\mathrm{Hellinger}}^2 \leq D_{\mathrm{TV}}$, the equality uses the same argument as \eqref{eq: tv calculation}, and the second inequality applies Lemma~\ref{lem: sigmoid}.
    Finally, combining \eqref{eq: uniform concentration 1}, \eqref{eq: approx error 1}, and \eqref{eq: approx error 2}, we conclude that 
    \begin{align}
        \cL_{\cD}(r^{\star}) -  \cL_{\cD}(r) &\leq -2\cdot \mathbb{E}_{(x,a^1,a^0)\sim \mu_{\cD}(\cdot,\cdot,\cdot)}\Big[D_{\mathrm{Hellinger}}^2\big(\mathbb{P}_{r^{\star}}(\cdot|x,a^1,a^0)\|\mathbb{P}_{r}(\cdot|x,a^1,a^0)\big)\Big] \\
        &\qquad + \frac{2}{N}\cdot\log\left(\frac{\cN_{\varepsilon}(\cR,\|\cdot\|_{\infty})}{\delta}\right) + 6 \varepsilon\cdot(1+e^R). 
    \end{align}
    By taking the approximation error $\varepsilon = (6\cdot(1+e^R)\cdot N)^{-1}$, we conclude that for $\delta<e^{-1}$, with probability at least $1-\delta$, for any $r\in\cR$, it holds that 
    \begin{align}
        \cL_{\cD}(r^{\star}) -  \cL_{\cD}(r) 
        &\leq -2\cdot \mathbb{E}_{(x,a^1,a^0)\sim \mu_{\cD}(\cdot,\cdot,\cdot)}\Big[D_{\mathrm{Hellinger}}^2\big(\mathbb{P}_{r^{\star}}(\cdot|x,a^1,a^0)\|\mathbb{P}_{r}(\cdot|x,a^1,a^0)\big)\Big]  \\
        &\qquad\qquad  + \frac{3}{N}\cdot\log\left(\frac{\cN_{\varepsilon}(\cR,\|\cdot\|_{\infty})}{\delta}\right).
    \end{align}
    This completes the proof of Lemma~\ref{lem: concentration}.
\end{proof}

\begin{lemma}[Sigmoid function]\label{lem: sigmoid}
    For any real numbers $z_1,z_2\in[-R,R]$, it holds that 
    \begin{align}
        \kappa\cdot |z_1-z_2|\leq\left|\sigma(z_1) - \sigma(z_2)\right|\leq |z_1-z_2|,
    \end{align}
    where the constant $\kappa = 1/(1+\exp(R))^2$.
\end{lemma}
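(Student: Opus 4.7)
The plan is to prove both bounds by reducing them via the mean value theorem to bounds on $\sigma'$, and then to show a uniform upper and lower bound on $\sigma'$ over the interval $[-R,R]$. The key identity I will rely on is
\begin{align*}
    \sigma'(z) = \sigma(z)\bigl(1-\sigma(z)\bigr),
\end{align*}
together with the elementary facts that $\sigma$ is strictly increasing and satisfies $1-\sigma(z)=\sigma(-z)$.

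For the upper bound, I would invoke the mean value theorem to write $\sigma(z_1)-\sigma(z_2)=\sigma'(\xi)(z_1-z_2)$ for some $\xi$ between $z_1$ and $z_2$. Since $\sigma(\xi)\in(0,1)$, AM-GM gives $\sigma'(\xi)=\sigma(\xi)(1-\sigma(\xi))\le 1/4\le 1$, so $|\sigma(z_1)-\sigma(z_2)|\le|z_1-z_2|$.

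For the lower bound, I use the same mean-value representation and now need a uniform lower bound on $\sigma'$ over $[-R,R]$. The observation is that $\sigma$ is monotone increasing, so for any $\xi\in[-R,R]$,
\begin{align*}
    \sigma(\xi)\ge \sigma(-R)=\frac{1}{1+e^R},\qquad 1-\sigma(\xi)\ge 1-\sigma(R)=\frac{1}{1+e^R},
\end{align*}
and multiplying these two inequalities yields $\sigma'(\xi)\ge 1/(1+e^R)^2=\kappa$. Combined with the mean value theorem, this gives $|\sigma(z_1)-\sigma(z_2)|\ge \kappa\,|z_1-z_2|$.

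There is no significant obstacle here; the only point that requires a small amount of care is choosing the right way to lower bound $\sigma'$ so that the constant matches the stated $\kappa=1/(1+e^R)^2$. A naive attempt based on $(1+e^{\xi})(1+e^{-\xi})=2+e^\xi+e^{-\xi}$ yields the sharper but less clean bound $e^R/(1+e^R)^2$; the factorization as $\sigma(\xi)\cdot\sigma(-\xi)$ with each factor bounded separately by $\sigma(-R)=1/(1+e^R)$ directly produces the form claimed in the lemma.
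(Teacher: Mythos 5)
Your proof is correct and follows essentially the same route as the paper's: apply the mean value theorem, use $\sigma'(\xi)=\sigma(\xi)(1-\sigma(\xi))$, and lower-bound each factor separately by $1/(1+e^R)$ over $\xi\in[-R,R]$ to obtain $\kappa$. Your side remark that the exact minimum of $\sigma'$ on $[-R,R]$ is the sharper constant $e^R/(1+e^R)^2$ is also accurate, but the paper deliberately settles for the looser, cleaner $\kappa=1/(1+e^R)^2$ just as you do.
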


\begin{proof}[Proof of Lemma~\ref{lem: sigmoid}]
    Since the sigmoid function $\sigma(\cdot)$ is differentiable, we know that for any $z_1,z_2\in[-R,R]$, there exists some $\xi(z_1,z_2)\in[-R,R]$ such that 
    \begin{align}
        \sigma(z_1) - \sigma(z_2) = \sigma'\big(\xi(z_1,z_2)\big)\cdot(z_1 - z_2).
    \end{align}
    Notice that $\sigma'(z) = \sigma(z)\cdot(1-\sigma(z))$, we can obtain that
    \allowdisplaybreaks
    \begin{align}
        1\geq \sigma'\big(\xi(z_1,z_2)\big) &= \sigma\big(\xi(z_1,z_2)\big)\cdot\Big(1-\sigma\big(\xi(z_1,z_2)\big)\Big)\\
        &=\frac{1}{1+\exp(\xi(z_1,z_2))}\cdot\left(1-\frac{1}{1+\exp(\xi(z_1,z_2))}\right)\\
        &\geq \frac{1}{1+\exp(R)}\cdot\left(1-\frac{1}{1+\exp(-R)}\right)\\
        &= \frac{1}{(1+\exp(R))^2}.
    \end{align}
    This completes the proof of Lemma~\ref{lem: sigmoid}.
\end{proof}

\section{Proofs for Equivalence between Maximin and Minimax Objectives}\label{sec: further detail}

\subsection{Proof of Theorem~\ref{thm: equivalence formal}}\label{subsec: proof equivalence formal}

\begin{proof}[Proof of Theorem~\ref{thm: equivalence formal}]
    Consider denoting an auxiliary policy $\widehat{\pi}$ as 
    \begin{align}
        \widehat{\pi} \in \argmax_{\pi\in\Pi}\,\min_{r\in\cR} \phi(\pi,r).\label{eq: proof nash 0+}
    \end{align}
    By the definition of $\widehat{r}$ and $\widehat{\pi}$, the duality gap of $(\widehat{r}, \widehat{\pi})$, defined as 
    \begin{align}
        \mathrm{Dual}(\widehat{r}, \widehat{\pi}) := \max_{\pi\in\Pi}\phi(\pi, \widehat{r}) - \min_{r\in\cR}\phi(\widehat{\pi},r)
    \end{align}
    is zero. 
    This is because the following deduction, 
    \begin{align}
        \mathrm{Dual}(\widehat{r}, \widehat{\pi}) &= \left(\max_{\pi\in\Pi}\phi(\pi, \widehat{r}) - \min_{r\in\cR}\,\max_{\pi\in\Pi} \phi(\pi,r) \right) \\
        &\qquad + \left(\max_{\pi\in\Pi}\,\min_{r\in\cR} \phi(\pi,r) 
 - \min_{r\in\cR}\phi(\widehat{\pi},r)\right)  \\
 &= 0,\label{eq: proof nash 1}
    \end{align}
    where in the first equality we apply Lemma~\ref{prop: equivalence} that the minimax objective and the maximin objective are equivalent, and the last equality applies the definition of $\widehat{r}$ and $\widehat{\pi}$ respectively. 
    Note that we can rewrite the duality gap as following
    \begin{align}
        \mathrm{Dual}(\widehat{r}, \widehat{\pi}) = \left(\max_{\pi\in\Pi}\phi(\pi, \widehat{r}) - \phi(\widehat{\pi},\widehat{r}) \right)+ \left(\phi(\widehat{\pi},\widehat{r})
 - \min_{r\in\cR}\phi(\widehat{\pi},r)\right).\label{eq: proof nash 2}
    \end{align}
    Combining \eqref{eq: proof nash 1} and \eqref{eq: proof nash 2}, we can conclude that 
    \begin{align}
        \max_{\pi\in\Pi}\phi(\pi, \widehat{r}) = \phi(\widehat{\pi},\widehat{r})\quad \Rightarrow\quad \widehat{\pi} \in \argmax_{\pi\in\Pi}\phi(\widehat{r},\pi).\label{eq: proof nash 3}
    \end{align}
    Now comparing what $\pi_{\widehat{r}}$ and $\widehat{\pi}$ satisfy in \eqref{eq: equivalent formal 1} and \eqref{eq: proof nash 3} respectively, invoking Lemma~\ref{lem: unique pi} that the maximizer of $\phi(\cdot,r)$ given any $r\in\cR$ is unique on the support of $d_0$, we can conclude that 
    \begin{align}
        \pi_{\widehat{r}}(\cdot|x) = \widehat{\pi}(\cdot|x),\quad \forall x\in\mathrm{Supp}(d_0).\label{eq: proof nash 4}
    \end{align}
    Therefore, by \eqref{eq: proof nash 0+} and \eqref{eq: proof nash 4}, and the fact that $\phi(\pi,r)$ depends on $\pi$ only through its value on the support of $d_0$, we can conclude that 
    \begin{align}
        \pi_{\widehat{r}} \in \argmax_{\pi\in\Pi}\,\min_{r\in\cR} \phi(\pi,r). 
    \end{align}
    This finishes the proof of Theorem~\ref{thm: equivalence formal}.
\end{proof}

\subsection{Auxiliary Lemmas}

\begin{lemma}[Equivalence of maximin and minimax objectives]\label{prop: equivalence}
    For the policy class $\Pi$ defined in \eqref{eq: policy class} and the reward model class $\cR$ satisfying Assumption~\ref{ass: regularity}, it holds that the maximin objective is equivalent to the minimax objective, i.e.,
    \begin{align}
        \max_{\pi\in\Pi}\,\min_{r\in \mathcal{R}}\phi(\pi,r) = \min_{r\in \mathcal{R}}\,\max_{\pi\in\Pi}\phi(\pi,r).
    \end{align}
\end{lemma}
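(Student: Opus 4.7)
The plan is to apply Ky Fan's 1953 minimax theorem, whose convex-like/concave-like hypotheses match the language adopted in Assumption~\ref{ass: regularity}. The weak-duality inequality $\max_{\pi\in\Pi}\min_{r\in\cR}\phi(\pi,r)\leq \min_{r\in\cR}\max_{\pi\in\Pi}\phi(\pi,r)$ is immediate from the definitions, so only the reverse direction requires work. Taking $\cR$ as the compact set in the statement of Ky Fan's theorem and $\Pi$ as the other set, the four hypotheses I need to verify are: (a) $\cR$ is compact; (b) for each $\pi\in\Pi$, the map $r\mapsto \phi(\pi,r)$ is lower semicontinuous on $\cR$; (c) $\phi(\pi,\cdot)$ is convex-like on $\cR$; and (d) $\phi(\cdot,r)$ is concave-like on $\Pi$.

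Items (a) and (c) are supplied directly by Assumption~\ref{ass: regularity}(i) and (ii). For (d), I will observe that $\phi(\cdot,r)$ is in fact honestly concave on the convex set $\Pi$: inspecting \eqref{eq: phi theory}, the only $\pi$-dependent terms are the linear reward expectation $\eta\mathbb{E}_{x\sim d_0, a^1\sim\pi(\cdot|x)}[r(x,a^1)]$ and the penalty $-\eta\beta\mathbb{E}_{x\sim d_0}[\mathrm{KL}(\pi(\cdot|x)\|\pi^{\mathrm{ref}}(\cdot|x))]$. The former is linear in $\pi$, and the latter is concave in $\pi$ by convexity of the KL divergence in its first argument. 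Consequently $\phi(\cdot,r)$ is concave on $\Pi$, and the concave-like property follows trivially by taking $\pi_3=\alpha\pi_1+(1-\alpha)\pi_2$.

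The remaining ingredient is (b), which requires only mild regularity. The reward-difference term in $\phi$ is linear in $r$, and the cross-entropy loss $\cL_\cD(r)$ in \eqref{eq: L D} is a finite average of $-y\log\sigma(r(x,a^1)-r(x,a^0)) - (1-y)\log\sigma(r(x,a^0)-r(x,a^1))$, which is continuous and bounded in $r$. Hence $\phi(\pi,\cdot)$ is continuous on $\cR$ under any topology making the pointwise evaluations $r\mapsto r(x,a)$ continuous (for instance the sup-norm topology), and in particular lower semicontinuous. With (a)--(d) in hand, Ky Fan's theorem delivers the claimed equality.

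The main obstacle, such as it is, is a minor topological one: Assumption~\ref{ass: regularity}(i) leaves the topology on $\cR$ unspecified, so to invoke Ky Fan's theorem cleanly one must either enrich the assumption with an explicit lower semicontinuity requirement on $\phi(\pi,\cdot)$, or fix a canonical topology on $\cR$ (such as uniform convergence on $\cX\times\cA$) under which both the pieces of $\phi$ are continuous and the stated compactness is meaningful. Once this is pinned down, no additional work is needed, and the equality is exactly the conclusion of Ky Fan's theorem applied to $f(r,\pi)=\phi(\pi,r)$.
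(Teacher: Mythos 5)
Your proposal is correct and follows essentially the same route as the paper's own proof: both invoke Ky Fan's minimax theorem, verify compactness and convex-likeness of $\phi(\pi,\cdot)$ on $\cR$ from Assumption~\ref{ass: regularity}, establish concavity of $\phi(\cdot,r)$ on the convex set $\Pi$ via the linear-plus-negative-KL structure, and check continuity of $\phi(\pi,\cdot)$ on $\cR$. Your remark that the topology on $\cR$ should be pinned down (e.g., the sup-norm topology) to make the lower-semicontinuity hypothesis meaningful is a fair observation, but it does not change the argument.
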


\begin{proof}[Proof of Lemma~\ref{prop: equivalence}]
    The foundation of this result is a minimax theorem given by \cite{fan1953minimax} (Lemma~\ref{lem: minimax}).
    In our setting, the policy class $\Pi$ is a nonempty set, and the reward model class $\mathcal{R}$ is a nonempty compact Hausdorff space.
    Furthermore, by our choice of the policy class $\Pi$ in \eqref{eq: policy class}, $\Pi$ is a convex set. 
    Meanwhile, the function $\phi$ is a concave function of $\pi\in\Pi$ since the dependence on $\pi$ is linear terms plus a negative KL term (concave).
    Finally, by our assumption, the function $\phi$ is convex-like on the reward model class $\cR$ and is also continuous on $\mathcal{R}$.
    As a result, all the conditions of Lemma~\ref{lem: minimax} are satisfied and the minimax theorem holds in our problem setup, finishing the proof of Lemma~\ref{prop: equivalence}.  
\end{proof}

\begin{lemma}[Minimax theorem \citep{fan1953minimax}]\label{lem: minimax} 
    Let $\cX$ be a nonempty set (not necessarily topologized) and $\cY$ be a nonempty compact topological space. 
    Let $f:\cX\times\cY\mapsto\mathbb{R}$ be lower semicontinuous on $\cY$.
    Suppose that $f$ is concave-like on $\cX$ and convex-like on $\cY$, i.e., for any $x_1,x_2\in\cX$, $\alpha\in[0,1]$, there exists $x_3\in\cX$ such that 
    \begin{align}
        f(x_3,\cdot) \geq \alpha \cdot f(x_1,\cdot) + (1-\alpha)\cdot f(x_2,\cdot)\,\,\text{on $\cY$,}
    \end{align}
    and for any $y_1,y_2\in\cY$, $\beta\in[0,1]$, there exists $y_3\in\cY$ such that 
    \begin{align}
        f(\cdot, y_3) \leq \beta \cdot f(\cdot, y_1) + (1-\beta)\cdot f(\cdot, y_2)\,\,\text{on $\cY$.}
    \end{align}
    Then the following equation holds,
    \begin{align}
         \max_{x\in\cX}\,\min_{y\in\cY} f(x,y)=\min_{y\in\cY}\,\max_{x\in\cX} f(x,y).
    \end{align}
\end{lemma}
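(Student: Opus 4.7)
The plan is to identify the statement as a direct application of the Fan minimax theorem recalled in Lemma~\ref{lem: minimax}, with $\cX = \Pi$, $\cY = \cR$, and $f = \phi$ from \eqref{eq: phi theory}. The work reduces to verifying four hypotheses: $\cR$ is a nonempty compact topological space, $\phi(\pi, \cdot)$ is lower semicontinuous on $\cR$ for every fixed $\pi$, $\phi(\cdot, r)$ is concave-like on $\Pi$ for every fixed $r$, and $\phi(\pi, \cdot)$ is convex-like on $\cR$ for every fixed $\pi$. Two of these are handed to us directly by Assumption~\ref{ass: regularity}: compactness of $\cR$ is item (i), and the convex-like condition on $\cR$ is item (ii), which is literally \eqref{eq: convex-like phi}. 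So the remaining work is the two conditions on the policy side.

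For the concave-like condition on $\Pi$, I would argue the stronger property of genuine concavity in $\pi$. Decomposing $\phi(\pi, r)$ into three pieces, the expected reward difference term $\eta \cdot \mathbb{E}_{x\sim d_0, a^1\sim\pi(\cdot|x), a^0\sim \pi^{\mathrm{base}}(\cdot|x)}[r(x,a^1) - r(x,a^0)]$ is linear in $\pi$, the regularizer $-\eta\beta \cdot \mathbb{E}_{x\sim d_0}[\mathrm{KL}(\pi(\cdot|x)\|\pi^{\mathrm{ref}}(\cdot|x))]$ is concave in $\pi$ by the convexity of KL divergence in its first argument, and $\cL_\cD(r)$ is constant in $\pi$. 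The policy class $\Pi$ in \eqref{eq: policy class theory} is convex, since the support-compatibility constraint is preserved under pointwise mixtures. Concavity of $\phi(\cdot, r)$ on a convex domain immediately yields the concave-like hypothesis required by Lemma~\ref{lem: minimax}: given $\pi_1, \pi_2 \in \Pi$ and $\alpha \in [0,1]$, the mixture $\pi_3 = \alpha \pi_1 + (1-\alpha) \pi_2$ lies in $\Pi$ and satisfies $\phi(\pi_3, r) \geq \alpha \phi(\pi_1, r) + (1-\alpha) \phi(\pi_2, r)$ pointwise in $r$.

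For lower semicontinuity of $\phi(\pi, \cdot)$ on $\cR$, I would equip $\cR$ with the topology of uniform convergence (the $\|\cdot\|_\infty$-topology already used in Section~\ref{subsec: sample complexity}), which is compatible with the compactness assumption and with the covering-number argument of Theorem~\ref{thm:main}. Under this topology, the reward-difference expectations in $\phi$ depend continuously on $r$ since rewards are uniformly bounded by Assumption~\ref{as: regularity} and dominated convergence applies. The cross-entropy loss $\cL_\cD(r)$ is continuous in $r$ as well, because the BT probabilities in \eqref{eq: bt model} are continuous in $r$ (via the Lipschitz property of the sigmoid, cf.\ Lemma~\ref{lem: sigmoid}) and bounded away from $0$ and $1$ on the compact range $[0,R]$, so the logarithms in \eqref{eq: L D} are continuous. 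Continuity on a compact space implies lower semicontinuity, finishing condition (ii).

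The only point requiring real care, rather than routine bookkeeping, is the topological setup on $\cR$: Assumption~\ref{ass: regularity}(i) asserts compactness without naming a topology, so I would explicitly fix the $\|\cdot\|_\infty$-topology and note that under this choice every preceding sample-complexity and continuity statement in the paper is consistent. With all four hypotheses of Lemma~\ref{lem: minimax} verified, Fan's theorem delivers the desired saddle-value equality, which completes the proof of Lemma~\ref{prop: equivalence}. I do not anticipate any genuine obstacle; in particular, convexity of $\cR$ is not needed because convex-like on $\cR$ has been imposed as an explicit assumption precisely to bypass that requirement.
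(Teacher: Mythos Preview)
You have proved the wrong statement. The lemma in question is Fan's minimax theorem itself (Lemma~\ref{lem: minimax}), a classical result which the paper cites from \cite{fan1953minimax} without proof. Your proposal does not prove Fan's theorem; it \emph{invokes} Fan's theorem as a black box and verifies that the hypotheses are met for the specific choices $\cX=\Pi$, $\cY=\cR$, $f=\phi$. That is circular as a proof of Lemma~\ref{lem: minimax}.

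What you have actually written is a proof of Lemma~\ref{prop: equivalence}, and for that lemma your argument is correct and follows the same route as the paper: the paper's proof of Lemma~\ref{prop: equivalence} is precisely to check nonemptiness and compactness of $\cR$, convexity of $\Pi$, concavity of $\phi$ in $\pi$ (linear reward terms plus a concave negative-KL term), convex-likeness in $r$ from Assumption~\ref{ass: regularity}(ii), and continuity of $\phi$ in $r$, then apply Lemma~\ref{lem: minimax}. Your version is more explicit about the topology on $\cR$ and the continuity verification, but the approach is identical. If the intended target was Lemma~\ref{prop: equivalence}, you are fine; if it was genuinely Lemma~\ref{lem: minimax}, then a proof would require reproducing Fan's 1953 argument (or Sion's generalization), which your proposal does not attempt.
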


\begin{lemma}[Unique maximizer of $\phi$]\label{lem: unique pi}
    Consider the function $\phi$ defined as 
    \begin{align}
        \phi(\pi, r)&:= \eta\cdot\mathbb{E}_{x\sim d_0, a^1\sim\pi(\cdot|x), a^0\sim \pi^{\mathrm{base}}(\cdot|x)}\Big[r(x,a^1) - r(x,a^0) -\beta\cdot D_{\mathrm{KL}}\big(\pi(\cdot|x)\|\pi^{\mathrm{ref}}(\cdot|x)\big)\Big] \\
        &\qquad +  {\cL}_{\cD}(r). 
    \end{align}
    Then given any $r\in\cR$, the maximimzer of $\phi(\cdot,r)$ is unique on the support of $d_0$. 
    \begin{proof}[Proof of Lemma~\ref{lem: unique pi}]
        Given any $r\in\cR$, consider that 
        \begin{align}
            &\max_{\pi\in\Pi} \phi(\pi, r)\\
            &\qquad= \eta\cdot \max_{\pi\in\Pi}\left\{\mathbb{E}_{x\sim d_0, a^1\sim\pi(\cdot|x)}\Big[r(x,a^1)  -\beta\cdot D_{\mathrm{KL}}\big(\pi(\cdot|x)\|\pi^{\mathrm{ref}}(\cdot|x)\big)\Big]\right\} \\
            &\qquad = \eta\cdot \max_{\pi\in\Pi}\left\{C_r - \beta\cdot \mathbb{E}_{x\sim d_0}\left[D_{\mathrm{KL}}\left(\pi(\cdot|x)\middle\| \frac{\pi^{\mathrm{ref}}(\cdot|x)\cdot \exp(\beta^{-1}\cdot r(x,\cdot))}{\int_{a'\in\cA}\mathrm{d}\pi^{\mathrm{ref}}(a'|x)\cdot \exp(\beta^{-1}\cdot r(x,a'))}\right)\right]\right\},\label{eq: proof unique 1}
        \end{align}
        where 
        $$C_r=\mathbb{E}_{x\sim d_0}\left[\beta\cdot\log\left(\int_{a\in\cA}\mathrm{d}\pi^{\mathrm{ref}}(a|x)\cdot \exp\left(\beta^{-1}\cdot r(x,a)\right)\right)\right]$$ 
        is a constant independent of $\pi$. 
        Therefore, the maximizer of $\phi(\cdot,r)$ on the support of $d_0$ must equal to 
        \begin{align}
            \pi_r(\cdot|x) = \frac{\pi^{\mathrm{ref}}(\cdot|x)\cdot \exp(\beta^{-1}\cdot r(x,\cdot))}{\int_{a'\in\cA}\mathrm{d}\pi^{\mathrm{ref}}(a'|x)\cdot \exp(\beta^{-1}\cdot r(x,a'))},
        \end{align}
        which completes the proof of Lemma~\ref{lem: unique pi}.
    \end{proof}
\end{lemma}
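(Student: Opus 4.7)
The plan is to strip away all terms in $\phi(\pi,r)$ that do not depend on $\pi$ and then invoke Lemma~\ref{lem: optimal policy} pointwise in $x$. Since $\cL_{\cD}(r)$ depends only on $r$, and the baseline term $\eta\cdot\mathbb{E}_{x\sim d_0,\,a^0\sim \pi^{\mathrm{base}}(\cdot|x)}[r(x,a^0)]$ is independent of $\pi$, maximizing $\phi(\cdot,r)$ over $\Pi$ is equivalent to maximizing
\begin{align*}
    F(\pi) \,:=\, \mathbb{E}_{x\sim d_0}\Big[\mathbb{E}_{a\sim\pi(\cdot|x)}[r(x,a)] - \beta\cdot \mathrm{KL}\big(\pi(\cdot|x)\,\|\,\pi^{\mathrm{ref}}(\cdot|x)\big)\Big].
\end{align*}

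Next, I would observe that $F(\pi)$ is an expectation over $x\sim d_0$ of a functional that depends only on the conditional $\pi(\cdot|x)$, so the maximization decouples across $x \in \mathrm{Supp}(d_0)$. For each such $x$, Lemma~\ref{lem: optimal policy} identifies an optimizer
\begin{align*}
    \pi_r(\cdot|x) \,=\, \pi^{\mathrm{ref}}(\cdot|x)\cdot \exp\big(\beta^{-1} r(x,\cdot)\big)\big/Z_r(x),
\end{align*}
which lies in $\Pi$ since it inherits the support of $\pi^{\mathrm{ref}}(\cdot|x)$.

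To nail down \emph{uniqueness}, I would use a ``complete the square'' identity: for any feasible $\pi(\cdot|x)$ whose support lies inside $\mathrm{Supp}(\pi^{\mathrm{ref}}(\cdot|x))$,
\begin{align*}
    \mathbb{E}_{a\sim\pi(\cdot|x)}[r(x,a)] - \beta\cdot \mathrm{KL}\big(\pi(\cdot|x)\,\|\,\pi^{\mathrm{ref}}(\cdot|x)\big) \,=\, \beta\log Z_r(x) - \beta\cdot\mathrm{KL}\big(\pi(\cdot|x)\,\|\,\pi_r(\cdot|x)\big).
\end{align*}
Since KL divergence is strictly convex in its first argument and vanishes iff the two distributions coincide, $\pi_r(\cdot|x)$ is the \emph{unique} maximizer of the per-$x$ objective for every $x\in\mathrm{Supp}(d_0)$. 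Outside the support of $d_0$, $\pi(\cdot|x)$ contributes nothing to $\phi$, which is precisely why the statement only asserts uniqueness on $\mathrm{Supp}(d_0)$.

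The main obstacle is just bookkeeping around the support constraint encoded in $\Pi$: one must verify that the candidate $\pi_r$ is feasible (it is, since it inherits $\pi^{\mathrm{ref}}$'s support), and that the KL divergence $\mathrm{KL}(\pi(\cdot|x)\,\|\,\pi_r(\cdot|x))$ is well defined for all feasible competitors (again guaranteed by $\pi_r$ and $\pi$ sharing a support contained in $\mathrm{Supp}(\pi^{\mathrm{ref}}(\cdot|x))$). Once these are cleared, the result is a direct pointwise application of the standard KL-regularized linear reward calculus.
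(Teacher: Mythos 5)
Your proposal is correct and follows essentially the same route as the paper: drop the $\pi$-independent terms ($\cL_{\cD}(r)$ and the baseline expectation), rewrite the remaining KL-regularized reward as a constant $C_r$ minus $\beta\cdot\mathbb{E}_{x\sim d_0}[\mathrm{KL}(\pi(\cdot|x)\|\pi_r(\cdot|x))]$, and conclude from the vanishing of KL at equality that $\pi_r$ is the unique maximizer on $\mathrm{Supp}(d_0)$. Your added remarks on feasibility of $\pi_r$ within $\Pi$ and on strictness of the KL argument make the uniqueness claim slightly more explicit than the paper's write-up, but the underlying argument is the same.
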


\section{Proofs for Generalization to New Prompt Distributions}\label{sec: proof generalization}

\begin{proof}[Proof of Corollary~\ref{cor: context generalization}]
    Consider by Holder's inequality, we can bound the new target as 
    \begin{align}
        &\mathbb{E}_{x\sim d_1, a\sim \pi^{\star}(\cdot|x)}[r^{\star}(x,a)] -  \mathbb{E}_{x\sim d_1, a\sim \pi_{\widehat{r}}(\cdot|x)}[r^{\star}(x,a)] \\
        &\qquad= \mathbb{E}_{x\sim d_1}\left[\mathbb{E}_{a\sim \pi^{\star}(\cdot|x)}[r^{\star}(x,a)] -  \mathbb{E}_{a\sim \pi_{\widehat{r}}(\cdot|x)}[r^{\star}(x,a)]\right] \\
        &\qquad = \mathbb{E}_{x\sim d_0}\left[\frac{d_1(x)}{d_0(x)}\cdot \left(\mathbb{E}_{a\sim \pi^{\star}(\cdot|x)}[r^{\star}(x,a)] -  \mathbb{E}_{a\sim \pi_{\widehat{r}}(\cdot|x)}[r^{\star}(x,a)]\right)\right] \\ 
        &\qquad \leq C_{\infty}(d_0, d_1)\cdot \mathbb{E}_{x\sim d_0}\left[\left|\mathbb{E}_{a\sim \pi^{\star}(\cdot|x)}[r^{\star}(x,a)] -  \mathbb{E}_{a\sim \pi_{\widehat{r}}(\cdot|x)}[r^{\star}(x,a)]\right|\right],\label{eq: proof generalization 1}
    \end{align}
    where the last inequality follows from Holder's inequality and the definition of the density ratio $C_{\infty}(d_0, d_1)$ in Corollary~\ref{cor: context generalization}. 
    Now since $\pi^{\star}$ is the optimal policy in $\Pi$ in the sense that $\pi^{\star} = \argmax_{\pi\in\Pi}\mathbb{E}_{a\sim \pi(\cdot|x)}[r^{\star}(x,a)]$ for any $x\in\cX$, we have that 
    \begin{align}
        &\mathbb{E}_{x\sim d_0}\left[\left|\mathbb{E}_{a\sim \pi^{\star}(\cdot|x)}[r^{\star}(x,a)] -  \mathbb{E}_{a\sim \pi_{\widehat{r}}(\cdot|x)}[r^{\star}(x,a)]\right|\right] \\
        &\qquad =\mathbb{E}_{x\sim d_0, a\sim \pi^{\star}(\cdot|x)}[r^{\star}(x,a)] -  \mathbb{E}_{x\sim d_0, a\sim \pi_{\widehat{r}}(\cdot|x)}[r^{\star}(x,a)] \\
        &\qquad = \mathrm{Gap}^{\pi^{\star}}(\pi_{\widehat{r}}),\label{eq: proof generalization 2}
    \end{align}
    Therefore, by combining \eqref{eq: proof generalization 1} and \eqref{eq: proof generalization 2}, invoking Corollary~\ref{cor: subopt rdpo}, we have that
    \begin{align}
        \mathbb{E}_{x\sim d_1, a\sim \pi^{\star}(\cdot|x)}[r^{\star}(x,a)] -  \mathbb{E}_{x\sim d_1, a\sim \pi_{\widehat{r}}(\cdot|x)}[r^{\star}(x,a)] \leq\widetilde{\cO}\left(\frac{C_{\infty}(d_0, d_1)}{\sqrt{N}}\right),
    \end{align}
    with probability at least $1-\delta$. This proves Corollary~\ref{cor: context generalization}.
\end{proof}

\section{Additional Details on Experiments}
\subsection{Training Details}
\label{app:training_details}
We train the gemma series models with 8 NVIDIA A6000 GPUs and the beta series models with 8 NVIDIA A100 GPUs, where they are all GPT-like models with around 7 billion parameters.  It takes around three hours to train a beta series model and five hours to train a gemma one. 
Our codebase is adapted from the Alignment Handbook \citep{alignment_handbook2023}. By comparing the validation loss on the test split (not used for later evaluation), we select the hyperparameter $\eta$ of both RPO (beta) and RPO (gemma) to be $0.005$. We list the remaining training configurations in Table \ref{tab:hype}, which are recommended by the Alignment Handbook. 
\begin{table}[H]
    \centering
    \begin{tabular}{c |cc}
    \toprule 
        Configuration & Beta Series & Gemma Series\\
        \midrule
        learning rate & 5.0e-7 & 5.0e-7\\
        learning scheduler type& cosine &cosine \\
        warmup ratio & 1.0 &1.0\\
        batch size &128 &128\\
        gradient accumulation & 2 & 16\\
        batch size per device & 8 & 1 \\
        training epoch & 1 &2
        \\
        $\beta$ & 0.01 &0.05\\
        optimizer & adamw torch & adamw torch \\
        seed& 42 &42 \\
        precision & bfloat16 & bfloat16
        \\
         \bottomrule
    \end{tabular}
    \vspace{1mm}
    \caption{Training configurations for beta series and gemma series models in this paper.} 
    \label{tab:hype}
\end{table}

\subsection{Evaluation Details}
\label{app:eval}

\paragraph{GPT-4 evaluation on the test split.} We use the following prompts to guide GPT-4 (0613) to annotate the preferences among win, lose, and tie (we denote them by A, B, and C, respectively). 
\begin{tcolorbox}[ colback=pink!10, boxrule=0.15mm]
    \textbf{Prompts:} Please act as an impartial judge and evaluate the quality of the responses provided by two AI assistants to the user question displayed below. You should choose the assistant that follows the user's instructions and answers the user's question better. Your evaluation should consider factors such as the helpfulness, relevance, accuracy, depth, creativity, and level of detail of their responses. Begin your evaluation by comparing the two responses and provide a short explanation. Avoid any position biases and ensure that the order in which the responses were presented does not influence your decision. Do not allow the length of the responses to influence your evaluation. Do not favor certain names of the assistants. Be as objective as possible. After providing your explanation, output your final verdict by strictly following this format: [[A]] if assistant A is better, [[B]] if assistant B is better, and [[C]] for a tie.
[Instruction]
{instruction}
[The Start of Assistant A's Answer]
\{\textit{answer A}\}
[The End of Assistant A's Answer]
[The Start of Assistant B's Answer]
\{\textit{answer B}\}
[The End of Assistant B's Answer]
\end{tcolorbox}
Here, we replace \{\textit{answer A}\} and \{\textit{answer B}\} with the answers of two models. Since GPT annotation has shown to prefer the answer in the first position \citep{wang2023large}, we randomly exchange the positions between two answers during the evaluation to ensure a fair comparison. 

\paragraph{Benchmark evaluation.} We use the default configuration for the evaluations on MT-Bench\footnote{\url{https://github.com/lm-sys/FastChat/tree/main/fastchat/llm_judge}}  and AlpacaEval 2.0\footnote{\url{https://github.com/tatsu-lab/alpaca_eval/tree/main}}. By default, the annotator of MT-Bench is the \textit{latest version} of GPT-4. The default annotator and the competitor model are both GPT-4 (Preview 11/06).   
We only need to manually import the proper chat template that formats the training dataset, which are shown as follows. 
\begin{tcolorbox}[colback=pink!10, boxrule=0.15mm]\textbf{Chat Template for Beta Series:}
\textless\textbar system\textbar\textgreater\textless/s\textgreater\textless\textbar user\textbar\textgreater\\
\{\textit{instruction}\}\textless/s\textgreater\\
\textless\textbar assistant\textbar\textgreater\\
\end{tcolorbox}
\begin{tcolorbox}[colback=pink!10, boxrule=0.15mm]\textbf{Chat Template for Gemma Series:}
\textless bos\textgreater \textless\textbar im\_start\textbar\textgreater user\\
\{\textit{instruction}\}\textless\textbar im\_end\textbar\textgreater\\
\textless\textbar im\_start\textbar\textgreater assistant\\
\end{tcolorbox}

\subsection{Additional Results on Experiments}\label{app:more_results}
In this section, we provide the additional results to show the performance gain for RPO (beta) in MT-Bench and RPO (gemma) in AlpacaEval 2.0.  We report the pairwise win rates in Tables \ref{tab:mt_pair_beta}, \ref{tab:alpaca_pair_gemma}, and \ref{tab:alpaca_pair_gemma2} to analyze their performance gaps, where all the annotation configurations are the same in Table \ref{tab:benchmark}. Results show that RPO still exceeds DPO in the metric of the pairwise win rates on the benchmarks for both beta series and gemma series. 
\label{app:exp}

\begin{table}[H]
    \centering
    {%
    \begin{tabular}{c | c | c |c  }
    \toprule
       win rate (\%) & RPO (beta) & Ref.  (beta) & DPO (beta)  \\
        \midrule
         RPO (beta) & 50.00 & \textcolor{blue}{\textbf{83.75}}  &  \textcolor{blue}{\textbf{57.81}} \\
        Ref. (beta)& 16.25 & 50.00 & 21.25 \\
        DPO (beta)& 78.75 & 42.19 & 50.00 \\
    \bottomrule
    \end{tabular}%
    }
    \vspace{1mm}
\caption{ Pairwise win rates (left vs. right) for beta series models on MT-Benchmark.}
   \label{tab:mt_pair_beta}
\end{table}
\begin{table}[H]
    \centering
    {%
    \begin{tabular}{c | c | c |c  }
    \toprule
        win rate (\%) & RPO (beta) & Ref. (beta) & DPO (beta)  \\
        \midrule
         RPO(beta) & 50.00 & \textcolor{blue}{\textbf{80.13}}  & \textcolor{blue}{\textbf{52.02}} \\
         Ref.(beta)& 19.87 & 50.00 & 20.61 \\
        DPO (beta)& 47.98 & 79.39 & 50.00\\
    \bottomrule
    \end{tabular}%
    }
    \vspace{1mm}
\caption{ Pairwise win rates (left vs. right)  for gemma series models on AlpacaEval 2.0.}
   \label{tab:alpaca_pair_gemma}
\end{table}
\begin{table}[H]
    \centering
    {%
    \begin{tabular}{c | c | c |c  }
    \toprule
        win rate (\%) & RPO (beta) & Ref. (beta) & DPO (beta)  \\
        \midrule
         RPO (beta) & 50.00 & \textcolor{blue}{\textbf{64.93}}  &  \textcolor{blue}{\textbf{51.33}} \\
        Ref. (beta)& 35.07 & 50.00 & 36.44 \\
        DPO (beta)& 48.67 & 64.56 & 50.00 \\
    \bottomrule
    \end{tabular}%
    }
    \vspace{1mm}
\caption{ Pairwise Length-Control (LC) win rates (left vs. right) for gemma series models on AlpacaEval 2.0.}
   \label{tab:alpaca_pair_gemma2}
\end{table}

\section{Experiments on  Math, Reasoning, and Coding Tasks}\label{subsec: further experiments}

\subsection{Experimental Details} To provide a more comprehensive analysis of the trained LLM, we introduce more benchmarks on the math, reasoning, and coding tasks for evaluations. 
Specifically, we choose the Grade School Math 8K (GSM8K), AI2 Reasoning Challenge (ARC), and Mostly Basic Python Programming (MBPP) to measure math, reasoning, and coding abilities, respectively. 
In this section, we focus on the gemma series for the experiments.
We do not use chain-of-thought or few shots in all the benchmarks. 
We compare the greedy decoding result (pass $@1$) on the MBPP  benchmark.

Here we use the  OpenRLHF codebase \citep{hu2024openrlhf} to implement a new variant of RPO, where the SFT loss regularizer is averaged by the number of tokens of the chosen labels, that is, $(\log \pi_\theta(a_{\text{cho}}|x))/|a_{\text{cho}}|$. Such a variant balances the weight of the averaged SFT loss regularizer between the shorter chosen response and the longer one. 
We set the coefficient for the SFT loss regularizer as $0.2$. 
We use 8 NVIDIA A100 GPUs for the training and evaluation. 
The remaining hyperparameters are in Table~\ref{tab:hype2}.
\begin{table}[H]
    \centering
    \begin{tabular}{c |c}
    \toprule 
        Configuration & Gemma Series\\
        \midrule
        learning rate & 5.0e-7 \\
        learning scheduler type& {cosine with a minimum learning rate} \\
        batch size &128\\
        gradient accumulation & 8\\
        batch size per device & 2 \\
        training epoch  &2
        \\
        $\beta$ & 0.5\\
        optimizer & adamw torch  \\
        seed& 42 \\
        precision & bfloat16 
        \\
         \bottomrule
    \end{tabular}
    \vspace{1mm}
    \caption{Training configurations for DPO and RPO for the experiments in Appendix \ref{subsec: further experiments}.} 
    \label{tab:hype2}
\end{table}

\subsection{Experimental Results}
Table \ref{tab:benchmark2} demonstrates that our proposed method still outperforms or performs equally to the vanilla DPO on these benchmarks of math, reasoning, and coding, which verifies the effectiveness of our proposed method.

\begin{table}[h]
    \centering

    {
    \begin{tabular}{    ccc ccc}
    \toprule
    \multirow{2}{*}{Model Name} &GSM8K  & \multicolumn{2}{c}{ARC}  & \multicolumn{2}{c}{MBPP (Pass $@1$)} \\
    \cline{3-4}\cline{5-6}
        & (\%) & Easy (\%) & Challenge (\%)&
         Normal (\%) & Plus (\%)\\
        \midrule
        RPO & \textcolor{blue}{\textbf{49.9}} & 
\textcolor{blue}{\textbf{79.1}} &
{49.8} &
54.2 &\textcolor{blue}{\textbf{46.3}}\\
        DPO & 45.3 &
 75.7 &
\textcolor{blue}{\textbf{50.0}} & 
54.2 & 43.9 \\
  Ref.   & 45.4 & 
75.0 & 45.8 
& 50.3 & 44.2  \\
         \texttt{zephyr-gemma-7b}& 47.3 
&77.6 &
48.6 
&
\textcolor{blue}{\textbf{54.5}} &44.7\\
    \bottomrule
    \end{tabular}%
    }
    \vspace{1mm}
    \caption{The results on GSM8K, ARC, and MBPP. Here, \texttt{zephyr-gemma-7b} is the officially released models trained by DPO and Ref. denotes the reference model \texttt{zephyr-7b-gemma-sft} used for our training. 
    RPO and DPO are trained with the  OpenRLHF codebase \citep{hu2024openrlhf} and we average the SFT loss regularizer in RPO by the number of tokens of the chosen response. We do not use chain-of-thought or few shots in all the benchmarks. 
    We compare the greedy decoding result (pass $@1$) for MBPP.}
    \label{tab:benchmark2}
\end{table}

\end{document}